\newcommand{\veryshortarrow}[1][3pt]{\mathrel{%
   \hbox{\rule[\dimexpr\fontdimen22\textfont2-.2pt\relax]{#1}{.4pt}}%
   \mkern-4mu\hbox{\usefont{U}{lasy}{m}{n}\symbol{41}}}} 
\newcommand{\cmark}{\ding{51}}
\newcommand{\xmark}{\ding{55}}
\newtheorem{theorem}{Theorem} 
\newtheorem{lemma}[theorem]{Lemma} 
\newtheorem{definition}{Definition}[section] 
\newtheorem{conjecture}{Conjecture} 
\definecolor{mygray}{gray}{0.9} 
\newcommand\Tstrut{\rule{0pt}{2.6ex}}         
\newcommand\Bstrut{\rule[-0.9ex]{0pt}{0pt}}   
\definecolor{coolblack}{rgb}{0.0, 0.23, 0.64} 
\titlespacing{\section}{0pt}{1ex}{0.5ex}
\definecolor{coolblack}{rgb}{0.0, 0.23, 0.64} 
\theoremstyle{plain}
\theoremstyle{definition}
\theoremstyle{remark}
\definecolor{mygray}{gray}{0.9} 
\icmltitlerunning{A Closer Look at Smoothness in Domain Adversarial Training}
\begin{document}
\definecolor{codegreen}{rgb}{0,0.6,0}
\definecolor{codegray}{rgb}{0.5,0.5,0.5}
\definecolor{codepurple}{rgb}{0, 0, 0}
\definecolor{backcolour}{rgb}{0.95,0.95,0.92}

\lstdefinestyle{mystyle}{
    backgroundcolor=\color{backcolour},   
    commentstyle=\color{magenta},
    keywordstyle=\color{magenta},
    numberstyle=\tiny\color{codegray},
    stringstyle=\color{codepurple},
    basicstyle=\ttfamily\footnotesize,
    breakatwhitespace=false,         
    breaklines=true,                 
    captionpos=b,                    
    keepspaces=true,                 
    numbers=left,                    
    numbersep=1pt,                  
    showspaces=false,                
    showstringspaces=false,
    showtabs=false,                  
    tabsize=2,
    language=Python,
}

\lstset{style=mystyle}
\twocolumn[
\icmltitle{ A Closer Look at Smoothness in Domain Adversarial Training}



\icmlsetsymbol{equal}{*}

\begin{icmlauthorlist}
\icmlauthor{Harsh Rangwani}{equal,yyy}
\icmlauthor{Sumukh K Aithal}{equal,yyy,comp}
\icmlauthor{Mayank Mishra}{yyy}
\icmlauthor{Arihant Jain}{yyy,sch}
\icmlauthor{R. Venkatesh Babu}{yyy}
\end{icmlauthorlist}

\icmlaffiliation{yyy}{Video Analytics Lab, Indian Institute of Science, Bengaluru, India}
\icmlaffiliation{comp}{PES University, Bengaluru}
\icmlaffiliation{sch}{Amazon, India (Work done at Indian Institute of Science, Bengaluru)}

\icmlcorrespondingauthor{Harsh Rangwani}{harshr@iisc.ac.in}

\icmlkeywords{Machine Learning, ICML, Domain Adaptation}
\vskip 0.3in
]
\printAffiliationsAndNotice{\icmlEqualContribution}




\begin{abstract}
Domain adversarial training has been ubiquitous for achieving invariant representations and is used widely for various domain adaptation tasks. In recent times, methods converging to smooth optima have shown improved generalization for supervised learning tasks like classification.  In this work, we analyze the effect of smoothness enhancing formulations on domain adversarial training, the objective of which is a combination of {task loss (eg.\ classification, regression etc.)} and adversarial terms. We find that converging to a smooth minima with respect to (w.r.t.) task loss stabilizes the adversarial training leading to better performance on target domain. In contrast to {task} loss, our analysis shows that {converging to smooth minima w.r.t. adversarial loss leads to sub-optimal generalization on the target domain}. Based on the analysis, we introduce the Smooth Domain Adversarial Training (SDAT) procedure, which effectively enhances the performance of existing domain adversarial methods for both classification and object detection tasks.  Our analysis also provides insight into the extensive usage of SGD over Adam in the community for domain adversarial training. 
\end{abstract}
\begin{figure*}[!t]
  \centering
  \includegraphics[width=0.95\textwidth]{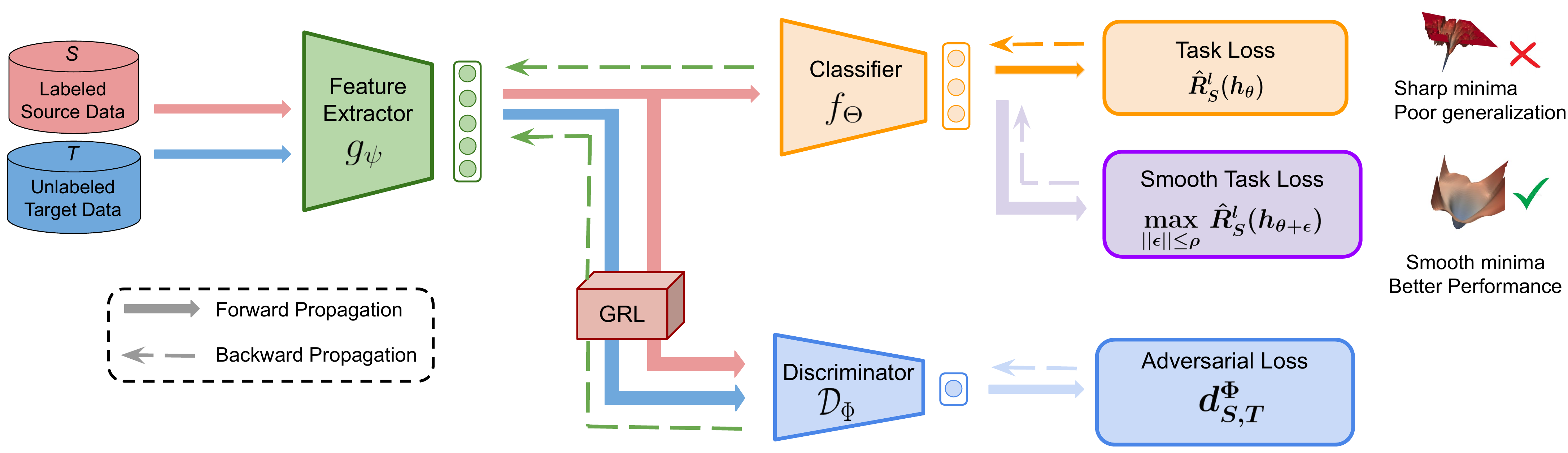}
  \caption{Overview of Smooth Domain Adversarial Training (SDAT). We demonstrate that converging to smooth minima w.r.t. adversarial loss leads to sub-optimal DAT. Due to this conventional approaches which smooth combination of task loss and adversarial loss lead to sub-optimal results. Hence, we propose SDAT which only focuses on smoothing {task} loss, leading to stable training which results in effective generalization on target domain.  \footnotemark}
  \label{fig:overview}
\end{figure*}
\section{Introduction}
    Domain Adversarial Training \citep{ganin2015unsupervised} (DAT) refers to adversarial learning of neural network based feature representations that are invariant to the domain. For example, the learned feature representations for car images from the Clipart domain should be similar to that from the Web domain. DAT has been widely useful in diverse areas (cited 4200 times) such as recognition \citep{long2018conditional,cui2020gvb,Rangwani_2021_ICCV}, fairness \citep{adel2019one}, object detection \citep{saito2019strong}, domain generalization \citep{li2018domain}, image-to-image translation \citep{liu2017unsupervised} etc. The prime driver of research on DAT is its application in unsupervised Domain Adaptation (DA), which aims to learn a classifier using labeled source data and unlabeled target data, such that it generalizes well on target data. Various enhancements like superior objectives \citep{acuna2021f, zhang2019bridging}, architectures \citep{long2018conditional}, etc. have been proposed to improve the effectiveness of DAT for unsupervised DA.

As DAT objective is a combination of Generative Adversarial Network (GAN) \citep{goodfellow2014generative} (adversarial loss)  and Empirical Risk Minimization (ERM) \citep{vapnik2013nature} (task loss) objectives, there has not been much focus on explicitly analyzing and improving the nature of optimization in DAT. In optimization literature, one direction that aims to improve the generalization focuses on developing algorithms that converge to a smooth (or a flat) minima \citep{foret2021sharpnessaware, keskar2017improving}. However, we find that these techniques, when directly applied for DAT, do not significantly improve the generalization on the target domain (Sec. \ref{sec:discussion}). \\ \\
In this work, we analyze the loss landscape near the optimal point obtained by DAT to gain insights into the nature of optimization. We first focus on the eigen-spectrum of Hessian (i.e.\ curvature) of the  {task loss (ERM term for classification)} where we find that using Stochastic Gradient Descent (SGD) as optimizer converges to a smoother minima in comparison to Adam \citep{kingma2014adam}. Further, we find that \textit{smoother minima w.r.t. {task} loss results in stable DAT leading to better generalization on the target domain}. Contrary to {task} loss, we find that smoothness enhancing formulation for adversarial components worsens the performance, rendering techniques \cite{cha2021swad} which enhance smoothness for all loss components ineffective. Hence we introduce Smooth Domain Adversarial Training (SDAT) (Fig. \ref{fig:overview}), which aims to reach a smooth minima only w.r.t. task loss and helps in generalizing better on the target domain. SDAT requires an additional gradient computation step and can be combined easily with existing methods. 
We show the soundness of the SDAT method theoretically through a generalization bound (Sec.\ \ref{smoothness}) on target error. We extensively verify the empirical efficacy of SDAT over DAT across various datasets for classification (i.e., DomainNet, VisDA-2017 and Office-Home) with ResNet and Vision Transformer \cite{dosovitskiy2020image} (ViT) backbones. We also show a prototypical application of SDAT in DA for object detection, demonstrating it's diverse applicability. In summary, we make the following contributions:
\begin{itemize}
    \item We demonstrate that converging to smooth minima w.r.t. task loss leads to stable and effective domain alignment through DAT, whereas smoothness enhancing formulation for adversarial loss leads to sub-optimal performance via DAT.
    \item For enhancing the smoothness w.r.t. task loss near optima in DAT, we propose a simple, novel, and theoretically motivated SDAT formulation that leads to stable DAT resulting in improved generalization on the target domain. 
    \item We find that SDAT, when combined with the existing state-of-the-art (SOTA) baseline for DAT, leads to significant gains in performance. Notably, with ViT backbone, SDAT leads to a significant effective average gain of \textbf{3.1\%} over baseline, producing SOTA DA performance without requiring any additional module (or pre-training data) using only a 12 GB GPU. The source code used for experiments is available at: \url{https://github.com/val-iisc/SDAT}.
\end{itemize}

\section{Related Work}
\footnotetext{Figures for the smooth minima and sharp minima are from \citep{foret2021sharpnessaware} and used for illustration purposes only.}
{ \textbf{Unsupervised Domain Adaptation}: It refers to a class of methods that enables the model to learn representations from the source domain's labeled data that generalizes well on the unseen data from the target domain \citep{long2018conditional, acuna2021f, zhang2019bridging, Kundu_2021_ICCV, Kundu_2020_CVPR}. One of the most prominent lines of work is based on DAT \citep{ganin2015unsupervised}. 
It involves using an additional discriminator to distinguish between source and target domain features. A Gradient Reversal layer (GRL) is introduced to achieve the goal of learning domain invariant features. The follow-up works have improved upon this basic idea by introducing a class information-based discriminator (CDAN \citep{long2018conditional}), introducing a transferable normalization function \citep{wang2019transferable}, using an improved Margin Disparate Discrepancy~\cite{zhang2019bridging} measure between source and target domain, etc. In this work, we focus on analyzing and improving such methods.}\\

{\textbf{Smoothness of Loss Landscape:} As neural networks operate in the regime of over parameterized models, low error on training data does not always lead to better generalization \citep{keskar2017large}. Often it has been stated \citep{hochreiter1997flat, hochreiter1994simplifying, he2019asymmetric, dziugaite2017computing}  that smoother minima does generalize better on unseen data. But until recently, this was practically expensive as smoothing required additional costly computations. Recently, a method called Sharpness Aware Minimization (SAM) \citep{foret2021sharpnessaware} for improved generalization has been proposed which finds a smoother minima with an additional gradient computation step. However, we find that just using SAM naively does not lead to improved generalization on target domain (empirical evidence in Tab. \ref{tab:diff_smooth},\ref{tab:officehome_erm} and \ref{table:visda_uda_vit_erm}). In this work, we aim to develop solutions which converge to a smooth minima but at same time lead to better generalization on target domain, which is not possible just by using SAM. }
\begin{figure*}[!t]
\centering
\begin{tabular}{c c c c}
\hspace{-0.4cm}\includegraphics[width=0.24\linewidth, height = 3.7cm]{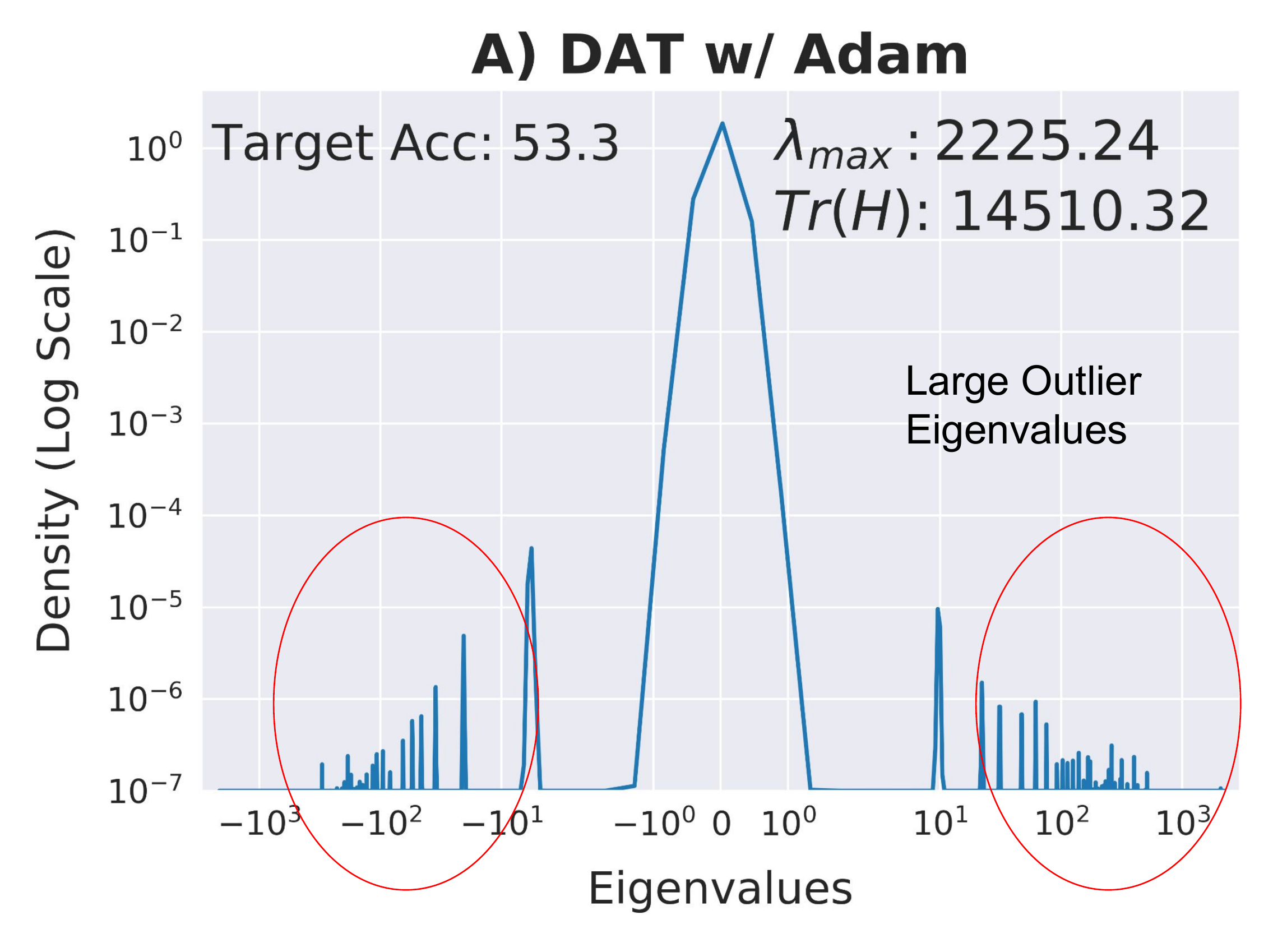} &   \hspace{-0.4cm}\includegraphics[width=0.24\linewidth, height = 3.7cm]{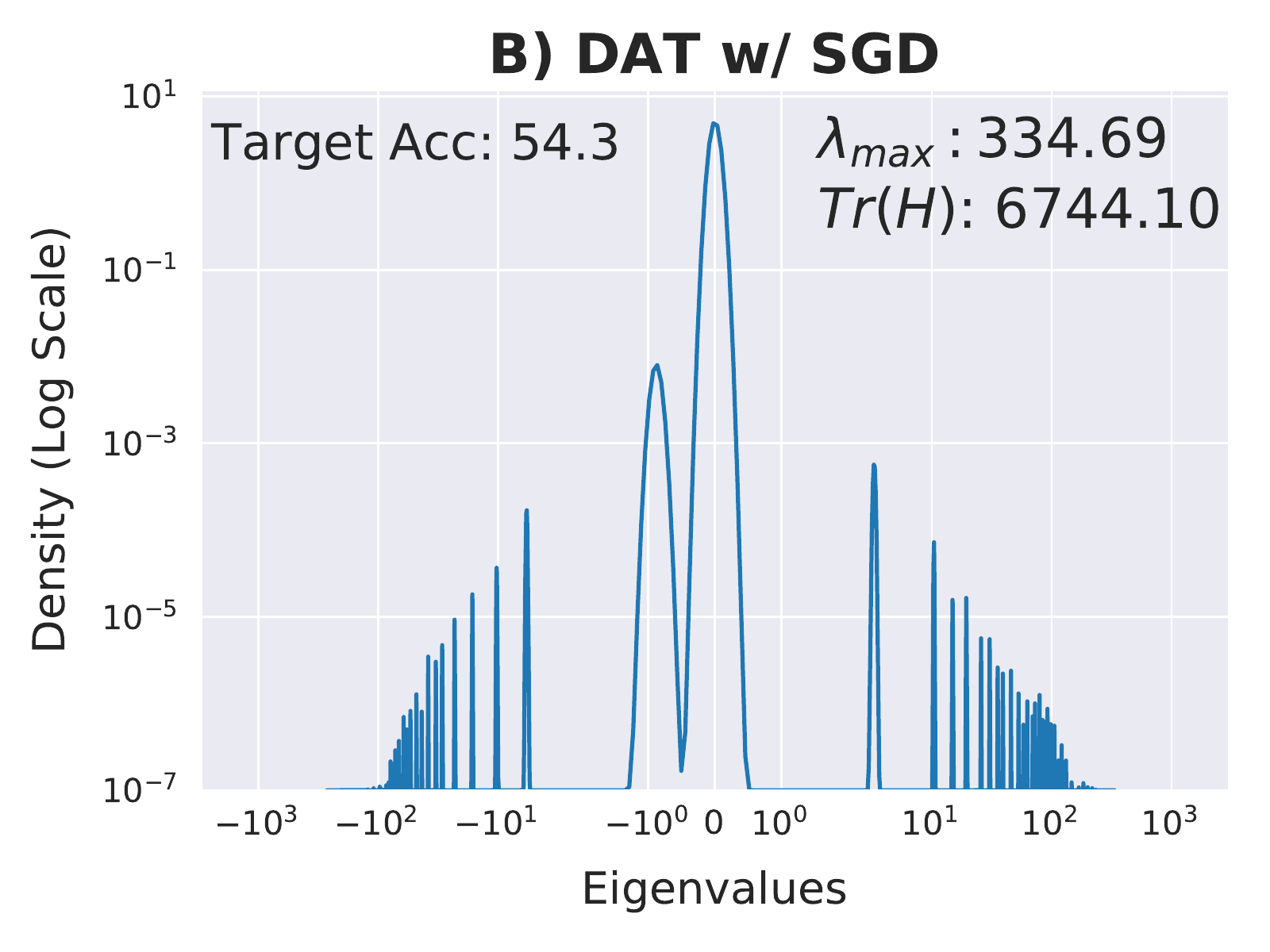} &
\hspace{-0.4cm}\includegraphics[width=0.24\linewidth, height = 3.7cm]{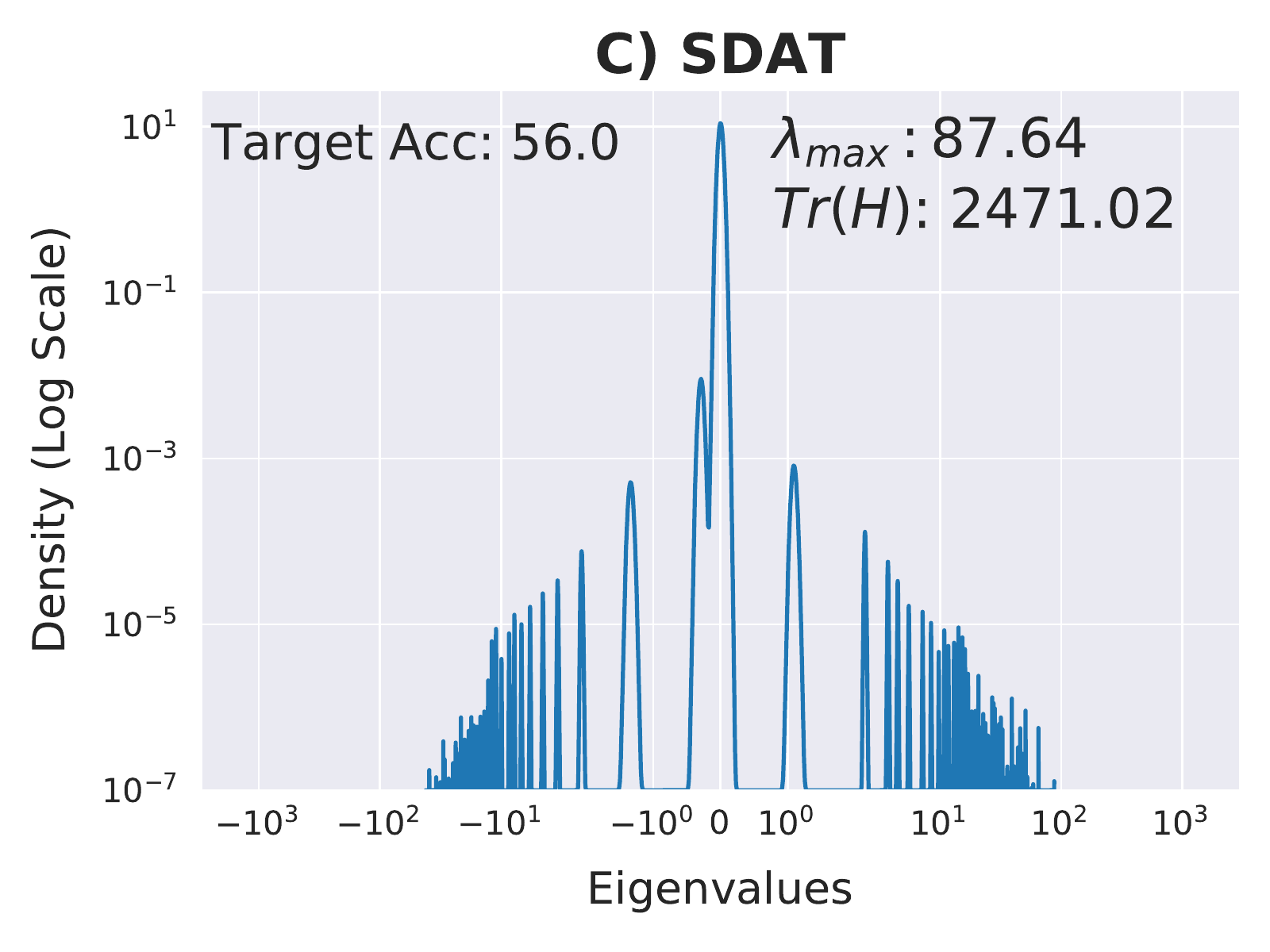} & \hspace{-0.4cm}\includegraphics[width=0.28\linewidth, height = 3.7cm]{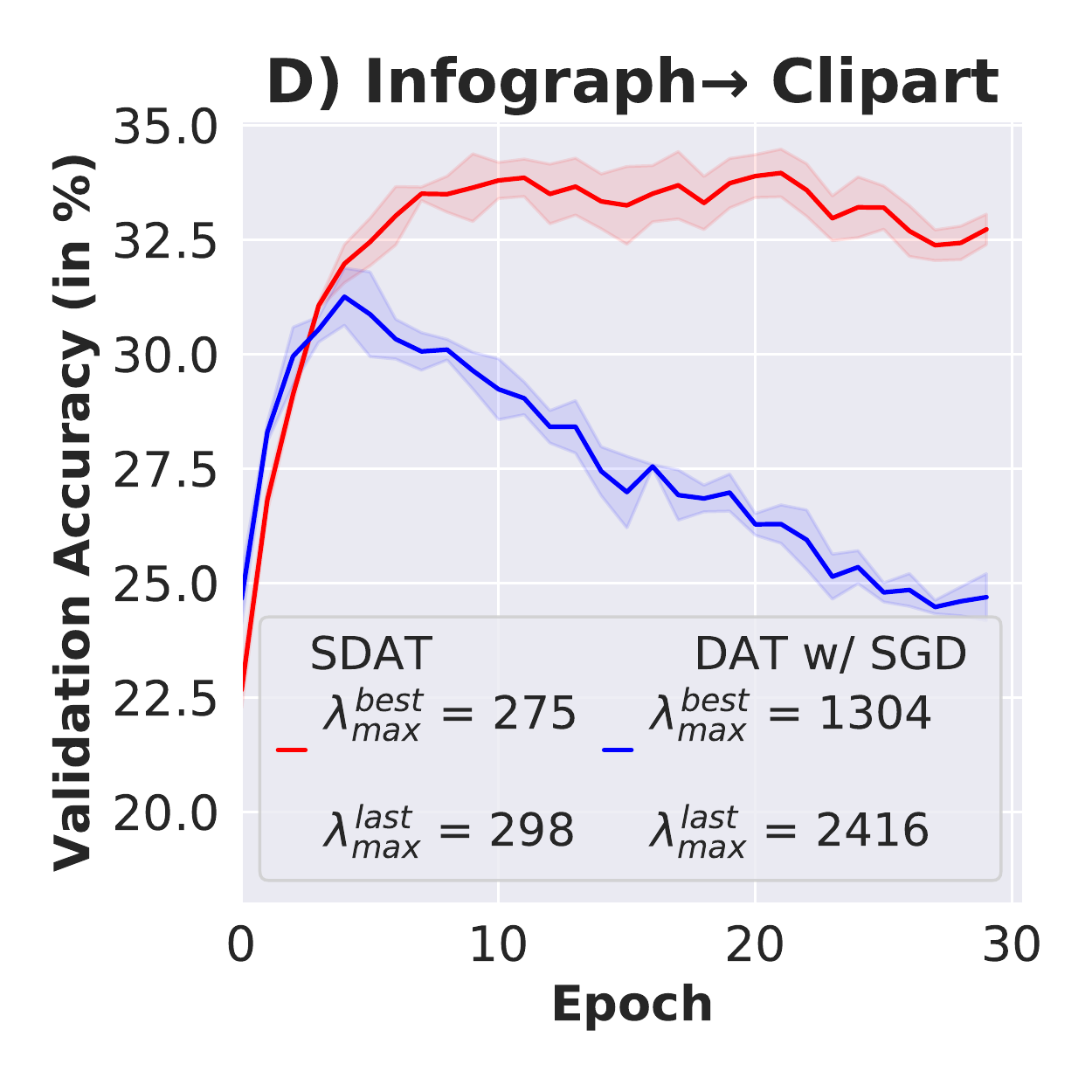} \\
\end{tabular}
\caption{Eigen Spectral Density plots of Hessian ($\nabla^2 \hat{R}_{S}^l(h_{\theta})$) for Adam (\textbf{A}), SGD (\textbf{B}) and SDAT (\textbf{C}) on Art $\veryshortarrow$ Clipart. Each plot contains the maximum eigenvalue ($\lambda_{max}$) and the trace of the Hessian ($Tr(H)$), which are indicators of the smoothness (Low $Tr(H)$ and $\lambda_{max}$ indicate the presence of smoother loss surface). Low range of eigenvalues (x-axis), $Tr(H)$ and $\lambda_{max}$ for SGD compared to Adam indicates that it reaches a smoother minima, which leads to a higher target accuracy. D) Validation accuracy and $\lambda_{\max}$ comparison for SDAT and DAT across epochs, SDAT shows significantly stable training with low $\lambda_{\max}$. }
\label{fig:hessian}
\end{figure*}
\section{Background}
\subsection{Preliminaries} 
We will primarily focus on unsupervised DA where we have labeled source data $S = \{ (x_i^s, y_i^s) \} $ and unlabeled target data $T = \{ (x_i^t) \}$. The source samples are assumed to be sampled i.i.d. from source distribution $P_S$ defined on input space $\mathcal{X}$, similarly target samples are sampled i.i.d. from $P_T$. $\mathcal{Y}$ is used for denoting the label set which is $\{1, 2,  \dots, k\}$ in our case as we perform multiclass ($k$) classification. We denote $y : \mathcal{X} \rightarrow \mathcal{Y}$ a mapping from images to labels. Our task is to find a hypothesis function $h_{\theta}$ that {has a low risk on the target distribution}. The source risk (a.k.a expected error) of the hypothesis $h_{\theta}$ is defined with respect to loss function $l$ as: $R_{S}^{l}(h_{\theta}) = \mathbb{E}_{x\sim P_S}[l(h_{\theta}(x), y(x))]$. The target risk $R_T^l(h_\theta)$ is defined analogously. The empirical versions of source and target risk will be denoted by $\hat{R}_S^l(h_\theta)$ and $\hat{R}_T^l(h_\theta)$. All notations used in paper are summarized in App. \ref{app:notn_tab}. In this work we build on the DA theory of \citep{acuna2021f} {which is a generalization of~\citet{ben2010theory}}. 
We first define the discrepancy between the two domains.

\begin{definition}[$ D_{h_{\theta},\mathcal{H}}^\phi$ discrepancy] The discrepancy between two domains $P_S$ and $P_T$ is defined as following:
\begin{equation}
    \begin{split}
    D_{h_{\theta}, \mathcal{H}}^{\phi}(P_S || P_T) := \sup_{h' \in \mathcal{H}} [\mathbb{E}_{x \sim P_S}[l(h_{\theta}(x),h'(x))]] - \\ [\mathbb{E}_{x \sim P_T}[\phi^*(l(h_{\theta}(x),h'(x)))]]
\end{split}
\end{equation}
Here $\phi^*$ is a frenchel conjugate of a lower semi-continuous convex function $\phi$  that satisfies $\phi(1) = 0$, and $\mathcal{H}$ is the set of all possible hypothesis (i.e. Hypothesis Space).
\end{definition}

This discrepancy distance $D_{h_{\theta}, \mathcal{H}}^\phi$ is based on variational formulation of f-divergence \citep{nguyen2010estimating} for the convex function $\phi$. The $D_{h_{\theta}, \mathcal{H}}^{\phi}$ is the lower bound estimate of the f-divergence function $D^{\phi}(P_S||P_T)$ (Lemma 4 in \citep{acuna2021f}). We state a bound on target risk $R_{T}^l(h_{\theta})$ based on $\mathcal D_{h_{\theta},\mathcal{H}}^\phi$ discrepancy \citep{acuna2021f}:
\begin{theorem}[\textbf{Generalization bound}]
\label{th:gen-bound}
Suppose $l: \mathcal{Y} \times \mathcal{Y} \rightarrow [0,1] \subset dom \; \phi^*$. Let $h^*$ be the ideal joint classifier with {least} $\lambda^* = R_S^l(h^*) +  R_T^l(h^*)$ {(i.e. joint risk)} in $\mathcal{H}$. We have the following relation between source and target risk:
\begin{equation}
    R_{T}^l(h_{\theta}) \leq R_{S}^{l}(h_{\theta}) + D_{h_{\theta}, \mathcal{H}}^{\phi} (P_S || P_T) + \lambda^*
\end{equation}
\end{theorem}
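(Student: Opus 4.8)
The plan is to re-derive this bound along the classical lines of \citet{ben2010theory}, transported to the $f$-divergence discrepancy of \citet{acuna2021f}. The only property of the loss I will need beyond $l:\mathcal Y\times\mathcal Y\to[0,1]$ is that it obeys a triangle inequality, $l(a,c)\le l(a,b)+l(b,c)$ for $a,b,c\in\mathcal Y$; this holds for the $0/1$ loss relevant to the $k$-class classification setting considered here and is the standing assumption under which the discrepancy theory of \citet{acuna2021f} is built.

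First I would introduce the ideal joint classifier $h^*$ on the target domain by applying the triangle inequality pointwise and integrating against $P_T$:
\begin{equation}
R_T^l(h_\theta)=\mathbb E_{x\sim P_T}\!\big[l(h_\theta(x),y(x))\big]\le \mathbb E_{x\sim P_T}\!\big[l(h_\theta(x),h^*(x))\big]+R_T^l(h^*).
\end{equation}
The same inequality on the source domain gives $\mathbb E_{x\sim P_S}[l(h_\theta(x),h^*(x))]\le R_S^l(h_\theta)+R_S^l(h^*)$. Adding and subtracting $\mathbb E_{x\sim P_S}[l(h_\theta(x),h^*(x))]$ in the first display and combining the two estimates yields
\begin{equation}
R_T^l(h_\theta)\le R_S^l(h_\theta)+\underbrace{\big(R_S^l(h^*)+R_T^l(h^*)\big)}_{=\,\lambda^*}+\Big(\mathbb E_{x\sim P_T}[l(h_\theta(x),h^*(x))]-\mathbb E_{x\sim P_S}[l(h_\theta(x),h^*(x))]\Big).
\end{equation}
So the whole theorem reduces to controlling the last bracket, i.e.\ the cross-domain gap of the single fixed function $g(x):=l(h_\theta(x),h^*(x))$, by the discrepancy $D^{\phi}_{h_\theta,\mathcal H}(P_S\|P_T)$.

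For that step I would use the variational (Fenchel) representation of the $f$-divergence that underlies the definition of $D^{\phi}_{h_\theta,\mathcal H}$ (the representation of \citet{nguyen2010estimating}). Since $h^*\in\mathcal H$, the function $g=l(h_\theta(\cdot),h^*(\cdot))$ is an admissible witness in the supremum defining the discrepancy, so
\begin{equation}
D^{\phi}_{h_\theta,\mathcal H}(P_S\|P_T)\ \ge\ \mathbb E_{x\sim P_S}[g(x)]-\mathbb E_{x\sim P_T}\!\big[\phi^*(g(x))\big];
\end{equation}
moreover the hypothesis $\phi(1)=0$ forces $\phi^*(t)=\sup_u\{ut-\phi(u)\}\ge 1\cdot t-\phi(1)=t$ for every $t$, and $\operatorname{ran} l\subseteq[0,1]\subseteq\operatorname{dom}\phi^*$ keeps all these quantities finite. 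The intended conclusion is that these ingredients combine to give $\mathbb E_{P_T}[g]-\mathbb E_{P_S}[g]\le D^{\phi}_{h_\theta,\mathcal H}(P_S\|P_T)$, which substituted into the previous display is exactly the claimed bound; the care needed to carry out this combination is what I turn to next.

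The step I expect to be the genuine obstacle is precisely this last one: the discrepancy is asymmetric in $(P_S,P_T)$ and applies $\phi^*$ only to the target expectation, whereas the triangle-inequality reduction leaves us needing to bound $\mathbb E_{P_T}[g]-\mathbb E_{P_S}[g]$, so the inequality $\phi^*\ge\mathrm{id}$ on its own points ``the wrong way.'' Making this rigorous is where the work lies — one either passes to the two-sided (absolute-value / both-orderings) form of $D^{\phi}_{h_\theta,\mathcal H}$, or exploits more structure of $\phi^*$ on $[0,1]$ than $\phi^*\ge\mathrm{id}$ — together with the standard measurability/integrability conditions that license exchanging the supremum and the expectations in the Fenchel duality; the triangle-inequality and $\lambda^*$ bookkeeping, by contrast, are entirely routine.
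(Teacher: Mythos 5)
The paper itself does not prove this theorem: its ``proof'' is a one-line pointer to Theorem 2 in Appendix B of Acuna et al.\ (2021), and your reconstruction is essentially that argument (triangle inequality to introduce $h^*$, then control of the cross-domain gap of $g := l(h_\theta(\cdot),h^*(\cdot))$ by the discrepancy), so in approach you match the cited source. The step you flag as the ``genuine obstacle'' is an obstacle only because the definition printed in this paper drops an absolute value: in the original $f$-DAL definition the supremum is over $\bigl|\mathbb{E}_{P_S}[l(h_\theta,h')]-\mathbb{E}_{P_T}[\phi^*(l(h_\theta,h'))]\bigr|$, and with that two-sided form --- the first of your two suggested fixes --- the step closes in one line,
\[
D^{\phi}_{h_\theta,\mathcal H}(P_S\|P_T)\;\ge\;\bigl|\mathbb{E}_{P_S}[g]-\mathbb{E}_{P_T}[\phi^*(g)]\bigr|\;\ge\;\mathbb{E}_{P_T}[\phi^*(g)]-\mathbb{E}_{P_S}[g]\;\ge\;\mathbb{E}_{P_T}[g]-\mathbb{E}_{P_S}[g],
\]
the last inequality being exactly your observation that $\phi(1)=0$ forces $\phi^*\ge\mathrm{id}$ on $\operatorname{dom}\phi^*$. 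You are right that with the one-sided definition as literally stated here, $\phi^*\ge\mathrm{id}$ points the wrong way and the needed inequality cannot be extracted, so your diagnosis is accurate; the only thing missing from your write-up is committing to the absolute-value form and writing the displayed chain, after which substituting into your triangle-inequality decomposition gives the theorem. Your standing assumption that $l$ obeys the triangle inequality is likewise required by (and assumed in) the cited theorem, so that bookkeeping is consistent with the source the paper defers to.
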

The above generalization bound shows that the target risk $R_T^l(h_{\theta})$ is upper bounded by the source risk $R_S^l(h_{\theta})$ and the discrepancy term $D_{h_{\theta}, \mathcal{H}}^\phi$ along with an irreducible constant error $\lambda^*$. Hence, this infers that reducing source risk and discrepancy lead a to reduction in target risk. Based on this, we concretely define the unsupervised adversarial adaptation procedure in the next section.

\subsection{Unsupervised Domain Adaptation}
In this section we first define the components of the framework we use for our purpose: $h_{\theta} = f_{\Theta} \circ g_{\psi}$ where $g_{\psi}$ is the feature extractor  and $f_{\Theta}$ is the classifier. The domain discriminator $\mathcal{D}_{\Phi}$, used for estimating the discrepancy between $P_S$ and $P_T$ is a classifier whose goal is to distinguish between the features of two domains. For minimizing the target risk (Th. \ref{th:gen-bound}), the optimization problem is as follows:
\begin{equation}
    \underset{\theta}{\min} \; \mathbb{E}_{x \sim P_S}[l(h_{\theta}(x), y(x))] + D_{h_{\theta}, \mathcal{H}}^{\phi}(P_S||P_T)
    \label{eq:uda_obj_min}
\end{equation}
The discrepancy term under some assumptions (refer App. \ref{app:discrepancy}) can be upper bounded by a tractable term:
\begin{equation}
\begin{split}
        D_{h_{\theta}, \mathcal{H}}^{\phi}(P_S||P_T) \leq \underset{\Phi}{\max} \; d_{S,T}^{\Phi}
\end{split}
\label{eq:diver_disc}
\end{equation}
where 
$d_{S,T}^{\Phi} = \mathbb{E}_{x \sim P_S}[\log(\mathcal{D}_{\Phi}(g_{\psi}(x)))] + \mathbb{E}_{x \sim P_T}\log[1 - \mathcal{D}_{\Phi}(g_{\psi}(x))]$.
This leads to the final optimization objective of:
\begin{equation}
        \underset{\theta}{\min} \; \underset{\Phi}{\max} \; \mathbb{E}_{x \sim P_S}[l(h_{\theta}(x), y(x))] +  \; d_{S,T}^{\Phi}
\label{eq:uda_obj}
\end{equation}
The first term in practice is empirically approximated by using finite samples $\hat{R}_S^l(h_\theta)$ and used as {task} loss (classification) for minimization. The empirical estimate of the second term is adversarial loss which is optimized using GRL as it has a min-max form. (Overview in Fig. \ref{fig:overview}) The above procedure composes DAT, and we use CDAN \citep{long2018conditional} as our default DAT method.

\section{Analysis of Smoothness} \label{smoothness}
In this section, we analyze the curvature properties of the task loss with respect to the parameters ($\theta$). Specifically, we focus on analyzing the Hessian of empirical source risk $H = \nabla^2_{\theta} \hat{R}_S^l(h_{\theta})$ which is the Hessian of classification ({task}) loss term. For quantifying the smoothness, we measure the trace $Tr(H)$ and maximum eigenvalue of Hessian ($\lambda_{max}$) as a proxy for quantifying smoothness. 
This is motivated by analysis of 
which states that the low value of $\lambda_{max}$ and $Tr(H)$ are indicative of highly smooth loss landscape \citep{Jastrzebski2020The}. Based on our observations we articulate our conjecture below: 
\begin{conjecture}
  Low $\lambda_{\max}$ for Hessian  of empirical source risk (i.e. task loss) $\nabla^2_{\theta}\hat{R}_S^l(h_{\theta})$ leads to stable and effective DAT, resulting in reduced risk on target domain  $\hat{R}^l_{T}(h_{\theta})$.
  
\end{conjecture}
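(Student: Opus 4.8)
Since the final statement is a \emph{conjecture} rather than a theorem, the plan is not to prove it outright but to supply the ``generalization bound on target error'' promised in Section~\ref{smoothness} that makes the claim quantitatively plausible, combining the DA bound of Theorem~\ref{th:gen-bound} with a PAC-Bayesian sharpness argument in the style of SAM. First I would start from Theorem~\ref{th:gen-bound}, $R_T^l(h_\theta) \le R_S^l(h_\theta) + D_{h_\theta,\mathcal H}^\phi(P_S\|P_T) + \lambda^*$, and note that the empirical target risk $\hat R_T^l(h_\theta)$ is controlled by $R_T^l(h_\theta)$ up to a standard uniform-convergence term, so it suffices to upper bound the right-hand side by quantities computable at training time.

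The second step is to apply a PAC-Bayesian bound with an isotropic Gaussian posterior $\mathcal N(\theta,\sigma^2 I)$ \emph{only} to the source term: with high probability $R_S^l(h_\theta) \le \mathbb{E}_{\epsilon\sim\mathcal N(0,\sigma^2 I)}[\hat R_S^l(h_{\theta+\epsilon})] + \mathcal{C}(\|\theta\|_2^2/\sigma^2, |S|)$, and then $\mathbb{E}_{\epsilon}[\hat R_S^l(h_{\theta+\epsilon})] \le \max_{\|\epsilon\|_2 \le \rho}\hat R_S^l(h_{\theta+\epsilon})$ for a suitable $\rho(\sigma)$. The key modeling choice here is that the perturbation is absorbed into the classification loss while the discrepancy term $D_{h_\theta,\mathcal H}^\phi$ is left untouched; this is precisely where the asymmetry between task loss and adversarial loss enters the theory, and it is what predicts that one should smooth the task term but \emph{not} the adversarial term, matching SDAT and the empirical findings.

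Third, I would Taylor-expand the inner maximization around $\theta$: $\max_{\|\epsilon\|_2 \le \rho}\hat R_S^l(h_{\theta+\epsilon}) = \hat R_S^l(h_\theta) + \rho\,\|\nabla_\theta \hat R_S^l(h_\theta)\|_2 + \tfrac{\rho^2}{2}\,\lambda_{\max}\!\big(\nabla_\theta^2 \hat R_S^l(h_\theta)\big) + O(\rho^3)$. Near a minimizer the gradient norm is negligible, so the leading correction is $\tfrac{\rho^2}{2}\lambda_{\max}(H)$ with $H = \nabla_\theta^2 \hat R_S^l(h_\theta)$. Chaining the three steps gives $\hat R_T^l(h_\theta) \le \hat R_S^l(h_\theta) + D_{h_\theta,\mathcal H}^\phi(P_S\|P_T) + \lambda^* + \tfrac{\rho^2}{2}\lambda_{\max}(H) + (\text{complexity terms})$, so a smaller $\lambda_{\max}$ of the task-loss Hessian directly tightens the bound on target risk, which is the substance of the conjecture.

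The hard part will be the word \emph{stable}: the bound above is static and says nothing about the gradient-reversal min--max trajectory, whereas the conjecture also asserts that low $\lambda_{\max}$ yields stable DAT \emph{dynamics} (the low-variance validation curves of Fig.~\ref{fig:hessian}D). Making that rigorous would need a dynamical analysis — e.g.\ showing a flatter task-loss landscape damps the oscillations the discriminator induces in the feature extractor — which is beyond the PAC-Bayes argument; I would therefore present the dynamics claim as empirically supported rather than proved. A secondary obstacle is the usual one for SAM-type bounds: controlling the Taylor remainder and justifying the choice of $\rho$ in terms of $\sigma$ and $\|\theta\|$ is delicate, and one must argue that perturbing $\theta$ inside the task term but not inside $D_{h_\theta,\mathcal H}^\phi$ is a principled choice and not an artifact of the decomposition.
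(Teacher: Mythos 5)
Your plan is essentially the paper's own treatment of this statement: the claim is deliberately left as a conjecture, the ``stability'' half is supported only empirically (Hessian spectra and validation curves), and the quantitative half is exactly the paper's Theorem~3, obtained by chaining the $f$-divergence bound of Theorem~1 with the PAC-Bayes/SAM bound of Foret et al.\ applied \emph{only} to the source risk while leaving $D_{h_\theta,\mathcal H}^{\phi}$ unperturbed --- the same asymmetry you identify as the crux. The one step you add beyond the paper is the second-order Taylor expansion that surfaces $\tfrac{\rho^2}{2}\lambda_{\max}\bigl(\nabla^2_\theta \hat R_S^l(h_\theta)\bigr)$ explicitly (the paper instead treats $\lambda_{\max}$ and $\mathrm{Tr}(H)$ as empirical smoothness proxies with a citation, never formally linking them to the $\max_{\|\epsilon\|\le\rho}$ term), and your decision to present the min--max \emph{dynamics} claim as empirical rather than proved matches the paper, which offers only a heuristic argument that a smooth $h_\theta$ induces a smooth discrepancy landscape.
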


\begin{figure*}[t]
\centering
\begin{tabular}{c c c}
\includegraphics[width=0.3\linewidth]{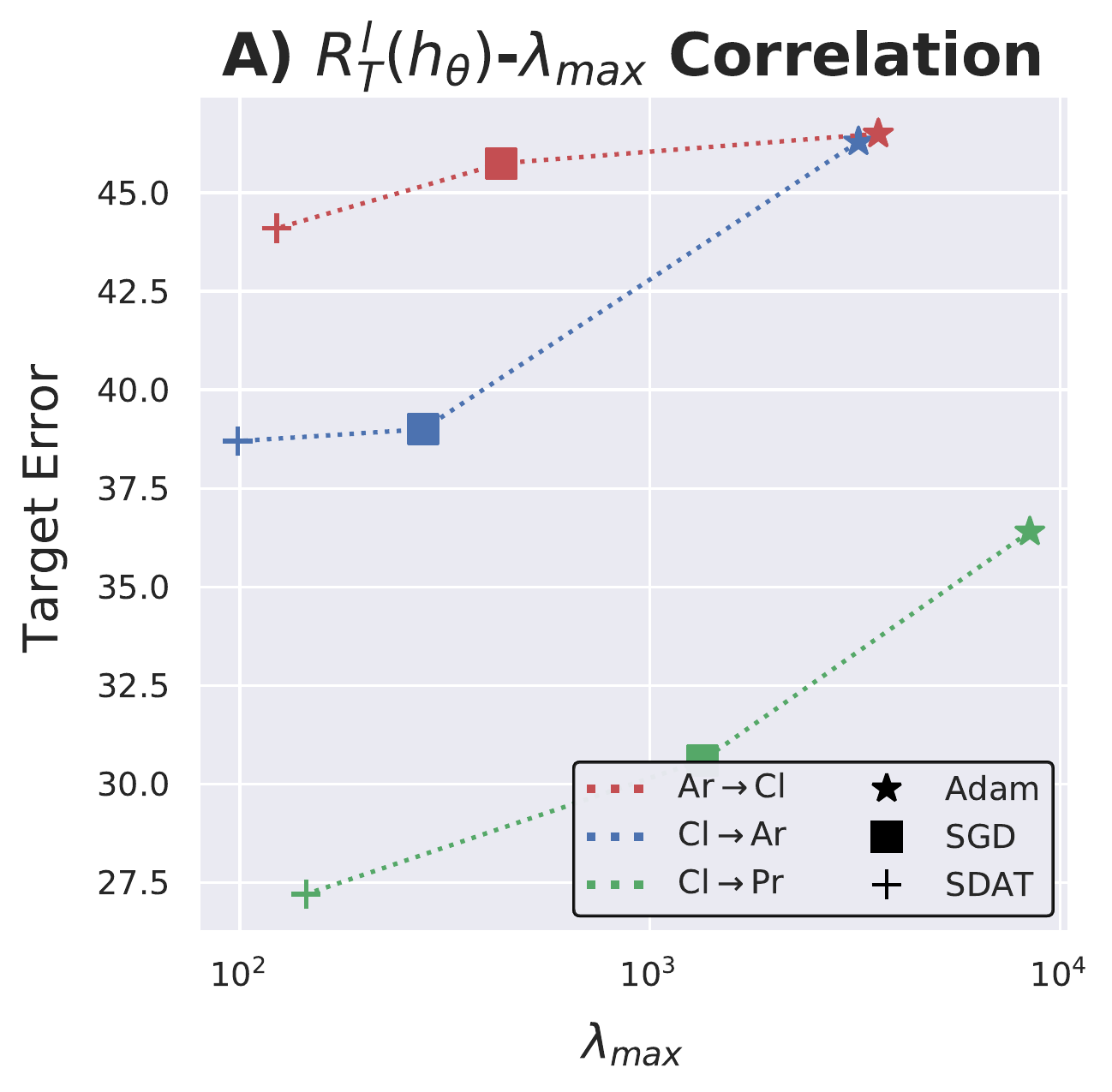} &   \includegraphics[width=0.3\linewidth]{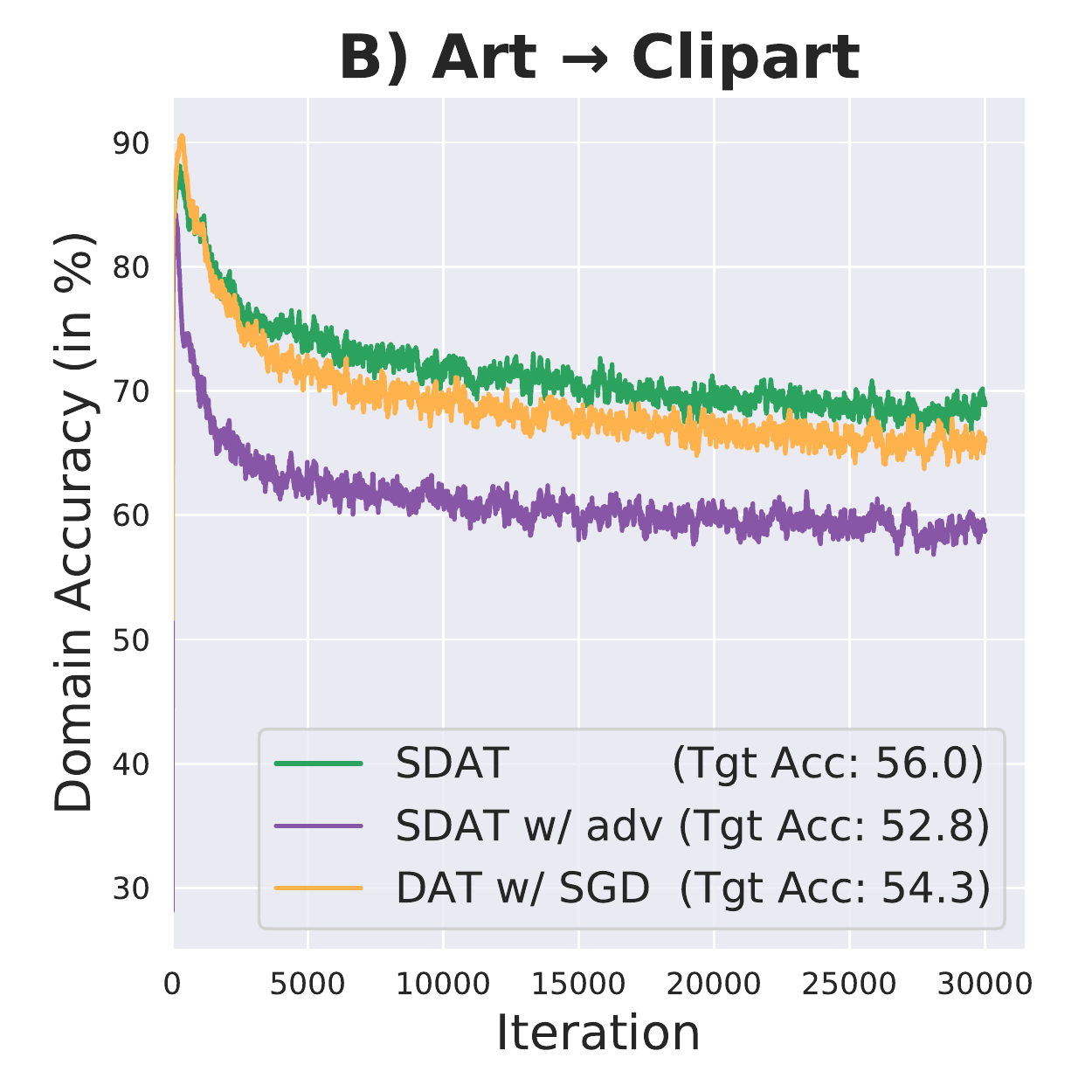} & \includegraphics[width=0.3\linewidth]{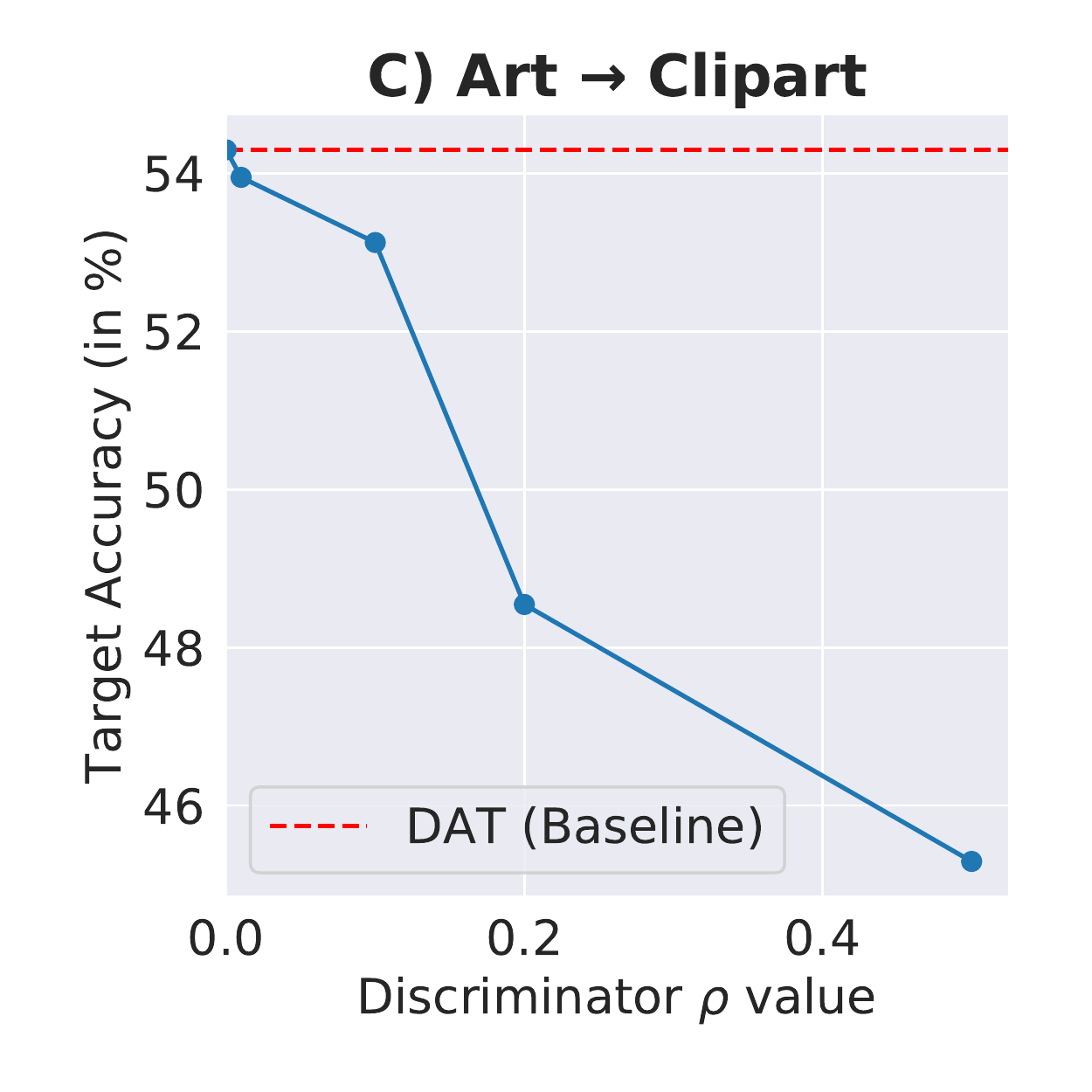} \\
\end{tabular}
\caption{\textbf{A)} Error on Target Domain (y-axis) for Office-Home dataset against maximum eigenvalue $\lambda_{max}$ of classification loss in DAT. 
When compared to SGD, Adam converges to a non-smooth minima (high $\lambda_{max}$), leading to a high error on target. 
Using Adam in comparison to SGD, converges to a non-smooth minima (high $\lambda_{max}$) leading to high error on target.
\textbf{B)} Domain Accuracy (vs iterations), it is lower when adversarial loss is smooth (i.e. SDAT w/ adv), which indicates suboptimal discrepancy estimation $d_{s,t}^{\Phi}$ \textbf{C)} Target Accuracy on Art $\rightarrow$ Clipart vs smoothness of the adversarial component. As the smoothness increases ($\rho$), the target accuracy decreases indicating that smoothing adversarial loss leads to sub-optimal generalization.}
\label{fig:2}
\end{figure*}

For empirical verification of our conjecture, we obtain the Eigen Spectral Density plot for the Hessian $\hat{R}^l_{T}(h_{\theta})$. We show the $\lambda_{max}$, $Tr(H)$ and Eigen Spectrum for different algorithms, namely DAT w/ Adam, DAT w/ SGD and our proposed SDAT (which is described in detail in later sections) in Fig.\ \ref{fig:hessian}. We find that \textit{high smoothness leads to better generalization on the target domain} (Additional empirical evidence in Fig.\ \ref{fig:2}A). We hypothesize that enforcing smoothness of classifier $h_{\theta}$ leads to a smooth landscape for discrepancy ($d_{S,T}^{\Phi}$)  as it is also a function of $h_{\theta}$. The smooth landscape ensures stable minimization (of Eq. \ref{eq:uda_obj}), ensuring a decrease in ($d_{S,T}^{\Phi}$)  with each SGD step even for a large step size (similar to \citep{chu2020smoothness}), this explains the enhanced stability and improved performance of adversarial training. For verifying the stabilization effect of smoothness, empirically we obtain $\lambda_{\max}$ for SGD and proposed SDAT at both best ($\lambda_{\max}^{best}$) and last epoch ($\lambda_{\max}^{last}$) for adaptation from Infographic to Clipart Domain (Fig.\ \ref{fig:hessian}\textcolor{mydarkblue}{D}). We find that as $\lambda_{\max}$ increases (decrease in smoothness of landscape), the training becomes unstable for SGD leading to a drop in validation accuracy. Whereas in the case of the proposed SDAT, the $\lambda_{\max}$ remains low across epochs, leading to stable and better validation accuracy curve. We also provide additional validation accuracy curves for more adaptation tasks where we also observe a similar phenomenon in Fig. \ref{fig:plotss}. To the best of our knowledge, our analysis of the effect of smoothness of task loss on the stability of DAT is novel. \\
\\
We also find that SGD leads to low $\lambda_{\max}$ (high smoothness w.r.t. task loss) in comparison to Adam leading to better performance. This also explains the widespread usage of SGD for DAT algorithms \citep{ganin2015unsupervised, long2018conditional, saito2018adversarial}, instead of Adam. More details about Hessian analysis is provided in App. \ref{app:hess}.

\subsection{Smoothing Loss Landscape}
\label{sec:subopt_dis}
In this section we first introduce the losses which are based on Sharpness Aware Minimization \citep{foret2021sharpnessaware} (SAM). The basic idea of SAM is to find a smoother minima (i.e. low loss in $\epsilon$ neighborhood of $\theta$) by using the following objective given formally below:
\begin{equation}
    \underset{\theta}{\min} \;\underset{||\epsilon|| \leq \rho}{\max}\; L_{obj}(\theta + \epsilon) 
\end{equation}
here $L_{obj}$ is the objective function to be minimized and $\rho\geq0$ is a hyperparameter which defines the maximum norm for $\epsilon$. Since finding the exact solution of inner maximization is hard, SAM maximizes the first order approximation:
\begin{equation}
\begin{split}
            \hat{\epsilon}(\theta) \approx \underset{||\epsilon|| \leq \rho}{\arg \max} \;  L_{obj}(\theta) + \epsilon^{\mathbf{T}}\nabla_{\theta} L_{obj}(\theta) \\
            = \rho \nabla_{\theta} L_{obj}(\theta) / ||\nabla_{\theta} L_{obj}(\theta)||_2
\end{split}
\end{equation}

The $\hat{\epsilon}(\theta)$ is added to the weights $\theta$. The gradient update for $\theta$ is then computed as $\nabla_{\theta} L_{obj}(\theta)|_{\theta + \hat{\epsilon}(\theta)}$. The above procedure can be seen as a generic smoothness enhancing formulation for any $L_{obj}$. We now analogously introduce the sharpness aware source risk for finding a smooth minima:
\begin{equation}
      \underset{||\epsilon|| \leq \rho}{\max} \; {R}_{S}^l(h_{\theta + \epsilon}) =  
      \underset{||\epsilon|| \leq \rho}{\max} \mathbb{E}_{x \sim P_S}[\; l(h_{\theta + \epsilon}(x), f(x))]
\end{equation}
We also now define the sharpness aware discrepancy estimation objective below:
\begin{equation}
    \max_{\Phi} \min_{||\epsilon|| \leq \rho} d_{S,T}^{\Phi + \epsilon}
    \label{eq:smooth_disc}
\end{equation}
As $d_{S,T}^{\Phi}$ is to be maximized the sharpness aware objective will have $\underset{||\epsilon|| \leq \rho}{\min}$ instead of $\underset{||\epsilon|| \leq \rho}{\max}$, as it needs to find smoother maxima. We now theoretically analyze the difference in discrepancy estimation for smooth version $d_{S,T}^{\Phi''}$ (Eq. \ref{eq:smooth_disc}) in comparison to non-smooth version $d_{S,T}^{\Phi'}$ (Eq. \ref{eq:diver_disc}). Assuming $\mathcal{D}_{\Phi}$ is a $L$-smooth function {(common assumption for non-convex optimization~\citep{carmon2020lower})}, 
$\eta$ is a small constant and $d_{S,T}^*$ the optimal discrepancy, the theorem states:

\begin{theorem}
\label{th:suboptimality}
For a given classifier $h_{\theta}$ and one step of (steepest) gradient ascent i.e. $\Phi' = \Phi + \eta (\nabla d_{S,T}^{\Phi}/||\nabla d_{S,T}^{\Phi}||)$ and $\Phi'' = \Phi + \eta (\nabla d_{S,T}^{\Phi}|_{\Phi + \hat{\epsilon}(\Phi)}/||\nabla d_{S,T}^{\Phi}|_{\Phi + \hat{\epsilon}(\Phi)}||)$ 
\begin{equation}
\begin{split}
         d_{S,T}^{\Phi'} - d_{S,T}^{\Phi''} \leq  \eta(1 - \cos \alpha)\sqrt{2L(d^*_{S,T} - d^{\Phi}_{S,T}) }
\end{split}
\end{equation}
where $\alpha$ is the angle between $\nabla d_{S,T}^{\Phi}$ and $\nabla d_{S,T}^{\Phi}|_{\Phi + \hat{\epsilon}(\Phi)}$. 
\end{theorem}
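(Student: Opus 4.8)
The plan is to reduce the whole statement to a first-order expansion of the discrepancy objective around $\Phi$ together with the standard gradient-norm bound for smooth functions. Write $g := \nabla d_{S,T}^{\Phi}$ for the gradient at $\Phi$ and $\tilde g := \nabla d_{S,T}^{\Phi}|_{\Phi+\hat\epsilon(\Phi)}$ for the gradient at the SAM-perturbed point, and let $u := g/\|g\|$ and $\tilde u := \tilde g/\|\tilde g\|$, so that the two ascent iterates $\Phi' = \Phi + \eta u$ and $\Phi'' = \Phi + \eta \tilde u$ both lie at distance $\eta$ from $\Phi$. Since $\eta$ is a small constant, I would Taylor-expand the objective $\Psi \mapsto d_{S,T}^{\Psi}$ about $\Phi$: $d_{S,T}^{\Phi'} = d_{S,T}^{\Phi} + \eta\langle g, u\rangle + O(\eta^2)$ and $d_{S,T}^{\Phi''} = d_{S,T}^{\Phi} + \eta\langle g, \tilde u\rangle + O(\eta^2)$, the $O(\eta^2)$ remainders being of order $L\eta^2$ by the smoothness assumption and dropped as lower order.

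Next I would subtract the two expansions. The $d_{S,T}^{\Phi}$ terms cancel and, using $\langle g, u\rangle = \|g\|$ together with $\langle g, \tilde u\rangle = \|g\|\cos\alpha$ (since by definition $\alpha$ is the angle between $g$ and $\tilde g$, equivalently between $g$ and the unit vector $\tilde u$), one gets
\[
d_{S,T}^{\Phi'} - d_{S,T}^{\Phi''} = \eta\,\langle g,\, u - \tilde u\rangle = \eta\bigl(\|g\| - \|g\|\cos\alpha\bigr) = \eta\,\|g\|\,(1 - \cos\alpha).
\]

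Finally I would bound $\|g\| = \|\nabla d_{S,T}^{\Phi}\|$ via smoothness. Because $\mathcal D_{\Phi}$, and hence $d_{S,T}^{\Phi}$, is $L$-smooth in $\Phi$ and $d_{S,T}^*$ is its global maximum, a gradient-ascent step of size $1/L$ satisfies $d_{S,T}^{\Phi + g/L} \ge d_{S,T}^{\Phi} + \tfrac{1}{2L}\|g\|^2$ by the quadratic smoothness inequality; since the left-hand side cannot exceed $d_{S,T}^*$, rearranging gives $\|\nabla d_{S,T}^{\Phi}\|^2 \le 2L\,(d_{S,T}^* - d_{S,T}^{\Phi})$. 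Substituting $\|g\| \le \sqrt{2L(d_{S,T}^* - d_{S,T}^{\Phi})}$ into the displayed identity and using $1 - \cos\alpha \ge 0$ yields exactly the claimed bound.

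The main obstacle is not really the algebra but the two modeling commitments underneath it: first, justifying the first-order treatment of the update, i.e.\ that the $O(\eta^2)$ Taylor remainder (itself $O(L\eta^2)$ under the hypothesis) is negligible against the linear term — or, for a fully rigorous inequality, carrying it and absorbing an extra $+L\eta^2$ on the right; and second, invoking the gradient-norm versus suboptimality-gap lemma, which is the one genuinely non-trivial ingredient and which crucially needs both $L$-smoothness and that $d_{S,T}^*$ is a true global maximum, so that a full $1/L$-step cannot overshoot it. The remaining pieces — the cancellation, the inner-product identities defining $\cos\alpha$, and the nonnegativity of $1-\cos\alpha$ — are routine.
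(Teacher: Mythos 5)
Your proposal is correct and follows essentially the same route as the paper: a first-order expansion of $d_{S,T}$ at $\Phi$ giving the difference $\eta\|\nabla d_{S,T}^{\Phi}\|(1-\cos\alpha)$, followed by the $L$-smoothness bound $\|\nabla d_{S,T}^{\Phi}\|^2 \leq 2L(d_{S,T}^* - d_{S,T}^{\Phi})$ proved via the same $1/L$-step argument the paper uses in its Lemma. Your remark that the dropped $O(L\eta^2)$ Taylor remainder should strictly be carried (or absorbed into the right-hand side) is a fair point of rigor that the paper also glosses over with its ``$\approx$'' in the linear approximation.
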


The $d_{S,T}^{\Phi'}$ (non-smooth version) can exceed $d_{S,T}^{\Phi''}$ (smooth discrepancy) significantly, as the term $d_{S,T}^* -  d_{S,T}^{\Phi}  \not\to 0$, as the $h_{\theta}$ objective is to oppose the convergence of $d_{S,T}^{\Phi}$ to optima $d_{S,T}^*$ (min-max training in Eq. \ref{eq:uda_obj}). Thus $d_{S,T}^{\Phi'}$ can be a better estimate of discrepancy in comparison to $d_{S,T}^{\Phi''}$. A better estimate of $d_{s,t}^{\Phi}$ helps in effectively reducing the discrepancy between $P_S$ and $P_T$, hence leads to reduced $R_{T}^l(h_{\theta})$. This is also observed in practice that smoothing the discriminator's adversarial loss (SDAT w/ adv in Fig.\ \ref{fig:2}\textcolor{mydarkblue}{B}) leads to low domain classification accuracy (proxy measure for $d_{s,t}^{\Phi}$) in comparison to DAT. 
Due to ineffective discrepancy estimation, SDAT w/ adv results in sub-optimal generalization on target domain i.e. high target error $R_{T}^l(h_{\theta})$ (Fig.\ \ref{fig:2}\textcolor{mydarkblue}{B}). We also observe that further increasing the smoothness of the discriminator w.r.t. adversarial loss (increasing $\rho$) leads to lowering of performance on the target domain (Fig.\ \ref{fig:2}\textcolor{mydarkblue}{C}). A similar trend is observed in GANs (App.\ \ref{app:gan_exp}) which also has a similar min-max objective. The proof of the above theorem and additional experimental details is provided in App. \ref{app:proof}.

\subsection{Smooth Domain Adversarial Training (SDAT)}
We propose smooth domain adversarial training which only focuses on converging to smooth minima w.r.t. {task} loss (i.e. empirical source risk), whereas preserves the original discrepancy term. We define the optimization objective of proposed Smooth Domain Adversarial Training below:

\vspace{-2em}
\begin{equation}\min_{\theta} \max_{\Phi} \underset{||\epsilon|| \leq \rho}{\max}  \mathbb{E}_{x \sim P_S}[ l(h_{\theta + \epsilon}(x), y(x))] + d_{S,T}^{\Phi}
    \label{eq:uda_obj_smooth}
\end{equation}
The first term is the sharpness aware risk, and the second term is the discrepancy term which is not smooth in our procedure. The term $d_{S,T}^{\Phi}$ estimates $D_{h_{\theta}, H}^{\phi}(P_S||P_T)$ discrepancy. We now show that optimizing Eq.\ \ref{eq:uda_obj_smooth} reduces $R_T^l(h_\theta)$ through a generalization bound. {This bound establishes that our proposed SDAT procedure is also consistent (i.e. in case of infinite data the upper bound is tight) similar to the original DAT objective (Eq. \ref{eq:uda_obj})}.  

\begin{theorem}

Suppose l is the loss function, we denote $\lambda^* := R_S^l(h^*) + R_T^l(h^*)$ and let $h^*$ be the ideal joint hypothesis:
\begin{equation}
\begin{split}
         R_{T}^l(h_{\theta}) \leq \; \max_{||\epsilon|| \leq \rho}\hat{R}_S^l(h_{\theta + \epsilon}) + D_{h_{\theta}, H}^{\phi}(P_S||P_T)   + \\ \gamma(||\theta||_2^2/\rho^2) + \lambda^* .
\end{split}
\end{equation}
where $\gamma: \mathbb{R}^{+} \rightarrow \mathbb{R}^{+}$ is a strictly increasing function.
\end{theorem}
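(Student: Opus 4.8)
The plan is to reduce the claimed inequality to the already-established generalization bound of \Cref{th:gen-bound} by controlling the difference between the source risk $R_S^l(h_\theta)$ and the sharpness-aware source risk $\max_{\|\epsilon\|\leq\rho}\hat R_S^l(h_{\theta+\epsilon})$. First I would start from \Cref{th:gen-bound}, which gives $R_T^l(h_\theta) \leq R_S^l(h_\theta) + D_{h_\theta,\mathcal H}^\phi(P_S\|P_T) + \lambda^*$. The only term that needs to be changed is $R_S^l(h_\theta)$: I want to replace the true (population) source risk by the empirical sharpness-aware source risk $\max_{\|\epsilon\|\leq\rho}\hat R_S^l(h_{\theta+\epsilon})$ at the cost of an additive term $\gamma(\|\theta\|_2^2/\rho^2)$.

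The key step is a PAC-Bayes style argument, exactly as in Foret et al.\ (SAM): for any prior and posterior over parameters one gets, with high probability over the draw of the source sample $S$ of size $n$, a bound of the form
\begin{equation}
R_S^l(h_\theta) \leq \max_{\|\epsilon\|\leq\rho}\hat R_S^l(h_{\theta+\epsilon}) + \sqrt{\frac{k\log\!\left(1+\|\theta\|_2^2/\rho^2\right)\bigl(1+\sqrt{\log n /k}\bigr)^2 + \tilde O(1)}{n-1}}.
\end{equation}
Choosing the posterior to be a Gaussian perturbation of $\theta$ with variance calibrated to $\rho$, and the prior from a finite cover of variance scales, the dependence on $\theta$ enters only through $\|\theta\|_2^2/\rho^2$; collecting everything that is monotone in that quantity into a single function and absorbing the sample-size and confidence terms, one obtains a strictly increasing $\gamma:\mathbb R^+\to\mathbb R^+$ with $R_S^l(h_\theta) \leq \max_{\|\epsilon\|\leq\rho}\hat R_S^l(h_{\theta+\epsilon}) + \gamma(\|\theta\|_2^2/\rho^2)$. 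Substituting this into \Cref{th:gen-bound} and noting $D_{h_\theta,\mathcal H}^\phi(P_S\|P_T)$ is unchanged (it does not involve the smoothing) yields
\begin{equation}
R_T^l(h_\theta) \leq \max_{\|\epsilon\|\leq\rho}\hat R_S^l(h_{\theta+\epsilon}) + D_{h_\theta,\mathcal H}^\phi(P_S\|P_T) + \gamma(\|\theta\|_2^2/\rho^2) + \lambda^*,
\end{equation}
which is the statement. The consistency remark follows because as $n\to\infty$ the PAC-Bayes slack $\to 0$, and if additionally $\rho\to 0$ the $\max$ collapses to the empirical risk, recovering the tight \Cref{th:gen-bound} bound.

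The main obstacle I expect is making the PAC-Bayes bound genuinely yield a clean single-variable function $\gamma(\|\theta\|_2^2/\rho^2)$: the raw SAM bound has the $\|\theta\|_2^2/\rho^2$ term sitting inside a logarithm that is multiplied by $n$-dependent and $k$-dependent factors and added to other terms before the square root, so some care (and mild assumptions, e.g.\ treating $n$, $k$, and the confidence level $\delta$ as fixed, and bounding the cross terms) is needed to legitimately repackage it as one increasing function of the single argument $\|\theta\|_2^2/\rho^2$ without hiding a dependence on $\theta$ elsewhere. A secondary technical point is verifying that the chosen posterior variance can always be picked so that the perturbation stays within (or is comparable to) the $\rho$-ball used in the $\max$, so that the empirical term in the PAC-Bayes inequality is upper bounded by $\max_{\|\epsilon\|\leq\rho}\hat R_S^l(h_{\theta+\epsilon})$; this is handled in Foret et al.\ by a covering argument over a grid of variances, which I would import essentially verbatim.
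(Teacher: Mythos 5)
Your proposal matches the paper's own proof essentially verbatim: the paper likewise invokes Theorem 2 of Foret et al.\ to bound $R_S^l(h_\theta)$ by $\max_{\|\epsilon\|\leq\rho}\hat R_S^l(h_{\theta+\epsilon})$ plus a term abbreviated as $\gamma(\|\theta\|_2^2/\rho^2)$, and then chains this with the $f$-divergence generalization bound of Theorem~1. Your caveat about legitimately repackaging the PAC-Bayes slack as a single increasing function of $\|\theta\|_2^2/\rho^2$ is a fair observation that the paper glosses over, but it does not change the argument.
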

\begin{table*}[!t]
  \centering     
      \caption{Accuracy (\%) on {Office-Home} for unsupervised DA (with ResNet-50 and ViT backbone). CDAN+MCC w/ SDAT outperforms other SOTA DA techniques. CDAN w/ SDAT improves over CDAN by 1.1\% with ResNet-50 and 3.1\% with ViT backbone. }  
    \vskip 0.15in
  \resizebox{\textwidth}{!}{%
  \begin{tabular}{l|c|cccccccccccc|c}
    \hline
    \textbf{Method} && \textbf{Ar$\veryshortarrow$Cl} & \textbf{Ar$\veryshortarrow$Pr} & \textbf{Ar$\veryshortarrow$Rw} & \textbf{Cl$\veryshortarrow$Ar} & \textbf{Cl$\veryshortarrow$Pr} & \textbf{Cl$\veryshortarrow$Rw} & \textbf{Pr$\veryshortarrow$Ar} & \textbf{Pr$\veryshortarrow$Cl} & \textbf{Pr$\veryshortarrow$Rw} & \textbf{Rw$\veryshortarrow$Ar} & \textbf{Rw$\veryshortarrow$Cl} & \textbf{Rw$\veryshortarrow$Pr} & \textbf{Avg} \Tstrut
    \Bstrut\\\hline \hline

    ResNet-50 \citep{he2016deep} &\parbox[t]{2mm}{\multirow{10}{*}{\rotatebox[origin=c]{90}{ResNet-50}}}& 34.9 & 50.0 & 58.0 & 37.4 & 41.9 & 46.2 & 38.5 & 31.2 & 60.4 & 53.9 & 41.2 & 59.9 & 46.1 \\
    DANN \citep{ganin2016domain} && 45.6 & 59.3 & 70.1 & 47.0 & 58.5 & 60.9 & 46.1 & 43.7 & 68.5 & 63.2 & 51.8 & 76.8 & 57.6 \\
    CDAN* \citep{long2018conditional} && 49.0 & {69.3} & 74.5 & 54.4 & {66.0} & {68.4} & {55.6} & 48.3 & 75.9 & 68.4 & 55.4 & {80.5} & 63.8 \\
    MDD \citep{zhang2019bridging} && 54.9 & 73.7 & 77.8 & 60.0 & 71.4 & 71.8 & 61.2 & 53.6 & 78.1 & 72.5 & 60.2 & 82.3 & 68.1 \\
    f-DAL \citep{acuna2021f}  && {56.7} & {\underline{77.0}} & {81.1} & {63.1} & {72.2} & {75.9} & {\underline{64.5}} & {54.4} & {81.0} & {72.3} & {58.4} & {83.7} & {70.0} \\ 
    
    SRDC \citep{tang2020unsupervised}  && {52.3} & {76.3} & {81.0} & {\textbf{69.5}} & {\underline{76.2}} & {\textbf{78.0}} & {\textbf{68.7}} & {53.8} & {\underline{81.7}} & {\textbf{76.3}} & {57.1} & {85.0} & {\underline{71.3}}\Bstrut\\\cline{1-1}\cline{3-15}
    CDAN && 54.3 & {70.6} & 76.8 & 61.3 & 69.5 & 71.3 & {61.7} & 55.3 & {80.5} & 74.8 & 60.1 & {84.2} & 68.4\\ 
        
        
    \cellcolor{mygray}CDAN w/ SDAT && \cellcolor{mygray}56.0 & \cellcolor{mygray}{72.2} & \cellcolor{mygray}{78.6} & \cellcolor{mygray}{62.5} & \cellcolor{mygray}{73.2} & \cellcolor{mygray}{71.8} & \cellcolor{mygray}{62.1} & \cellcolor{mygray}{55.9} & \cellcolor{mygray}80.3 & \cellcolor{mygray}\underline{75.0} & \cellcolor{mygray}61.4 & \cellcolor{mygray}{84.5} & \cellcolor{mygray}69.5\Bstrut\\\cline{1-1}\cline{3-15}
    {CDAN + MCC} && \underline{57.0} & {76.0} & \underline{81.6} & {64.9} & {75.9} & {75.4} & {63.7} & \underline{56.1} & 81.2 & {74.2} & \underline{63.9} & \underline{85.4} & \underline{71.3} \\
    \cellcolor{mygray}{CDAN + MCC w/ SDAT} && \cellcolor{mygray}\textbf{58.2} & \cellcolor{mygray}\textbf{77.1} & \cellcolor{mygray}\textbf{82.2} & \cellcolor{mygray}\underline{66.3} & \cellcolor{mygray}\textbf{77.6} & \cellcolor{mygray}\underline{76.8} & \cellcolor{mygray}{63.3} & \cellcolor{mygray}\textbf{57.0} & \cellcolor{mygray}\textbf{82.2} & \cellcolor{mygray}{74.9} & \cellcolor{mygray}\textbf{64.7} & \cellcolor{mygray}\textbf{86.0} & \cellcolor{mygray}\textbf{72.2}\Bstrut\\
    \hline \hline
	TVT \cite{yang2021tvt}  &\parbox[t]{2mm}{\multirow{5}{*}{\rotatebox[origin=c]{90}{ViT}}}& \textbf{74.9} & \underline{86.8} & 89.5 & 82.8 & \textbf{87.9} & 88.3 & 79.8 & \textbf{71.9} & \underline{90.1} & 85.5 & \underline{74.6} & 90.6 & \underline{83.6}\Tstrut \\
    \cline{1-1}\cline{3-15} 

    {CDAN}& & 62.6 & 82.9 & 87.2  & 79.2 & 84.9  & 87.1 & 77.9 & 63.3 & 88.7 & 83.1 & 63.5 & 90.8 & 79.3 \\
     \cellcolor{mygray}{CDAN w/ SDAT} && \cellcolor{mygray}69.1 & \cellcolor{mygray}86.6 & \cellcolor{mygray}88.9 & \cellcolor{mygray}81.9 & \cellcolor{mygray}86.2 & \cellcolor{mygray}88.0 & \cellcolor{mygray}\underline{81.0} & \cellcolor{mygray}66.7 & \cellcolor{mygray}89.7 & \cellcolor{mygray}86.2 & \cellcolor{mygray}72.1 & \cellcolor{mygray}\underline{91.9} & \cellcolor{mygray}82.4\Bstrut\\\cline{1-1}\cline{3-15}
    {CDAN + MCC}& & 67.0 & 84.8 & \underline{90.2} & \underline{83.4} & \underline{87.3} & \underline{89.3} & 80.7 & 64.4 & 90.0 & \underline{86.6} & 70.4 & \underline{91.9} & 82.2 \\
    \cellcolor{mygray}{CDAN + MCC w/ SDAT} && \cellcolor{mygray}\underline{70.8} & \cellcolor{mygray}\textbf{87.0} & \cellcolor{mygray}\textbf{90.5} & \cellcolor{mygray}\textbf{85.2} & \cellcolor{mygray}\underline{87.3} & \cellcolor{mygray}\textbf{89.7} & \cellcolor{mygray}\textbf{84.1} & \cellcolor{mygray}\underline{70.7} & \cellcolor{mygray}\textbf{90.6} & \cellcolor{mygray}\textbf{88.3} & \cellcolor{mygray}\textbf{75.5} & \cellcolor{mygray}\textbf{92.1} &\cellcolor{mygray} \textbf{84.3}
    \Bstrut\\\hline \hline
  \end{tabular}%
      }

  \label{tab:officehome}
\end{table*}
The bound is similar to generalization bounds for domain adaptation \citep{ben2010theory, acuna2021f}. The main difference is the sharpness aware risk term $\max_{||\epsilon|| \leq \rho}\hat{R}^l_S(h_{\theta})$ in place of source risk $R^{l}_S(h_{\theta})$, and an additional term that depends on the norm of the weights $\gamma(||\theta||_2^2/\rho^2)$. 
The first is minimized by decreasing the empirical sharpness aware source risk by using SAM loss shown in Sec.\ \ref{smoothness}. The second term is reduced by decreasing the discrepancy between source and target domains. The third term, as it is a function of norm of weights $||\theta||_2^2$, can be reduced by using either L2 regularization or weight decay. Since we assume that the $\mathcal{H}$ hypothesis class we have is rich, the $\lambda^*$ term is small. \\ \\
Any DAT baseline can be modified to use SDAT objective just by using few lines of code (App. \ref{app:pytorchcode}). \emph{We observe that the proposed SDAT objective (Eq. \ref{eq:uda_obj_smooth}) leads to significantly lower generalization error compared to the original DA objective (Eq. \ref{eq:uda_obj}), which we empirically demonstrate in the following sections}.
\section{Adaptation for classification}

We evaluate our proposed method on three datasets: Office-Home, VisDA-2017, and DomainNet, as well as by combining SDAT with two DAT based DA techniques: CDAN and CDAN+MCC. We also show results with ViT backbone on Office-Home and VisDA-2017 dataset.
\subsection{Datasets}
\textbf{Office-Home} \citep{venkateswara2017Deep}: Office-Home consists of 15,500 images from 65 classes and 4 domains: Art (Ar), Clipart (Cl), Product (Pr) and Real World (Rw).

\textbf{DomainNet }\citep{peng2019moment}: DomainNet consists of 0.6 million images across 345 classes belonging to six domains. The domains are infograph (inf), clipart (clp), painting (pnt), sketch (skt), real and quickdraw.

\textbf{VisDA-2017 }\citep{visda2017}:  VisDA is a dataset that focuses on the transition from simulation to real world and contains approximately 280K images across 12 classes. 

\subsection{Domain Adaptation Methods}
\textbf{CDAN }\citep{long2018conditional}: Conditional Domain Adversarial network is a popular DA algorithm that improves the performance of the DANN algorithm. CDAN introduces the idea of multi-linear conditioning to align the source and target distributions better. CDAN in Table \ref{tab:officehome} and \ref{table:visda_uda} refers to our implementation of CDAN* \cite{long2018conditional} method. 

\textbf{CDAN + MCC }\citep{jin2020minimum}: The minimum class confusion (MCC) loss term is added as a regularizer to CDAN. MCC is a non-adversarial term that minimizes the pairwise class confusion on the target domain, hence we consider this as an additional minimization term which is added to empirical source risk. This method achieves close to SOTA accuracy among adversarial adaptation methods. 
\subsection{Implementation Details} \label{imple}
We implement our proposed method in the Transfer-Learning-Library \citep{dalib} toolkit developed in PyTorch \citep{NEURIPS2019_9015}. The difference between the performance reported in CDAN* and our implementation CDAN is due to the batch normalization layer in domain classifier, which enhances performance.

\begin{table*}[ht!]
	\centering
	\caption{Accuracy (\%) on VisDA-2017 for unsupervised DA (with ResNet-101 and ViT backbone). The \textbf{mean} column contains mean across all classes. SDAT particularly improves the accuracy in classes that have comparatively low CDAN performance. 
}
 \vskip 0.15in
	\addtolength{\tabcolsep}{-3pt}
	\label{table:visda_uda}
	\begin{adjustbox}{max width=\textwidth}

	\begin{tabular}{l|c|cccccccccccc|c}
        \hline
		\textbf{Method} && \textbf{plane} & \textbf{bcycl} & \textbf{bus} & \textbf{car} & \textbf{horse} & \textbf{knife} & \textbf{mcyle} & \textbf{persn} & \textbf{plant} & \textbf{sktb} & \textbf{train} & \textbf{truck} & \textbf{mean} \Tstrut\Bstrut\\
		\hline \hline
		ResNet \citep{he2016deep} &\parbox[t]{2mm}{\multirow{10}{*}{\rotatebox[origin=c]{90}{ResNet-101}}}   &  55.1 & 53.3 & 61.9 & 59.1 &  80.6 & 17.9 & 79.7 & 31.2 & 81.0 & 26.5 & 73.5 & 8.5 & 52.4\\
		DANN \citep{ganin2016domain}& & 81.9 & 77.7 & 82.8 & 44.3 & 81.2 & 29.5 & 65.1 & 28.6 & 51.9 & 54.6 & 82.8 & 7.8 & 57.4\\
		MCD \citep{saito2018maximum}&&   87.0 &  60.9 & \textbf{83.7} & 64.0 & 88.9 & 79.6 & 84.7 & 76.9 & 88.6 & 40.3 & 83.0 & 25.8 & 71.9\\
		CDAN* \citep{long2018conditional}& &   85.2 & 66.9 & \underline{83.0} & 50.8 & 84.2 & 74.9 & 88.1 & 74.5 & 83.4 & 76.0 & 81.9 & 38.0 & 73.9\\
		MCC \citep{jin2020minimum} &&   88.1 & 80.3 & 80.5 & \textbf{71.5} & 90.1 & 93.2 & 85.0 & 71.6 & 89.4 & 73.8 & 85.0 & 36.9 & 78.8\\
		\cline{1-1}\cline{3-15}
		{CDAN} &&   94.9 & 72.0 & \underline{83.0} & 57.3 & 91.6 & 95.2 & \underline{91.6} & 79.5 & 85.8 & 88.8 & \underline{87.0} & 40.5 & 80.6\\
		\cellcolor{mygray}{CDAN w/ SDAT} & &  \cellcolor{mygray}94.8 & \cellcolor{mygray}77.1 & \cellcolor{mygray}82.8 & \cellcolor{mygray}60.9 & \cellcolor{mygray}92.3 & \cellcolor{mygray}95.2 & \cellcolor{mygray}\textbf{91.7} & \cellcolor{mygray}\textbf{79.9} & \cellcolor{mygray}\underline{89.9} & \cellcolor{mygray}\underline{91.2} & \cellcolor{mygray}\textbf{88.5} & \cellcolor{mygray}41.2 & \cellcolor{mygray}82.1\\
		\cline{1-1}\cline{3-15}
		{CDAN+MCC} & &  \underline{95.0} & \underline{84.2} & 75.0 & 66.9 & \textbf{94.4} & \underline{97.1} & 90.5 & \underline{79.8} & 89.4 & 89.5 & 86.9 & \underline{54.4} & \underline{83.6}\\
		\cellcolor{mygray}{CDAN+MCC w/ SDAT} &&   \cellcolor{mygray}\textbf{95.8} & \cellcolor{mygray}\textbf{85.5} & \cellcolor{mygray}76.9 &\cellcolor{mygray}\underline{69.0} & \cellcolor{mygray}\underline{93.5} & \cellcolor{mygray}\textbf{97.4} & \cellcolor{mygray}88.5 & \cellcolor{mygray}78.2 & \cellcolor{mygray}\textbf{93.1} & \cellcolor{mygray}\textbf{91.6} & \cellcolor{mygray}86.3 & \cellcolor{mygray}\textbf{55.3} & \cellcolor{mygray}\textbf{84.3}\\ \hline\hline
		TVT \cite{yang2021tvt} &\parbox[t]{2mm}{\multirow{5}{*}{\rotatebox[origin=c]{90}{ViT}}} & 92.9 & 85.6 & 77.5 & 60.5 & 93.6 & 98.2 & 89.3 & 76.4 & 93.6 & 92.0 & \underline{91.7} & 55.7 & 83.9 \\
        \cline{1-1}\cline{3-15}
        \cline{1-1}\cline{3-15}
		{CDAN} & & 94.3 & 53.0 & 75.7 & 60.5 & 93.9 & 98.3 & \textbf{96.4} & 77.5 & 91.6 & 81.8 & 87.4 & 45.2 & 79.6\\
		\cellcolor{mygray}{CDAN w/ SDAT} & &  \cellcolor{mygray}96.3 & \cellcolor{mygray}80.7 & \cellcolor{mygray}74.5 & \cellcolor{mygray}65.4 & \cellcolor{mygray}95.8 & \cellcolor{mygray}\textbf{99.5} & \cellcolor{mygray}92.0 & \cellcolor{mygray}\underline{83.7} & \cellcolor{mygray}93.6 & \cellcolor{mygray}88.9 & \cellcolor{mygray}85.8 & \cellcolor{mygray}\underline{57.2} & \cellcolor{mygray}84.5\\
		\cline{1-1}\cline{3-15}
		{CDAN+MCC} &&   \underline{96.9} & \underline{89.8} & \underline{82.2} & \underline{74.0} & \underline{96.5} & \underline{98.5} & 95.0 & 81.5 & \underline{95.4}& \underline{92.5} & 91.4 & \textbf{58.5} & \underline{87.7}\\
		\cellcolor{mygray}{CDAN+MCC w/ SDAT} && \cellcolor{mygray}\textbf{98.4} & \cellcolor{mygray}\textbf{90.9} & \cellcolor{mygray}\textbf{85.4} & \cellcolor{mygray}\textbf{82.1} & \cellcolor{mygray}\textbf{98.5} & \cellcolor{mygray}97.6 & \cellcolor{mygray}\underline{96.3} & \cellcolor{mygray}\textbf{86.1} & \cellcolor{mygray}\textbf{96.2} & \cellcolor{mygray}\textbf{96.7} & \cellcolor{mygray}\textbf{92.9} & \cellcolor{mygray}56.8 & \cellcolor{mygray}\textbf{89.8} \\ 
		\hline
		
	\end{tabular}%
	\end{adjustbox}
\end{table*}

We use a ResNet-50 backbone for Office-Home experiments and a ResNet-101 backbone for VisDA-2017 and DomainNet experiments. Additionally, we also report the performance of ViT-B/16 \cite{dosovitskiy2020image} backbone on Office-Home and VisDA-2017 datasets. All the backbones are initialised with ImageNet pretrained weights. We use a learning rate of 0.01 with batch size 32 in all of our experiments with ResNet backbone. We tune $\rho$ value in SDAT for a particular dataset split and use the same value across domains. The $\rho$ value is set to 0.02 for the Office-Home experiments, 0.005 for the VisDA-2017 experiments and 0.05 for the DomainNet experiments. More details are present in supplementary (refer App. \ref{app:experimental_deets}).

\subsection{Results}
%

\textbf{Office-Home}: For the Office-Home dataset, we compare our method with other DA algorithms including DANN, SRDC, MDD and f-DAL in Table \ref{tab:officehome}.
We can see that the addition of SDAT improves the performance on both CDAN and CDAN+MCC across majority of source and target domain pairs. CDAN+MCC w/ SDAT achieves SOTA adversarial adaptation performance on the Office-Home dataset with ResNet-50 backbone. With ViT backbone, the increase in accuracy due to SDAT is more significant. This may be attributed to the observation that ViTs reach a sharp minima compared to ResNets \cite{chen2021vision}. CDAN + MCC w/ SDAT outperforms TVT \cite{yang2021tvt}, a recent ViT based DA method and achieves SOTA results on both Office-Home and VisDA datasets (Table \ref{table:visda_uda}). Compared to the proposed method, TVT is computationally more expensive to train, contains additional adaptation modules and utilizes a backbone that is pretrained on ImageNet-21k dataset (App. \ref{app:comp_tvt}). On the other hand, SDAT is conceptually simple and can be trained on a single 12 GB GPU with ViT (pretrained on ImageNet). 
With ViT backbone, SDAT particularly improves the performance of source-target pairs which have low accuracy on the target domain (Pr$\veryshortarrow$Cl, Rw$\veryshortarrow$Cl, Pr$\veryshortarrow$Ar, Ar$\veryshortarrow$Pr in Table \ref{tab:officehome}). 

\begin{table}[H]
\caption{Results on DomainNet with CDAN w/ SDAT. The number in the parenthesis refers to the increase in Acc. w.r.t. CDAN.}
\vskip 0.15in
\begin{adjustbox}{max width=\linewidth}
\begin{tabular}{c | c  c  c  c  c | c } 
 \hline
  \textbf{Target \textbf{($\rightarrow$)}} & \multirow{2}{*}{\textbf{clp}} & \multirow{2}{*}{\textbf{inf}} & \multirow{2}{*}{\textbf{pnt}} & \multirow{2}{*}{\textbf{real}} & \multirow{2}{*}{\textbf{skt}} & \multirow{2}{*}{\textbf{Avg}} \Tstrut\\  \textbf{Source ($\downarrow$)} &&&&&& \Bstrut\\
 \hline\hline
\multirow{2}{*}{\textbf{clp}} & - & 22.0  & 41.5 & 57.5 &  47.2 & 42.1 \Tstrut\\&& \textcolor{ForestGreen}{(+1.4)}& \textcolor{ForestGreen}{(+2.6)}&
 \textcolor{ForestGreen}{(+1.5)} & \textcolor{ForestGreen}{(+2.3)} & \textcolor{ForestGreen}{(+2.0)} \\ 

\multirow{2}{*}{\textbf{inf}} & 33.9 & - & 30.3 & 48.1 & 27.9 & 35.0 \Tstrut\\& \textcolor{ForestGreen}{(+2.3)}&&
  \textcolor{ForestGreen}{(+1.0)} & \textcolor{ForestGreen}{(+4.5)} & \textcolor{ForestGreen}{(1.5)} & \textcolor{ForestGreen}{(+2.3)} \\

\multirow{2}{*}{\textbf{pnt}} &  47.5 & 20.7 & - & 58.0 & 41.8 & 42.0 \Tstrut\\ & \textcolor{ForestGreen}{(+3.4)}& 
\textcolor{ForestGreen}{(+0.9)} 
 &&
 \textcolor{ForestGreen}{(+0.8)} &
\textcolor{ForestGreen}{(+1.8)} &
 \textcolor{ForestGreen}{(+1.7)} \\

\multirow{2}{*}{\textbf{real}} &  56.7 & 25.1 & 53.6 & - & 43.9 & 44.8 \Tstrut \\ & \textcolor{ForestGreen}{(+0.9)} &
 \textcolor{ForestGreen}{(+0.7)} &
 \textcolor{ForestGreen}{(+0.4)} &
 &
 \textcolor{ForestGreen}{(+1.6)} &
 \textcolor{ForestGreen}{(+1.0)} \\

\multirow{2}{*}{\textbf{skt}} &  58.7 & 21.8 & 48.1 & 57.1 & - & 46.4 \Tstrut \\ &  \textcolor{ForestGreen}{(+2.7)} & 
 \textcolor{ForestGreen}{(+1.1)} &
 \textcolor{ForestGreen}{(+2.8)} &
 \textcolor{ForestGreen}{(+2.2)} &
&
 \textcolor{ForestGreen}{(+2.2)} 
\\\hline

\multirow{2}{*}{\textbf{Avg}} &  49.2 & 22.4 & 43.4 & 55.2 & 40.2 & 42.1 \Tstrut \\ & \textcolor{ForestGreen}{(+2.3)} &
 \textcolor{ForestGreen}{(+1.0)} &
 \textcolor{ForestGreen}{(+1.7)} &  \textcolor{ForestGreen}{(+2.2)} &
 \textcolor{ForestGreen}{(+1.8)} &
 \textcolor{ForestGreen}{(+1.8)}

\end{tabular}
\label{table:domainnet}
 \end{adjustbox}
\end{table}

\textbf{DomainNet}: Table \ref{table:domainnet} shows the results on the large and challenging DomainNet dataset across five domains. The proposed method improves the performance of CDAN significantly across all source-target pairs. On specific source-target pairs like inf $\veryshortarrow$ real, the performance increase is 4.5\%. The overall performance of CDAN is improved by nearly 1.8\% which is significant considering the large number of classes and images present in DomainNet. The improved results are attributed to stabilized adversarial training through proposed SDAT which can be clearly seen in Fig. \ref{fig:hessian}\textcolor{mydarkblue}{D}.

\textbf{VisDA-2017}: CDAN w/ SDAT improves the overall performance of CDAN by more than 1.5\% with ResNet backbone and by 4.9\% with ViT backbone on VisDA-2017 dataset (Table \ref{table:visda_uda}). Also, on CDAN + MCC baseline SDAT leads to 2.1\% improvement over baseline, leading to SOTA accuracy of 89.8\% across classes.  Particularly, SDAT significantly improves the performance of underperforming minority classes like bicycle, car and truck.  Additional baselines and results are reported in supplementary (App. \ref{app:add_results}) {along with a discussion on statistical significance (App. \ref{app:stats_sig})}.


\section{Adaptation for object detection}
To further validate our approach's generality and extensibility, we did experiments on DA for object detection. We use the same setting as proposed in DA-Faster \citep{chen2018domaindafaster} with all domain adaptation components and use it as our baseline. We use the mean Average Precision at 0.5 IoU (mAP) as our evaluation metric. In object detection, the smoothness enhancement can be achieved in two ways (empirical comparison in Sec.~\ref{sec:od_results}) :

\textbf{a) DA-Faster w/ SDAT-Classification:} Smoothness enhancement for classification loss.

\textbf{b) DA-Faster w/ SDAT:} Smoothness enhancement for the combined classification and regression loss.
\subsection{Experimental Setup}
We evaluate our proposed approach on object detection on two different domain shifts:

\textbf{Pascal to Clipart} ($P \rightarrow C$): 
Pascal \citep{everingham2010pascal} is a real-world image dataset which consists images with 20 different object categories. Clipart \citep{inoue2018crossclipart} is a graphical image dataset with complex backgrounds and has the same 20 categories as Pascal. We use ResNet-101 \citep{he2016deep} backbone  for {Faster R-CNN} \citep{ren2015faster} following \citet{saito2019strong}. 

\textbf{Cityscapes to Foggy Cityscapes} ($C \rightarrow Fc$): Cityscapes \citep{cordts2016cityscapes} is a street scene dataset for driving, whose images are collected in clear weather. Foggy Cityscapes \citep{sakaridis2018semanticfoggycityscapes} dataset is synthesized from Cityscapes for the foggy weather. We use ResNet-50 \citep{he2016deep} as the backbone for  {Faster R-CNN} for experiments on this task. Both domains have the same 8 object categories with instance labels. 


\begin{figure*}[h]
    \centering
    \begin{subfigure}[b]{0.3\linewidth}
  \centering
  \includegraphics[width=\textwidth, height = 4.1cm]{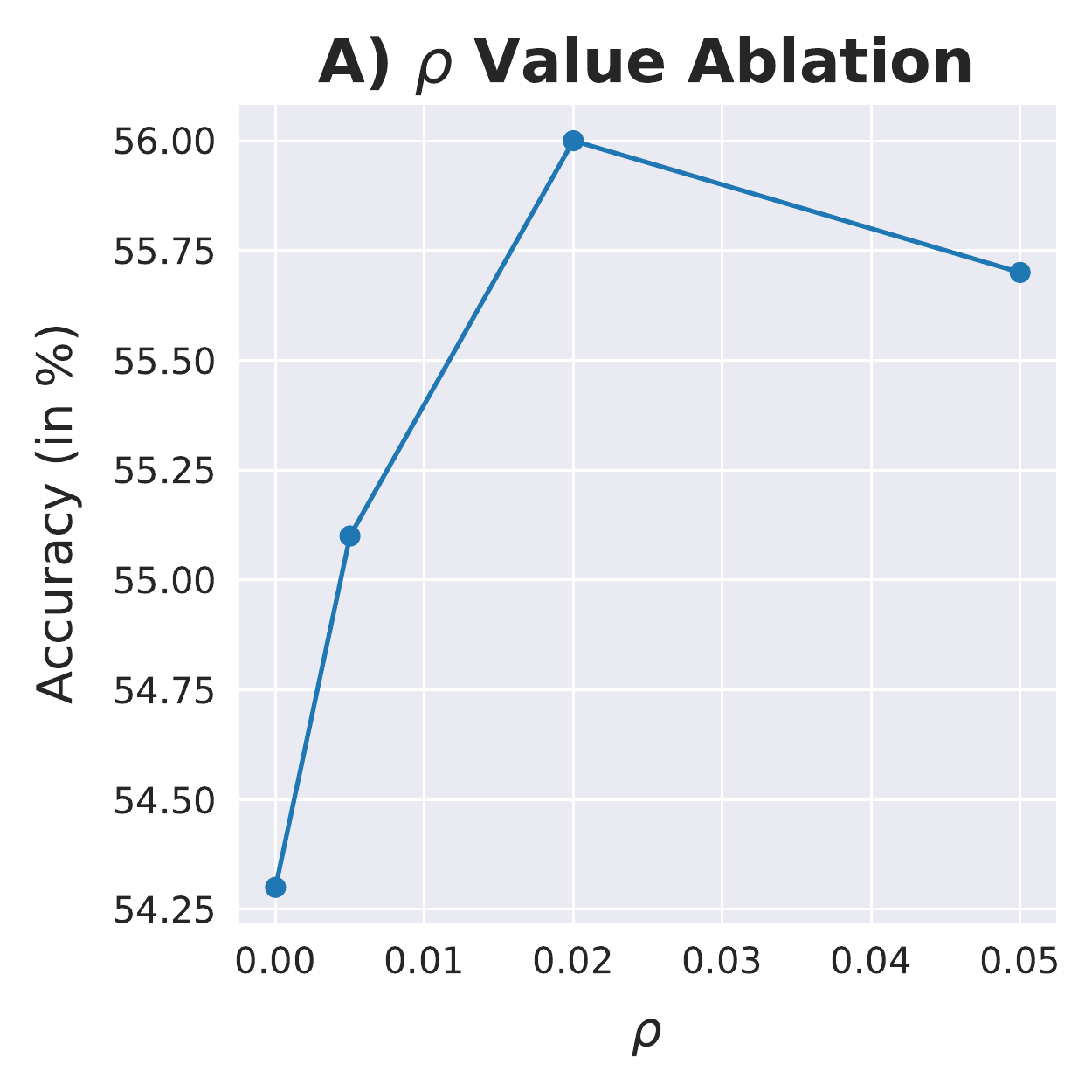}
  \label{fig:rho_ablation}
\end{subfigure}
\begin{subfigure}[b]{0.3\linewidth}
  \centering
  \includegraphics[width=\textwidth, height=4.1cm ]{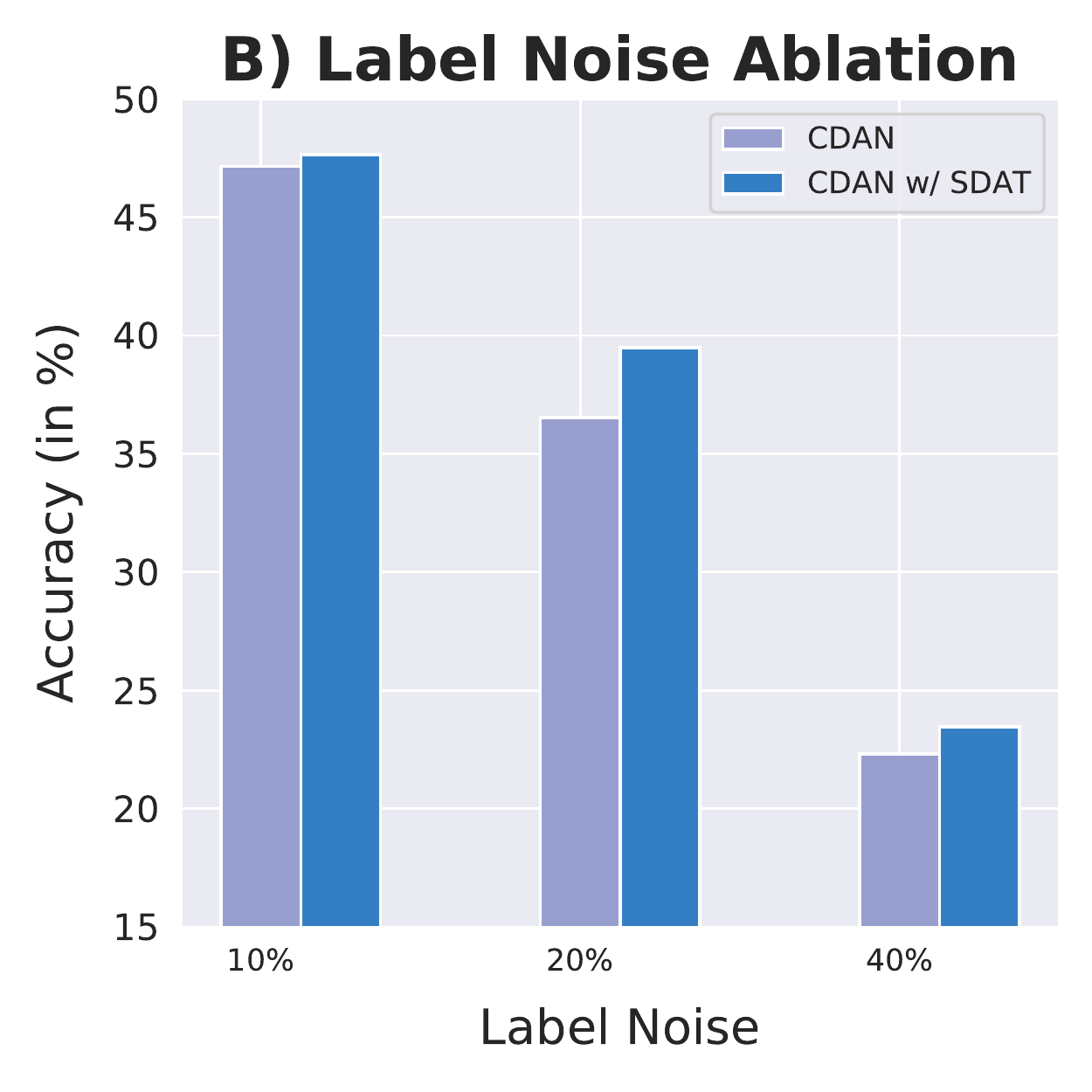}
  \label{fig:labelnoise}
\end{subfigure}
\begin{subfigure}[b]{0.3\linewidth}
  \centering
  \subfloat{
  \qquad
  \begin{adjustbox}{max width=\linewidth}
  \begin{tabular}{ccc}
 \multicolumn{3}{c}{\large \textbf{C) Smoothing Loss Components}}\\
  \hline
  \textbf{Smooth Task}&\textbf{Smooth Adv} &\textbf{Accuracy}\Tstrut\\ \hline \hline
  \textcolor{red}{\xmark} &\textcolor{red}{\xmark} & 54.3\\\hline
 \textcolor{red}{\xmark}&\textcolor{green}{\cmark}&51.0\\\hline
  \textcolor{green}{\cmark}&\textcolor{red}{\xmark}&\textbf{55.7}\\\hline  
  \textcolor{green}{\cmark}&\textcolor{green}{\cmark}&54.9\\\hline 
  \end{tabular} \end{adjustbox}}
  \vspace{5em}
\end{subfigure}\vspace{-2em}
\caption{{ Analysis of SDAT for Ar $\rightarrow$ Cl split of Office-Home dataset. \textbf{A)} Variation of target accuracy with maximum perturbation $\rho$. \textbf{B)} Comparison of accuracy of SDAT with DAT for different ratio of label noise. \textbf{C)} Comparison of accuracy when smoothing is applied to various loss components.  }}
\label{fig:smooth}
\end{figure*}

\subsection{Results}
\label{sec:od_results}
Table~\ref{tab:object_detection} shows the results on two domain shifts with varying batch size (\textit{bs}) during training. We find that only smoothing w.r.t. classification loss is much more effective (SDAT-Classification) than smoothing w.r.t. combined classification and regression loss (SDAT). On average, SDAT-Classification produces an mAP gain of {2.0}\% compared to SDAT, and {2.8}\% compared to DA-Faster baseline. 

\begin{table}[!h]
    \vspace{-0.5em}
    \caption{Results on DA for object detection. }
    \vskip 0.15in
    \centering
    \begin{adjustbox}{max width=\columnwidth}
    \begin{tabular}{l|ccc}
    \hline
    \multirow{2}{*}{Method} & $C \rightarrow Fc$  & $P \rightarrow C$ &  $P \rightarrow C$ \Tstrut\\ & \textit{ (bs=2)} & \textit{(bs=2)} &  \textit{ (bs=8)}\\
    \hline \hline
        DA-Faster \citep{chen2018domaindafaster} & 35.21 & 29.96 &  26.40 \\ 
        DA-Faster w/ SDAT & 37.47 & 29.04 &  27.64 \\ 
        DA-Faster w/ SDAT-Classification & \textbf{38.00} & \textbf{31.23} &  \textbf{30.74}
    \end{tabular}
    \end{adjustbox}
    \label{tab:object_detection}
    
 \end{table}

The proposed SDAT-Classification significantly outperforms DA-Faster baseline  and improves mAP by {1.3}\% on $P \rightarrow C$ and by {2.8}\% on $C \rightarrow Fc$. It is noteworthy that increase in performance of SDAT-Classification is consistent even after training with higher batch size $(bs=8)$ achieving improvement of {4.3}\% in mAP. Table \ref{tab:object_detection} also shows that even DA-Faster w/ SDAT (i.e. smoothing both classification and regression) outperforms DA-Faster by {0.9} \% on average. The improvement due to SDAT on adaptation for object detection shows the generality of SDAT across techniques that have some form of adversarial component present in the loss formulation. 

\section{Discussion}
\label{sec:discussion}
\textbf{How much smoothing is optimal?}: Figure \ref{fig:smooth}\textcolor{mydarkblue}{A} shows the ablation on $\rho$ value (higher $\rho$ value corresponds to more smoothing) on the Ar$\veryshortarrow$Cl from Office-Home dataset with CDAN backbone. The performance of the different values of $\rho$ is higher than the baseline with $\rho$ = 0. It can be seen that $\rho$ = 0.02 works best among all the different values and outperforms the baseline by at least 1.5\%. We found that the same $\rho$ value usually worked well across domains in a dataset, but different $\rho$ was optimal for different datasets. More details about optimum $\rho$ is in App. \ref{app:opt_rho}.

\textbf{Which components benefit from smooth optima?}:
Fig. \ref{fig:smooth}C shows the effect of introducing smoothness enhancement for different components in DAT.
For this we use SAM on a) task loss (SDAT) b) adversarial loss (SDAT w/ adv) c) both task and adversarial loss (SDAT-all). It can be seen that smoothing the adversarial loss component (SDAT w/ adv) reduces the performance to 51.0\%, which is significantly lower than even the DAT baseline.

\textbf{Is it Robust to Label Noise?}: In practical, real-world scenarios, the labeled datasets are often corrupted with some amount of label noise. Due to this, performing domain adaptation with such data is challenging. We find that smoother minima through SDAT lead to robust models which generalize well on the target domain. Figure \ref{fig:smooth}\textcolor{mydarkblue}{B} provides the comparison of SGD vs. SDAT for different percentages of label noise injected into training data.

\begin{table}
\caption{Performance comparison across different loss smoothing techniques on Office-Home. SDAT (with ResNet-50 backbone) outperforms other smoothing techniques in each case consistently.}
\vskip 0.15in
    \begin{adjustbox}{max width=\linewidth}
    \begin{tabular}{l|cccc|cc}
    \hline
    {Method} & Ar$\veryshortarrow$Cl & Cl$\veryshortarrow$Pr & Rw$\veryshortarrow$Cl &  Pr$\veryshortarrow$Cl & \multicolumn{2}{c}{{Avg}}\\
    \hline \hline
       {DAT} & 54.3 & 69.5 & 60.1 & 55.3 &  59.2 &\\
         
         {VAT} & 54.6 &70.7&  60.8 & 54.4 & 60.1 &  \textcolor{ForestGreen}{(+0.9)}\\
       {SWAD} & 54.6& 71.0 & 60.9 & 55.2 & 60.4 & \textcolor{ForestGreen}{(+1.2)} \\
        {LS}  & 53.6  & 71.6 & 59.9 &  53.4 & 59.6 & \textcolor{ForestGreen}{(+0.4)}  \\
        {SAM}  & 54.9  & 70.9 & 59.2 &  53.9 & 59.7 & \textcolor{ForestGreen}{(+0.5)}  \\
        {SDAT} &\textbf{56.0}&  \textbf{73.2} & \textbf{61.4} &\textbf{55.9} & \textbf{61.6} &\textcolor{ForestGreen}{(+2.4)}
  
    \end{tabular}
    \end{adjustbox}
    
    \label{tab:diff_smooth}
\end{table}

\textbf{Is it better than other smoothing techniques?}
To answer this question, we compare SDAT with different smoothing techniques originally proposed for ERM. We specifically compare our method against DAT, Label Smoothing (LS) \citep{szegedy2016rethinking}, SAM \cite{foret2021sharpnessaware} and VAT \citep{miyato2019vat}. \citet{stutz2021relating} recently showed that these techniques produce a  significantly smooth loss landscape in comparison to SGD. We also compare with a very recent SWAD \citep{cha2021swad} technique which is shown effective for domain generalization. For this, we run our experiments on four different splits of the Office-Home dataset and summarize our results in Table \ref{tab:diff_smooth}. We find that techniques for ERM (LS, SAM and VAT) fail to provide significant consistent gain in performance which also confirms the requirement of specific smoothing strategies for DAT. We find that SDAT even outperforms SWAD on average by a significant margin of 1.2\%. Additional details regarding the specific methods are provided in App. \ref{app:smooth_tech}.

\begin{table}[t!]
    \centering

    \setlength{\tabcolsep}{4pt}
 \caption{Analysing the effect of SDAT on DANN (on DomainNet and Office-Home with ResNet-50 and ViT-B/16 respectively) and GVB-GD (on Office-Home with ResNet-50).}
 \vskip 0.15in
    \begin{adjustbox}{max width=\columnwidth}
    \begin{tabular}{l|c|cccc}
    \hline
    {\textbf{DomainNet}} && \textbf{clp$\veryshortarrow$pnt} &  \textbf{skt$\veryshortarrow$pnt} & \textbf{inf$\veryshortarrow$real} & \textbf{skt$\veryshortarrow$clp} \\
    \hline \hline
    \textbf{DANN}&\parbox[t]{2mm}{\multirow{2}{*}{\rotatebox[origin=c]{90}{RN-50}}} & 37.5 & 43.9 & 37.7 & 53.8 \Bstrut\Tstrut{}\\
    
     \textbf{DANN w/ SDAT}&& 38.9 \textcolor{ForestGreen}{(+1.4)} & 45.7 \textcolor{ForestGreen}{(+1.8)} & 39.6 \textcolor{ForestGreen}{(+1.9)} & 56.3 \textcolor{ForestGreen}{(+2.5)} \Bstrut\Tstrut\\
     
     \hline\hline
    {\textbf{Office-Home}} && \textbf{Ar$\veryshortarrow$Cl} &  \textbf{Cl$\veryshortarrow$Pr} & \textbf{Rw$\veryshortarrow$Cl} & \textbf{Pr$\veryshortarrow$Cl} \Tstrut\\
    \hline 
        \textbf{GVB-GD}&\parbox[t]{2mm}{\multirow{4}{*}{\rotatebox[origin=c]{90}{RN-50}}} &56.4 & 74.2& 59.0& 55.9\Bstrut\Tstrut\\
    
     \textbf{GVB-GD w/ SDAT}&& 57.6 \textcolor{ForestGreen}{(+1.2)} & 75.4 \textcolor{ForestGreen}{(+1.2)}  & 60.0 \textcolor{ForestGreen}{(+1.0)}  &  56.6 \textcolor{ForestGreen}{(+0.7)}   \Bstrut\\
     
     \cline{1-1}
     \cline{3-6}

    \textbf{DANN}&\parbox[t]{2mm}{} & 52.6 & 65.4 & 60.4 & 52.3 \Bstrut\Tstrut\\
    
     \textbf{DANN w/ SDAT}&& 53.4 \textcolor{ForestGreen}{(+0.8)}  & 66.4 \textcolor{ForestGreen}{(+1.0)} & 61.3 \textcolor{ForestGreen}{(+0.9)} &  53.8 \textcolor{ForestGreen}{(+1.5)} \Bstrut\\
     
     \hline

    \textbf{DANN}&\parbox[t]{2mm}{\multirow{2}{*}{\rotatebox[origin=c]{90}{ViT}}} & 62.7&81.8&68.5&66.5 \Bstrut\Tstrut\\
    
     \textbf{DANN w/ SDAT}&& 68.0 \textcolor{ForestGreen}{(+5.3)}  & 82.4 \textcolor{ForestGreen}{(+0.6)} & 73.4 \textcolor{ForestGreen}{(+4.9)} &  68.8 \textcolor{ForestGreen}{(+2.3)} \Bstrut\\
     
     \hline\hline
     

    \end{tabular}
    
    \label{tab:dann_domainnet}
\end{adjustbox}
\end{table}
\textbf{Does it generalize well to other DA methods?}: We show results highlighting the effect of smoothness (SDAT) on DANN\cite{ganin2016domain} and GVB-GD \cite{cui2020gvb} in Table \ref{tab:dann_domainnet} with ResNet-50 and ViT backbone. DANN w/ SDAT leads to gain in accuracy on both DomainNet and Office-Home dataset. We observe a significant increase (average of +3.3\%) with DANN w/ SDAT (ViT backbone) on Office-Home dataset. SDAT leads to a decent gain in accuracy on Office-Home dataset with GVB-GD despite the fact that GVB-GD is a much stronger baseline than DANN. We primarily focused on CDAN and CDAN + MCC for the main results as we wanted to establish that SDAT can improve on even SOTA DAT methods for showing its effectiveness. Overall, we have shown results with four DA methods (CDAN, CDAN+MCC, DANN, GVB-GD) and this shows that SDAT is a generic method that can applied on top of any domain adversarial training based method to get better performance.

%

%


\section{Conclusion}
In this work, we analyze the curvature of loss surface of DAT used extensively for Unsupervised DA. We find that converging to a smooth minima w.r.t. {task} loss (i.e., empirical source risk) leads to stable DAT which results in better generalization on the target domain. We also theoretically and empirically show that smoothing adversarial components of loss lead to sub-optimal results, hence should be avoided in practice. We then introduce our practical and effective method, SDAT, which only increases the smoothness w.r.t. {task} loss, leading to better generalization on the target domain. SDAT leads to an effective increase for the latest methods for adversarial DA, achieving SOTA performance on benchmark datasets. One limitation of SDAT is presence of no automatic way of selecting $\rho$ (extent of smoothness) which is a good future direction to explore. 

\textbf{Acknowledgements}: This work was supported in part by  SERB-STAR Project (Project:STR/2020/000128), Govt. of India. Harsh Rangwani is supported by Prime Minister's Research Fellowship (PMRF). We are thankful for their support.

\bibliography{paper}
\bibliographystyle{icml2022}

\newpage
\appendix

\onecolumn

\section{Notation Table}
\label{app:notn_tab}
Table \ref{tab:notation} contains all the notations used in the paper and the proofs of theorems. 
\begin{table}[h!]
 \caption{The notations used in the paper and the corresponding meaning.}
    \centering
    \begin{tabular}{l|l}
    \hline
         \textbf{Notation} & \textbf{Meaning} \\
        \hline \hline
         $S$ & Labeled Source Data \\
         $T$ & Unlabelled Target Data \\
         $P_S$ (or $P_T$) & Source (or Target) Distribution \\
         $\mathcal{X}$ & Input space \\
         $\mathcal{Y}$ & Label space \\
         $y(\cdot)$ & Maps image to labels \\
         $h_\theta$ & Hypothesis function \\
         $R_S^l(h_\theta)$ (or $R_T^l(h_\theta)$) & Source (or Target) risk \\
         $\hat{R}_S^l(h_\theta)$ (or $\hat{R}_T^l(h_\theta)$) & Empirical Source (or Target) risk \\
         $\mathcal{H}$ & Hypothesis space \\
         $ D_{h_{\theta},\mathcal{H}}^\phi(P_S || P_T)$ & Discrepancy between two domains $P_S$ and $P_T$ \\
         $g_{\psi}$ & Feature extractor \\
         $f_{\Theta}$ & Classifier \\
         $\mathcal{D}_{\Phi}$ & Domain Discriminator \\
         $d_{S,T}^{\Phi}$ & Tractable Discrepancy Estimate\\
         $\nabla^2_{\theta} \hat{R}_S^l(h_{\theta})$ (or $H$) & Hessian of classification loss\\
         $Tr(H)$ & Trace of Hessian\\
         $\lambda_{max}$ & Maximum eigenvalue of Hessian\\
         $\epsilon$ & Perturbation \\
         $\rho$ & Maximum norm of $\epsilon$\\
         
    \end{tabular}
   
    \label{tab:notation}
\end{table}

\section{Connection of Discrepancy to $d_{S,T}^\Phi$ (Eq. \textcolor{red}{4}) in Main Paper}\label{app:discrepancy}
We refer reader to Appendix \textcolor{red}{C.2} of \citet{acuna2021f} for relation of $d_{S,T}^\Phi$. The $d_{S,T}^\Phi$ term defined in Eq. \textcolor{red}{4} given as:
\begin{equation}
    d_{S,T}^{\Phi} = \mathbb{E}_{x \sim P_S}[\log(\mathcal{D}_{\Phi}(g_{\psi}(x)))] + \mathbb{E}_{x \sim P_T}[ \log(1-\mathcal{D}_{\Phi}(g_{\psi}(x)))]
\end{equation}
The above term is exactly the Eq. \textcolor{red}{C.1} in \citet{acuna2021f} where they show that optimal $d_{S,T}^\Phi$ i.e.:
\begin{equation}
    \max_{\Phi} d_{S,T}^{\Phi} = D_{JS}(P_S||P_T) - 2\log(2)
\end{equation}
 Hence we can say from result in Eq. \textcolor{red}{4} is a consequence of Lemma 1 and Proposition 1 in \citep{acuna2021f}, assuming that $D_{\Phi}$ satisfies the constraints in Proposition 1.

\section{Proof of Theorems}\label{app:proof}
In this section we provide proofs for the theoretical results present in the paper:
\setcounter{theorem}{0}
\begin{theorem}[\textbf{Generalization bound}]
\label{th:gen-bound}
Suppose $l: \mathcal{Y} \times \mathcal{Y} \rightarrow [0,1] \subset dom \; \phi^*$. Let $h^*$ be the ideal joint classifier with error $\lambda^* = R_S^l(h^*) +  R_T^l(h^*)$. We have the following relation between source and target risk:
\begin{equation}
    R_{T}^l(h_{\theta}) \leq R_{S}^{l}(h_{\theta}) + D_{h_{\theta}, \mathcal{H}}^{\phi} (P_S || P_T) + \lambda^*
\end{equation}
\end{theorem}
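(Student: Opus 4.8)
The plan is to reproduce the classical two–triangle–inequality argument for domain-adaptation bounds (\citet{ben2010theory}) in the $f$-divergence setting of \citet{acuna2021f}. Throughout I write $R_D^l(h,h') := \mathbb{E}_{x\sim D}[l(h(x),h'(x))]$ for the pseudo-risk between two hypotheses under a distribution $D$ (so $R_D^l(h) = R_D^l(h,y)$), and I will invoke, as is standard in this line of work and as holds for the $0/1$ loss on $\mathcal{Y}$, that $l$ obeys the triangle inequality $l(a,c)\le l(a,b)+l(b,c)$.

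First I would introduce the ideal joint hypothesis $h^*$ on the target side: applying the triangle inequality pointwise and taking expectations under $P_T$ gives $R_T^l(h_\theta)\le R_T^l(h^*) + R_T^l(h_\theta,h^*)$. Symmetrically, a second application under $P_S$ gives $R_S^l(h_\theta,h^*)\le R_S^l(h_\theta)+R_S^l(h^*)$. Hence it remains only to establish the ``transfer'' inequality $R_T^l(h_\theta,h^*)\le R_S^l(h_\theta,h^*) + D^\phi_{h_\theta,\mathcal{H}}(P_S\,||\,P_T)$; chaining the three bounds and collecting $R_S^l(h^*)+R_T^l(h^*)=\lambda^*$ then yields exactly the claimed inequality.

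The transfer inequality is the heart of the argument and the step I expect to be the main obstacle. Since $h^*\in\mathcal{H}$, it is admissible in the supremum defining $D^\phi_{h_\theta,\mathcal{H}}$; but, unlike the symmetric $\mathcal H\Delta\mathcal H$-distance, the discrepancy applies the Fenchel conjugate $\phi^*$ only inside the target expectation, so one cannot simply plug in $h'=h^*$ and read off the bound. The route is to combine (i) the elementary duality fact $\phi^*(t)\ge t$ for all $t$, obtained by evaluating the conjugate at $u=1$ together with $\phi(1)=0$, (ii) the hypothesis that $l$ is valued in $[0,1]\subset dom\,\phi^*$, and (iii) the variational form of the $f$-divergence, precisely as in \citet{acuna2021f}, to control $R_T^l(h_\theta,h^*)-R_S^l(h_\theta,h^*)$ by $D^\phi_{h_\theta,\mathcal{H}}(P_S\,||\,P_T)$. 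Once this lemma is in hand, the rest of the proof — the two triangle inequalities and folding the $h^*$ terms into $\lambda^*$ — is routine bookkeeping; note also that the statement concerns population risks only, so no concentration/Rademacher-complexity step is needed.
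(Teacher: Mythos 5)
The paper does not actually prove this theorem: its ``proof'' is a one-line deferral to Theorem 2 in Appendix B of \citet{acuna2021f}. Your decomposition --- two triangle inequalities around the ideal joint hypothesis $h^*$, a transfer inequality moving $R^l(h_\theta,h^*)$ from $P_T$ to $P_S$ at the cost of the discrepancy, and folding $R_S^l(h^*)+R_T^l(h^*)$ into $\lambda^*$ --- is exactly the structure of that cited proof, and the bookkeeping steps are fine (granting, as you do explicitly and as the source does, that $l$ obeys the triangle inequality; note this hypothesis is not stated in the theorem as printed here). You are also right that no concentration step is needed since only population risks appear.

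The gap is in the one step you yourself call the heart of the argument: the transfer inequality $R_T^l(h_\theta,h^*)\le R_S^l(h_\theta,h^*)+D^\phi_{h_\theta,\mathcal H}(P_S\|P_T)$ is asserted with a list of ingredients but never derived, and with the discrepancy as defined in this paper those ingredients cannot close it. Plugging $h'=h^*$ into the supremum gives $D\ge \mathbb{E}_{P_S}[l(h_\theta,h^*)]-\mathbb{E}_{P_T}[\phi^*(l(h_\theta,h^*))]$, and your fact (i), $\phi^*(t)\ge t$, makes the subtracted term \emph{at least} $R_T^l(h_\theta,h^*)$; so this chain only lower-bounds $D$ by something that is $\le R_S^l(h_\theta,h^*)-R_T^l(h_\theta,h^*)$, which is the wrong sign pattern for concluding $R_T^l(h_\theta,h^*)-R_S^l(h_\theta,h^*)\le D$. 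The resolution is that the definition in \citet{acuna2021f} carries an absolute value around the supremand, which this paper's restatement drops: with $D=\sup_{h'}\bigl|\mathbb{E}_{P_S}[l(h_\theta,h')]-\mathbb{E}_{P_T}[\phi^*(l(h_\theta,h'))]\bigr|$ one \emph{does} simply plug in $h'=h^*$, take the reversed branch of the absolute value, and use $\phi^*(t)\ge t$ to get $D\ge \mathbb{E}_{P_T}[\phi^*(l(h_\theta,h^*))]-\mathbb{E}_{P_S}[l(h_\theta,h^*)]\ge R_T^l(h_\theta,h^*)-R_S^l(h_\theta,h^*)$. You should either restore the absolute value in the definition or write out this two-line computation; as it stands the crucial lemma is flagged as an obstacle but not actually cleared, and against the paper's literal definition it is not obtainable by the route you describe.
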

\begin{proof}
We refer the reader to Theorem 2 in Appendix \textcolor{red}{B} of \citet{acuna2021f} for the detailed proof the theorem.
\end{proof}
We now introduce a Lemma for smooth functions which we will use in the proofs subsequently:
\setcounter{theorem}{0}
\begin{lemma}
\label{lem:lsmooth}
For an L-smooth function $f(w)$ the following holds where $w^*$ is the optimal minima:
$$
f(w) - f(w^*) \geq \frac{1}{2L}  || \nabla f(w) ||^2
$$
\end{lemma}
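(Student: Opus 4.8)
The plan is to prove this via the standard ``descent lemma'' consequence of $L$-smoothness, evaluated at a single well-chosen gradient step. Recall that $f$ being $L$-smooth means $\nabla f$ is $L$-Lipschitz, i.e. $\|\nabla f(x) - \nabla f(y)\| \leq L\|x-y\|$ for all $x,y$. The first step would be to establish the quadratic upper bound $f(y) \leq f(w) + \langle \nabla f(w), y-w\rangle + \frac{L}{2}\|y-w\|^2$ for arbitrary $y$. This follows by writing $f(y) - f(w) = \int_0^1 \langle \nabla f(w + t(y-w)), y-w\rangle\, dt$ via the fundamental theorem of calculus, subtracting off the linear term $\langle \nabla f(w), y-w\rangle$, and bounding the remainder $\int_0^1 \langle \nabla f(w+t(y-w)) - \nabla f(w), y-w\rangle\, dt$ using Cauchy--Schwarz together with the Lipschitz estimate $\|\nabla f(w+t(y-w)) - \nabla f(w)\| \leq Lt\|y-w\|$; integrating $Lt\|y-w\|^2$ over $t\in[0,1]$ produces the $\frac{L}{2}\|y-w\|^2$ term.

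The second step is to instantiate this bound at the point $y = w - \frac{1}{L}\nabla f(w)$, namely a single gradient-descent step from $w$ with step size $1/L$. Substituting $y - w = -\frac{1}{L}\nabla f(w)$ gives $f(y) \leq f(w) - \frac{1}{L}\|\nabla f(w)\|^2 + \frac{L}{2}\cdot\frac{1}{L^2}\|\nabla f(w)\|^2$, and the two $\|\nabla f(w)\|^2$ contributions combine to leave exactly $f(y) \leq f(w) - \frac{1}{2L}\|\nabla f(w)\|^2$.

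Finally, since $w^*$ is a global minimizer we have $f(w^*) \leq f(y)$, and chaining this with the previous inequality yields $f(w^*) \leq f(w) - \frac{1}{2L}\|\nabla f(w)\|^2$, which rearranges to the stated claim. Note that no convexity of $f$ is required: optimality of $w^*$ enters only through the single inequality $f(w^*) \leq f(y)$ at the one probe point. The only mildly delicate ingredient is the integral form of the descent lemma, and everything after it is substitution and rearrangement, so I do not anticipate a substantive obstacle; the chief point to state carefully is that the step size $1/L$ is precisely what balances the linear decrease against the quadratic overshoot to give the tight constant $\frac{1}{2L}$.
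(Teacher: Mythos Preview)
Your proposal is correct and follows essentially the same route as the paper: establish the quadratic upper bound from $L$-smoothness, evaluate it at the point $w - \tfrac{1}{L}\nabla f(w)$, and then invoke $f(w^*) \le f(y)$ to conclude. The only cosmetic difference is that the paper arrives at the probe point by explicitly minimizing the quadratic bound in $v$ (setting its gradient to zero) rather than framing it as a gradient step with step size $1/L$, and the paper simply asserts the descent lemma whereas you spell out its integral derivation; the substance is identical.
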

\begin{proof}
The L-smooth function by definition satisfies the following:
$$
f(w^*) \leq f(v) \leq f(w) + \nabla f(w) (v - w) + \frac{L}{2}||v - w||^2
$$
Now we minimize the upper bound wrt $v$ to get a tight bound on $f(w^*)$.
$$
D(v) = f(w) + \nabla f(w) (v - w) + \frac{L}{2}||v - w||^2
$$
after doing $\nabla_{v} D(v) = 0$ we get:
$$
v = w - \frac{1}{L} \nabla f(w) 
$$
By substituting the value of $v$ in the upper bound we get:
$$
f(w^*) \leq f(w) - \frac{1}{2L} || \nabla f(w) ||^2
$$
Hence rearranging the above term gives the desired result:
$$
f(w) - f(w^*) \geq \frac{1}{2L}  || \nabla f(w) ||^2
$$

\end{proof}

\setcounter{theorem}{1}
\begin{theorem}
\label{th:suboptimality}
For a given classifier $h_{\theta}$ and one step of (steepest) gradient ascent i.e. $\Phi' = \Phi + \eta (\nabla d_{S,T}^{\Phi}/||\nabla d_{S,T}^{\Phi}||)$ and $\Phi'' = \Phi + \eta (\nabla d_{S,T}^{\Phi}|_{\Phi + \hat{\epsilon}(\Phi)}/||\nabla d_{S,T}^{\Phi}|_{\Phi + \hat{\epsilon}(\Phi)}||)$ for maximizing
\begin{equation}
\begin{split}
         d_{S,T}^{\Phi'} - d_{S,T}^{\Phi''} \leq  \eta(1 - \cos \alpha)\sqrt{2L(d^*_{S,T} - d^{\Phi}_{S,T}) }
\end{split}
\end{equation}
where $\alpha$ is the angle between $\nabla d_{S,T}^{\Phi}$ and $\nabla d_{S,T}^{\Phi}|_{\Phi + \hat{\epsilon}(\Phi)}$. 
\end{theorem}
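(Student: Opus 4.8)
The plan is to compare two one-step steepest-ascent updates that start from the same point $\Phi$: the honest update $\Phi'$, which moves along $\nabla d_{S,T}^\Phi$, and the "smoothed" update $\Phi''$, which moves along the perturbed gradient $\nabla d_{S,T}^\Phi|_{\Phi+\hat\epsilon(\Phi)}$. Both take a step of fixed length $\eta$ (the gradients are normalized), so the two new iterates differ only in direction. First I would write the difference $d_{S,T}^{\Phi'} - d_{S,T}^{\Phi''}$ and bound it using $L$-smoothness of $\mathcal D_\Phi$ (hence of $d_{S,T}^\Phi$ as a function of $\Phi$). The cleanest route is to use the standard $L$-smooth lower/upper quadratic bounds: for the honest step, $d_{S,T}^{\Phi'} \ge d_{S,T}^{\Phi} + \eta\,\|\nabla d_{S,T}^\Phi\| - \tfrac{L\eta^2}{2}$ is the wrong direction, so instead I would bound $d_{S,T}^{\Phi'} - d_{S,T}^{\Phi''}$ directly via a first-order expansion around $\Phi$ plus the quadratic remainder, which gives
\[
d_{S,T}^{\Phi'} - d_{S,T}^{\Phi''} \le \nabla d_{S,T}^\Phi \cdot (\Phi' - \Phi'') + \tfrac{L}{2}\big(\|\Phi'-\Phi\|^2 + \|\Phi''-\Phi\|^2\big)\text{-type terms},
\]
but the sharper and intended bound comes from noting that $\Phi'-\Phi$ and $\Phi''-\Phi$ both have norm $\eta$ and the angle between them is $\alpha$.

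The key computation is the inner product $\nabla d_{S,T}^\Phi \cdot (\Phi' - \Phi'')$. Writing $u = \nabla d_{S,T}^\Phi/\|\nabla d_{S,T}^\Phi\|$ and $v = \nabla d_{S,T}^\Phi|_{\Phi+\hat\epsilon}/\|\cdot\|$, we have $\Phi'-\Phi'' = \eta(u-v)$, and $\nabla d_{S,T}^\Phi \cdot (u-v) = \|\nabla d_{S,T}^\Phi\|(1 - \cos\alpha)$, since $u\cdot u = 1$ and $u\cdot v = \cos\alpha$. After the smoothness estimate this leaves a term of the form $\eta(1-\cos\alpha)\|\nabla d_{S,T}^\Phi\|$ (the second-order terms in $\eta$ are dominated/absorbed, or one argues at the level of the directional derivative and drops the $O(\eta^2)$ curvature corrections since $\eta$ is a small constant). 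The final move is to eliminate $\|\nabla d_{S,T}^\Phi\|$ in favour of the optimality gap: apply Lemma~\ref{lem:lsmooth} to the $L$-smooth function $d_{S,T}^\Phi$ (viewing $-d_{S,T}$ as the function being minimized, with minimizer giving value $d_{S,T}^*$), which yields $\|\nabla d_{S,T}^\Phi\|^2 \le 2L(d_{S,T}^* - d_{S,T}^\Phi)$, i.e. $\|\nabla d_{S,T}^\Phi\| \le \sqrt{2L(d_{S,T}^* - d_{S,T}^\Phi)}$. Substituting gives exactly
\[
d_{S,T}^{\Phi'} - d_{S,T}^{\Phi''} \le \eta(1-\cos\alpha)\sqrt{2L(d_{S,T}^* - d_{S,T}^\Phi)}.
\]

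I expect the main obstacle to be handling the second-order (curvature) terms cleanly: a naive Taylor expansion of $d_{S,T}$ around $\Phi$ produces $O(L\eta^2)$ remainders for both $\Phi'$ and $\Phi''$, and one must argue these either cancel, are absorbed into the stated bound, or are negligible because $\eta$ is taken to be a small constant (as the hypotheses explicitly state). A careful version would instead expand both $d_{S,T}^{\Phi'}$ and $d_{S,T}^{\Phi''}$ to first order and show the difference of remainders is itself $o(\eta)$ or bounded by the same $(1-\cos\alpha)$ factor; alternatively one works with the exact $L$-smooth inequalities, using the upper bound for $d_{S,T}^{\Phi'}$ and the lower bound for $d_{S,T}^{\Phi''}$ so the curvature terms enter with favourable signs. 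The secondary subtlety is justifying that Lemma~\ref{lem:lsmooth} applies — it requires that $d_{S,T}^\Phi$ attains its maximum $d_{S,T}^*$ and that $L$-smoothness holds globally in $\Phi$, which is the stated assumption on $\mathcal D_\Phi$; I would state this explicitly before invoking the lemma. Everything else is routine algebra with the normalized-gradient geometry.
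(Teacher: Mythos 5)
Your proposal is correct and follows essentially the same route as the paper's proof: a first-order expansion around $\Phi$ for small $\eta$ yielding the difference $\eta\,\|\nabla d_{S,T}^{\Phi}\|(1-\cos\alpha)$, followed by Lemma~\ref{lem:lsmooth} applied to $-d_{S,T}^{\Phi}$ to replace $\|\nabla d_{S,T}^{\Phi}\|$ with $\sqrt{2L(d_{S,T}^{*}-d_{S,T}^{\Phi})}$. The paper simply invokes the linear approximation and drops the second-order terms, so your extra care about the $O(\eta^2)$ remainder goes slightly beyond what the paper itself does.
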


\begin{proof}[Proof of Theorem \ref{th:suboptimality}]
We assume that the function is $L$-smooth (the assumption of L-smoothness is the basis of many results in non-convex optimization \citep{carmon2020lower}) in terms of input $x$. As for a fixed $h_{\theta}$ as we use a reverse gradient procedure for measuring the discrepancy, only one step analysis is shown. This is because only a single step of gradient is used for estimating discrepancy $d^{\Phi}_{S,T}$ i.e. one step of each min and max optimization is performed alternatively for optimization. After this the $h_{\theta}$ is updated to decrease the discrepancy. Any differential function can be approximated by the linear approximation in case of small $\eta$:
\begin{equation}
    d^{\Phi + \eta v}_{S,T} \approx d^{\Phi}_{S,T} + \eta \nabla {{d^{\Phi^{\mathbf{T}}}_{S,T}}}v
\end{equation}
The dot product between two vectors can be written as the following function of norms and angle $\theta$ between those:
\begin{equation}
    \nabla d^{\Phi^{\mathbf{T}}}_{S,T}v = || \nabla d^{\Phi}_{S,T} || \; ||v|| \; cos \theta 
\end{equation}
The steepest value will be achieved when $\cos \theta = 1$ which is actually $v =  \frac{\nabla d^{\Phi}_{S,T}(x)}{||\nabla d^{\Phi}_{S,T}(x)||}$. Now we compare the descent in another direction $v_2 = \frac{\nabla d^{\Phi}_{S,T}|_{w + \epsilon(w)}}{||\nabla d^{\Phi}_{S,T}|_{w + \epsilon(w)}||}$ from the gradient descent. The difference in value can be characterized by:
\begin{equation}
     d^{\Phi + \eta v}_{S,T} - d^{\Phi + \eta v_2}_{S,T}= \eta ||\nabla d^{\Phi}_{S,T}||(1 - \cos \alpha) 
\end{equation}
As $\alpha$ is an angle between $\nabla d^{\Phi}_{S,T}|_{w + \epsilon(w)} \; (v_2)$ and $\nabla d^{\Phi}_{S,T}(X) \; (v)$. The suboptimality is dependent on the gradient magnitude. We use the following result to show that when optimality gap $ d^{*}_{S,T} - d^{\Phi}_{S,T}(x)$ is large the difference between two directions is also large.

For an L-smooth function  the following holds according to Lemma \ref{lem:lsmooth}:
$$
f(w) - f(w^*) \geq \frac{1}{2L}  || \nabla f(w) ||^2
$$

As we are performing gradient ascent $f(w) = -d^{\Phi}_{s,t}$, we get the following result:
$$
 (d^{*}_{S,T} - d^{\Phi}_{S,T}) \geq   \frac{1}{2L}|| \nabla d^{\Phi}_{S,T}(x) ||^2
$$
$$
2L (d^{*}_{S,T} - d^{\Phi}_{S,T} ) \geq \frac{(d^{\Phi + \eta v_2}_{S,T} - d^{\Phi + \eta v}_{S,T})^2}{(\eta(1 - \cos \alpha))^2}
$$
$$
\eta(1 - \cos \alpha)\sqrt{2L(d^*_{S,T} - d^{\Phi}_{S,T}) } \geq {(d^{\Phi'}_{S,T} - d^{\Phi''}_{S,T})}
$$
This shows that difference in value of by taking a step in direction of gradient $v$ vs taking the step in a different direction $v_2$ is upper bounded by the $ d^{*}_{S,T} - d^{\Phi}_{S,T}(x)$, hence if we are far from minima the difference can be potentially large. As we are only doing one step of gradient ascent $ d^{*}_{S,T} - d^{\Phi}_{S,T}$ will be potentially large, hence can lead to suboptimal measure of discrepancy. 
\end{proof}

\begin{theorem}
\label{th:th3}
Suppose l is the loss function, we denote $\lambda^* := R_S^l(h^*) + R_T^l(h^*)$ and let $h^*$ be the ideal joint hypothesis:
\begin{equation}
         R_{T}^l(h_{\theta}) \leq \; \max_{||\epsilon|| \leq \rho}\hat{R}_S^l(h_{\theta + \epsilon}) + D_{h_{\theta}, H}^{\phi}(P_S||P_T)  \\ + \gamma(||\theta||_2^2/\rho^2) + \lambda^* .
\end{equation}
where $\gamma: \mathbb{R}^{+} \rightarrow \mathbb{R}^{+}$ is a strictly increasing function.
\end{theorem}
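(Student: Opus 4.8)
The plan is to combine the original domain-adaptation generalization bound (Theorem~\ref{th:gen-bound}) with a PAC-Bayes style argument that relates the ordinary source risk $R_S^l(h_\theta)$ to the sharpness-aware source risk $\max_{\|\epsilon\|\le\rho}\hat R_S^l(h_{\theta+\epsilon})$. First I would invoke Theorem~\ref{th:gen-bound} to get $R_T^l(h_\theta)\le R_S^l(h_\theta)+D_{h_\theta,\mathcal H}^\phi(P_S\|P_T)+\lambda^*$, which already produces the discrepancy term and $\lambda^*$ exactly as they appear in the claimed bound. It then remains to control $R_S^l(h_\theta)$ by the empirical sharpness-aware risk plus a term depending on $\|\theta\|_2^2/\rho^2$.

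For that second step I would follow the SAM analysis of \citet{foret2021sharpnessaware} essentially verbatim. The idea is: by a standard PAC-Bayes bound, with high probability over the draw of the source sample, the expected-over-perturbation population risk $\mathbb E_{\epsilon\sim\mathcal N(0,\sigma^2 I)}[R_S^l(h_{\theta+\epsilon})]$ is bounded by $\mathbb E_{\epsilon}[\hat R_S^l(h_{\theta+\epsilon})]$ plus a complexity term of the form $\sqrt{\tfrac{\mathrm{KL}(\mathcal N(\theta,\sigma^2 I)\|\text{prior})+\log(n/\delta)}{2(n-1)}}$; choosing the prior variance appropriately and bounding the KL divergence makes this complexity term a function of $\|\theta\|_2^2/\sigma^2$. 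Since $R_S^l(h_\theta)\le \mathbb E_\epsilon[R_S^l(h_{\theta+\epsilon})]+(\text{a term that can be absorbed})$ and $\mathbb E_\epsilon[\hat R_S^l(h_{\theta+\epsilon})]\le \max_{\|\epsilon\|\le\rho}\hat R_S^l(h_{\theta+\epsilon})$ once $\rho$ is tied to $\sigma$ up to constants and dimension factors, one arrives at
\begin{equation}
R_S^l(h_\theta)\le \max_{\|\epsilon\|\le\rho}\hat R_S^l(h_{\theta+\epsilon})+\gamma\!\left(\|\theta\|_2^2/\rho^2\right),
\end{equation}
where $\gamma$ collects the monotone complexity term and is strictly increasing in its argument. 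Substituting this into the bound from the first step yields exactly the statement of Theorem~\ref{th:th3}.

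The routine parts are the manipulation of the PAC-Bayes inequality and the bookkeeping that turns the KL/complexity expression into a clean strictly-increasing $\gamma(\|\theta\|_2^2/\rho^2)$; these are handled by citing \citet{foret2021sharpnessaware}. The main obstacle, and the only place that needs care, is the interface between the two ingredients: Theorem~\ref{th:gen-bound} is stated for a fixed hypothesis with population source risk, whereas the SAM bound is a high-probability statement over the sample and is naturally phrased with Gaussian perturbations rather than the worst-case $\ell_2$-ball perturbation $\max_{\|\epsilon\|\le\rho}$. I would address this by noting (as in \citet{foret2021sharpnessaware}) that the expected Gaussian perturbation risk is upper bounded by the worst-case ball perturbation risk up to the dimension-dependent constants already folded into $\gamma$, and by treating the $\lambda^*$ and discrepancy terms as deterministic so the high-probability event only concerns the source-risk term. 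I should also check that the hypothesis $l:\mathcal Y\times\mathcal Y\to[0,1]$ assumption, which Theorem~\ref{th:gen-bound} needs, is compatible with the boundedness required by the PAC-Bayes step; both use losses in $[0,1]$, so this is consistent. Assembling these observations gives the four-term bound with the sharpness-aware empirical risk in place of the source risk, an additive $\gamma(\|\theta\|_2^2/\rho^2)$ regularization term, the unchanged discrepancy $D_{h_\theta,\mathcal H}^\phi(P_S\|P_T)$, and the irreducible $\lambda^*$.
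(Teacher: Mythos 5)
Your proposal is correct and follows essentially the same route as the paper's own proof: the paper likewise chains the PAC-Bayes sharpness-aware bound of \citet{foret2021sharpnessaware} (Theorem 2 there), which gives $R_S^l(h_\theta)\le\max_{\|\epsilon\|\le\rho}\hat R_S^l(h_{\theta+\epsilon})+\gamma(\|\theta\|_2^2/\rho^2)$, with the f-divergence domain-adaptation bound of Theorem~\ref{th:gen-bound}, and simply adds the two inequalities. Your additional care about the Gaussian-versus-ball perturbation interface and the boundedness of $l$ is a reasonable elaboration of what the paper delegates to the cited SAM result.
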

\begin{proof}[Proof of Theorem \ref{th:th3}: ]
 In this case we make use of Theorem 2 in the paper sharpness aware minimization \citep{foret2021sharpnessaware} which states the following:
The source risk $R_S(h)$ is bounded using the following PAC-Bayes generalization bound for any $\rho$ with probability $1 - \delta$:
\begin{equation}
\begin{split}
      R_S(h_{\theta}) \leq \max_{||\epsilon|| \leq \rho} \hat{R}_S(h_{\theta})  +\sqrt{\frac{k\log\left(1+\frac{\|\boldsymbol{\theta}\|_2^2}{\rho^2}\left(1+\sqrt{\frac{\log(n)}{k}}\right)^2\right) + 4\log\frac{n}{\delta} + \tilde{O}(1)}{n-1}}  
\end{split}
\end{equation}
here $n$ is the training set size used for calculation of empirical risk $\hat{R}_S(h)$, $k$ is the number of parameters and $||\theta||_2$ is the norm of the weight parameters. The second term in equation can be abbreviated as $\gamma(||\theta||_2)$. Hence,
\begin{equation}
    R_S(h_{\theta}) \leq     \max_{||\epsilon|| \leq \rho} \hat{R}_S(h_{\theta}) + \gamma(||\theta||_2^2/\rho^2) 
\end{equation}
From the generalization bound for domain adaptation for any f-divergence  \citep{acuna2021f} (Theorem 2) we have the following result.
\begin{equation}
R_T^l(h_{\theta}) \leq R_{S}^l(h_{\theta}) + \mathcal{D}_{h_{\theta}, H}^{\phi}(P_S||P_T) + \lambda^*
\end{equation}
Combining the above two inequalities gives us the required result we wanted to prove i.e.
\begin{equation}
         R_{T}^l(h_\theta) \leq \; \tilde{R}_S^l(h_{\theta}) + D_{h_{\theta}, H}^{\phi}(P_S||P_T)  + \gamma(||\theta||_2^2/\rho^2) + \lambda^* .
\end{equation}

\end{proof}
\begin{table*}[!tb]

    \begin{minipage}{.4\linewidth}
      \caption{Architecture used for feature classifier and Domain classifier. $C$ is the number of classes. Both classifiers will take input from feature generator ($g_\theta$).}
      \vskip 0.15in
      \label{tab:clf}
      \centering
    \begin{tabular}{c||c}
    \hline
      Layer  &  Output Shape\\
      \hline
      \multicolumn{2}{c}{\textbf{Feature Classifier ($f_{\Theta}$)}} \\
      \hline
        - & Bottleneck Dimension \\
        Linear & $C$ \\
        \hline
        \multicolumn{2}{c}{\textbf{Domain Classifier} ($\mathcal{D}_\Phi$)} \\
        \hline
        - & Bottleneck Dimension \\
        Linear & 1024 \\
        BatchNorm & 1024 \\
        ReLU & 1024 \\
        Linear & 1024 \\
        BatchNorm & 1024 \\
        ReLU & 1024 \\
        Linear & 1 \\
       
    \end{tabular}
    \end{minipage}%
     \hspace{2em}
    \begin{minipage}{.4\linewidth}
      \centering
        \caption{ Accuracy (\%) on {VisDA-2017} (ResNet-101 and ViT backbone).}
        \vskip 0.15in
        \label{table:visda}
        \begin{tabular}{l|c|c}
        \hline
		\textbf{Method}& &\textbf{Synthetic $\rightarrow$ Real} \Bstrut\\
		\hline \hline
		DANN \citep{ganin2016domain} &\parbox[t]{2mm}{\multirow{7}{*}{\rotatebox[origin=c]{90}{ResNet-101}}}& 57.4 \\
		MCD \citep{saito2018maximum}  && 71.4\\
		CDAN* \citep{long2018conditional} && 73.7
		\Bstrut 
		\\
		\cline{1-1}\cline{3-3}
		CDAN && 76.6
		\\
		CDAN w/ SDAT && 78.3\Bstrut
		\\
		\cline{1-1}\cline{3-3}
		CDAN+MCC \citep{jin2020minimum} && \underline{80.4}
		\\
		CDAN+MCC w/ SDAT && \textbf{81.2}
		\\
		\hline \hline
		CDAN &\parbox[t]{2mm}{\multirow{4}{*}{\rotatebox[origin=c]{90}{ViT}}}& 76.7
		\\
		CDAN w/ SDAT && 81.1\Bstrut
		\\
		\cline{1-1}\cline{3-3}
		CDAN+MCC \citep{jin2020minimum} && \underline{85.1}
		\\
		CDAN+MCC w/ SDAT && \textbf{87.8}		
	\end{tabular}%
    \end{minipage} 
\end{table*}

\section{Hessian Analysis}\label{app:hess}
We use the PyHessian library \citep{yao2020pyhessian} to calculate the Hessian eigenvalues and the Hessian Eigen Spectral Density. For Office-Home experiments, all the calculations are performed using 50\% of the source data at the last checkpoint. For DomainNet experiments (Fig. \ref{fig:hessian}D), we use 10\% of the source data for Hessian calculation. The Maximum Eigenvalue is calculated at the checkpoint with the best validation accuracy ($\lambda_{\max}^{best}$) and the last checkpoint ($\lambda_{\max}^{last}$). Only the source class loss is used for calculating to clearly illustrate our point. {The partition was selected randomly, and the same partition was used across all the runs. We also made sure to use the same environment to run all the Hessian experiments. A subset of the data was used for Hessian calculation mainly because the hessian calculation is computationally expensive \citep{yao2020pyhessian}. This is commonly done in hessian experiments. For example, \citep{chen2021vision} (refer Appendix D) uses 10\% of training data for Hessian Eigenvalue calculation}. 
The PyHessian library uses Lanczos algorithm \citep{ghorbani2019investigation} for calculating the Eigen Spectral density of the Hessian and uses the Hutchinson method to calculate the trace of the Hessian efficiently. 

\section{Smoothness of Discriminator in SNGAN}
\label{app:gan_exp}
\begin{figure*}[!t]
  \centering
  \includegraphics[scale=0.5]{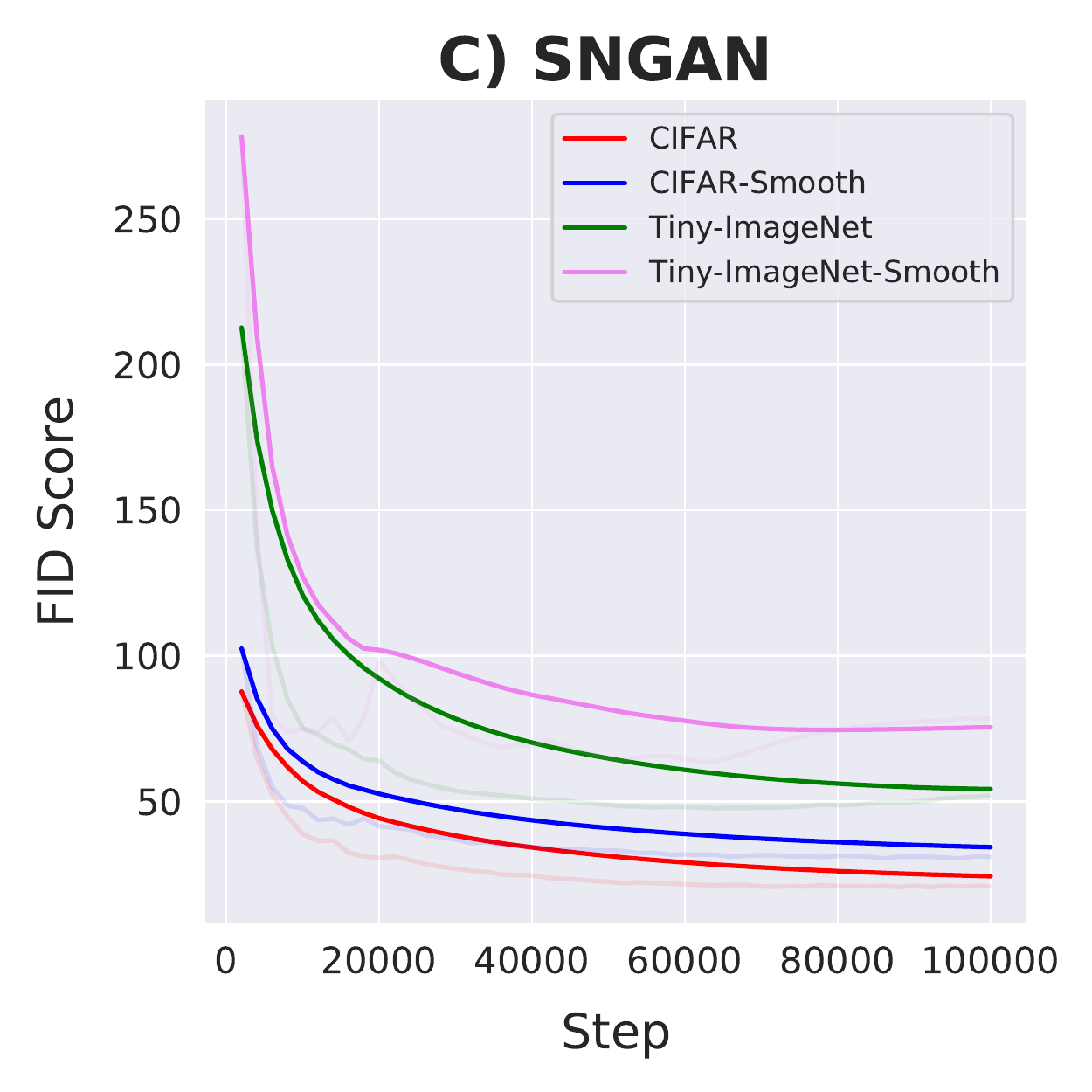}
  \caption{SNGAN performance on different datasets, smoothing discriminator in GAN also leads to inferior GAN performance (higher FID) across both datasets.}
  \label{fig:gan}
\end{figure*}
For further establishing the generality of sub-optimality of smooth adversarial loss, we also perform experiments on Spectral Normalised Generative Adversarial Networks (SNGAN) \citep{miyato2018spectral}. In case of SNGAN we also find that smoothing discriminator through SAM leads to suboptimal performance (higher FID) as in Fig. \ref{fig:gan}. The above evidences indicates that \textit{smoothing the adversarial loss leads to sub-optimality}, hence it should not be done in practice. 
We use the same configuration for SNGAN as described in PyTorch\-StudioGAN \citep{kang2020ContraGAN}  for both CIFAR10 \citep{krizhevsky2009learning} and TinyImageNet \footnote{https://www.kaggle.com/c/tiny-imagenet} with batch size of 256 in both cases. We then smooth the discriminator while discriminator is trained by using the same formulation as in Eq. \ref{eq:smooth_disc}. We find that smoothing discriminator leads to higher (suboptimal) Fr\'{e}chet Inception Distance in case of GANs as well, shown in Fig. \ref{fig:gan}.

\section{Experimental Details}\label{app:experimental_deets}
\label{exp_details}
\subsection{Image Classification}
\textbf{Office-Home}: For CDAN methods with ResNet-50 backbone, we train the models using mini-batch stochastic gradient descent (SGD) with a batch size of 32 and a learning rate of 0.01. The learning rate schedule is the same as \citep{ganin2016domain}. We train it for a total of 30 epochs with 1000 iterations per epoch. The momentum parameter in SGD is set to 0.9 and a weight decay of 0.001 is used. For CDAN+MCC experiments with ResNet-50 backbone, we use a temperature parameter \citep{jin2020minimum} of 2.5. The bottleneck dimension for the features is set to 2048.

\textbf{VisDA-2017}:
We use a ResNet-101 backbone initialized with ImageNet weights for VisDA-2017 experiments. Center Crop is also used as an augmentation during training. We use a bottleneck dimension of 256 for both algorithms.
For CDAN runs, we train the model for 30 epochs with same optimizer setting as that of Office-Home.
For CDAN+MCC runs, we use a temperature parameter of 3.0 and a learning rate of 0.002. 

\textbf{DomainNet}:
We use a ResNet-101 backbone initialized with ImageNet weights for DomainNet experiments.  We run all the experiments for 30 epochs with 2500 iterations per epoch. The other parameters are the same as that of Office-Home.

Additional experiments with a ViT backbone are performed on Office-Home and VisDA-2017 datasets. We use the ViT-B/16 architecture pretrained on ImageNet-1k, the implementation of which is borrowed from \cite{rw2019timm}. For all CDAN runs on Office-Home and VisDA, we use an initial learning rate of 0.01, whereas for CDAN+MCC runs, the initial learning rate of 0.002 is used. $\rho$ value of 0.02 is shared across all the splits on both the datasets for the ViT backbone. A batch-size of 24 is used for Office-Home and 32 for VisDA-2017. 

To show the effectiveness of SDAT fairly and promote reproducibility, we run with and without SDAT on the same GPU and environment and with the same seed. All the above experiments were run on Nvidia V100, RTX 2080 and RTX A5000 GPUs. We used Wandb \citep{wandb} to track our experiments. We will be releasing the code to promote reproducible research.
\subsubsection{Architecture of Domain Discriminator}
One of the major reasons for increased accuracy in Office-Home baseline CDAN compared to reported numbers in the paper is the architecture of domain classifier. The main difference is the use of batch normalization layer in domain classifier, which was done in the library \citep{dalib}. Table \ref{tab:clf} shows the architecture of the feature classifier and domain classifier.

\subsection{Additional Implementations Details for DA for Object detection}
In SDAT, we modified the loss function present in \citet{chen2018domaindafaster} by adding classification loss smoothing, i.e. smoothing classification loss of RPN and ROI, used in {Faster R-CNN} \citep{ren2015faster}, by training with source data. Similarly, we applied smoothing to regression loss and found it to be less effective. We implemented SDAT for object detection using Detectron2 \citep{wu2019detectron2}. The training is done via SGD with momentum 0.9 for 70k iterations with the learning rate of 0.001, and then dropped to 0.0001 after 50k iterations. We split the target data into train and validation sets and report the best mAP on validation data. 
We fixed $\rho$ to 0.15 for object detection experiments.

\section{Additional Results}\label{app:add_results}
\textbf{VisDA-2017}:
Table \ref{table:visda} shows the overall accuracy on the VisDA-2017 with ResNet-101 and ViT backbone. The accuracy reported in this table is the overall accuracy of the dataset, whereas the accuracy reported in the Table \textcolor{red}{5} of the main paper refers to the mean of the accuracy across classes. CDAN w/ SDAT outperforms CDAN by 1.7\% with ResNet-101 and by 4.4\% with ViT backbone, showing the effectiveness of SDAT in large scale Synthetic $\rightarrow$ Real shifts. With CDAN+MCC as the DA method, adding SDAT improves the performance of the method to 81.2\% with ResNet-101 backbone.\\
\textbf{DomainNet}:
Table \ref{tab:domainnet} shows the results of the proposed method on DomainNet across five domains. We compare our results with ADDA and MCD and show that CDAN achieves much higher performance on DomainNet compared to other techniques. It can be seen that CDAN w/ SDAT further improves the overall accuracy on DomainNet by 1.8\%. 
\\
We have shown results with three different domain adaptation algorithms namely DANN \citep{ganin2015unsupervised}, CDAN \citep{long2018conditional} and CDAN+MCC \citep{jin2020minimum}. SDAT has shown to improve the performance of all the three DA methods. This shows that SDAT is a generic method that can applied on top of any domain adversarial training based method to get better performance.

 \begin{table*}[tb]

    \centering
    \caption{Accuracy(\%) on \textbf{DomainNet} dataset for unsupervised domain adaptation (ResNet-101) across five distinct domains. The row indicates the source domain and the columns indicate the target domain.}
\vskip 0.15in
    \footnotesize
    \begin{adjustbox}{max width=\textwidth} 
    \begin{tabular}{c|c c c c c c ||c|c c c c c c  }
    \hline
    ADDA & clp & inf & pnt  & rel & skt & Avg & MCD &    clp & inf & pnt  & rel & skt & Avg\\
    \hline
    clp & - & 11.2 & 24.1 & 41.9 & 30.7 & 27.0 & clp & - & 14.2 & 26.1 & 45.0 & 33.8 & 29.8\\
    inf & 19.1 & - & 16.4 & 26.9 & 14.6 & 19.2 & inf & 23.6 & - & 21.2 & 36.7 & 18.0 & 24.9\\
    pnt & 31.2 & 9.5 & - & 39.1 & 25.4 & 26.3  & pnt & 34.4 & 14.8 & - & 50.5 & 28.4 & 32.0\\
    rel & 39.5 & 14.5 & 29.1 & - & 25.7 & 	27.2 & rel & 42.6 & 19.6 & 42.6 & - & 29.3 & 33.5\\
    skt & 35.3 & 8.9 & 25.2 & 37.6 & - & 	26.7 & skt & 41.2 & 13.7 & 27.6 & 34.8 & - & 29.3 \\
    Avg & 31.3 & 11.0 & 23.7 & 36.4 & 24.1 &25.3 & Avg & 35.4 & 	15.6 & 	29.4 & 41.7 & 27.4  & 29.9\\
    \hline
    \textbf{CDAN} &   clp & inf & pnt  & rel & skt & Avg & \textbf{CDAN w/ SDAT}&    clp & inf & pnt  & rel & skt & Avg  \\\hline
    clp &     -  & 20.6& 38.9 &  56.0 & 44.9 & 40.1& clp &     -  & 22.0 & 41.5 &   57.5 & 47.2 & 42.1 \\
    inf &    31.5 &  -  & 29.3   & 43.6 & 26.3 & 32.7  & inf &    33.9 &  -  & 30.3 &  48.1 & 27.9 & 35.0\\
    pnt &    44.1 & 19.8 &   - & 57.2 & 39.9 & 40.2 & pnt &    47.5 & 20.7 &   - &   58.0 & 41.8 & 42.0\\
    rel &    55.8& 24.4 & 53.2  &  -  & 42.3 & 43.9  & rel &    56.7 & 25.1 & 53.6 &  -  & 43.9 & 44.8\\
     skt &    56.0 & 20.7 & 45.3 &   54.9&  -  & 44.2 & skt &    58.7 & 21.8 & 48.1  & 57.1 &  -  & 46.4 \\
     Avg&    46.9 & 21.4 & 41.7    & 52.9 & 38.3 & 40.2 & Avg &   49.2 & 22.4 & 43.4 &  55.2 & 40.2 & \textbf{42.1}\\

    \hline

    \hline 
\end{tabular}
\end{adjustbox}
    \label{tab:domainnet}

\end{table*}

\textbf{Source-only}: Source-only setting measures the performance of a model trained only on source domain directly on unseen target data with no further target adaptation. We compare the performance of models with and without smoothing the loss landscape for source-only experiments on VisDA-2017 (Table \ref{table:visda_uda_vit_erm}) and Office-Home (Table \ref{tab:officehome_erm}) datasets with a ViT backbone pretrained on ImageNet. Initial learning rate of 0.001 and 0.002 is used for Office-Home and VisDA-2017 dataset, respectively. $\rho$ value of 0.002 is used for ERM w/SAM run for both the datasets. It can be seen that ERM w/ SAM does not directly lead to better performance on the target domain.

\begin{table*}[hbt!]
\caption{Accuracy (\%) of source-only model trained with SGD (ERM) and SAM (ERM w/SAM) on VisDA-2017 for unsupervised DA with ViT-B/16 backbone }
\vskip 0.15in
	\centering
	\label{table:visda_uda_vit_erm}
	\begin{adjustbox}{max width=\textwidth}

	\begin{tabular}{l|cccccccccccc|c}
        \hline
		\textbf{Method} & \textbf{plane} & \textbf{bcybl} & \textbf{bus} & \textbf{car} & \textbf{horse} & \textbf{knife} & \textbf{mcyle} & \textbf{persn} & \textbf{plant} & \textbf{sktb} & \textbf{train} & \textbf{truck} & \textbf{mean} \Tstrut\Bstrut\\
		\hline \hline

		{ERM} & 98.4  & 58.3 & 80.2 & 60.7 & 89.3 & 53.6 & 88.4 & 40.8 & 62.8 & 87.4 & 94.7 & 19.1 & 69.5\\
		ERM w/ SAM  & 98.6 & 33.1 & 80.0 & 76.9 & 90.1 & 35.9 & 94.2 & 22.8 & 77.8 & 89.0 & 95.3 & 11.6 & 67.1 \\
 
		\hline\hline
		
	\end{tabular}%
	\end{adjustbox}
	
\end{table*}

\begin{table*}[hbt!]
  \centering     
  \caption{Accuracy (\%) of source-only model trained with SGD (ERM) and SAM (ERM w/SAM) on Office-Home for unsupervised DA with ViT-B/16 backbone}  
  \vskip 0.15in
  \resizebox{\textwidth}{!}{%
  \begin{tabular}{l|cccccccccccc|c}
    \hline
    \textbf{Method} & \textbf{Ar$\veryshortarrow$Cl} & \textbf{Ar$\veryshortarrow$Pr} & \textbf{Ar$\veryshortarrow$Rw} & \textbf{Cl$\veryshortarrow$Ar} & \textbf{Cl$\veryshortarrow$Pr} & \textbf{Cl$\veryshortarrow$Rw} & \textbf{Pr$\veryshortarrow$Ar} & \textbf{Pr$\veryshortarrow$Cl} & \textbf{Pr$\veryshortarrow$Rw} & \textbf{Rw$\veryshortarrow$Ar} & \textbf{Rw$\veryshortarrow$Cl} & \textbf{Rw$\veryshortarrow$Pr} & \textbf{Avg} \Tstrut
    \Bstrut\\\hline \hline
	{ERM} & 51.5 & 80.8 & 86.0 & 74.8 & 80.2 & 82.6 & 71.8 & 51.0  & 85.5 & 79.5 & 55.0 & 87.9 & 73.9 \\
	{ERM w/ SAM} & 50.8 & 79.5 & 85.2 & 72.6 & 78.4 & 81.4 & 71.8 & 49.6 & 85.2 & 79.0 & 52.8 & 87.2 & 72.8 \\
\hline \hline
  \end{tabular}%
      }

  \label{tab:officehome_erm}
\end{table*}

\section{Different Smoothing Techniques}\label{app:smooth_tech}
\textbf{Stochastic Weight Averaging (SWA)} \citep{izmailov2018averaging}: SWA is a widely popular technique to reach a flatter minima. The idea behind SWA is that averaging weights across epochs leads to better generalization because it reaches a wider optima. The recently proposed SWA-Densely (SWAD) \citep{cha2021swad} takes this a step further and proposes to average the weights across iterations instead of epochs. SWAD shows improved performance on domain generalization tasks.
We average every 400 iterations in the SWA instead of averaging per epochs. We tried averaging across 800 iterations as well and the performance was comparable. 
\\
\textbf{Difference between SWAD and SDAT}:
As SWAD performs Weight Averaging, it is not possible to selectively smooth only minimization (ERM) components with SWAD, as gradients for both the adversarial loss and ERM update weights of the backbone. Due to this, SWAD cannot reach optimal performance for DAT. For verifying this, we also compare our method by implementing SWAD for Domain Adaptation on four different source-target pairs of Office-Home dataset in Table \ref{tab:diff_smooth}. On average, SDAT (Ours) gets 61.6\% (+2.4\% over DAT) accuracy in comparison to 60.4\% (+1.2\% over DAT) for SWAD.
\\
\textbf{Virtual Adversarial Training (VAT)} \citep{miyato2019vat}: VAT is regularization technique which makes use of adversarial perturbations. Adversarial perturbations are created using Algo. 1 present in  \citep{miyato2019vat}. We added VAT by optimizing the following objective: 
\begin{equation}
\min_{\theta}  \mathbb{E}_{x \sim P_S}[\underset{||r|| \leq \epsilon}{\max} D_{KL}(h_\theta(x)|| h_{\theta}(x+r))]
\end{equation}
This value acts as a negative measure of smoothness and minimizing this will make the model smooth. For training, we set hyperparameters $\epsilon$ to 15.0, 
$\xi$ to 1e-6, and $\alpha$ as 0.1.\\
\textbf{Label Smoothing (LS)} \citep{szegedy2016rethinking}: The idea behind label smoothing is to have a distribution over outputs instead of one hot vectors. Assuming that there are $k$ classes, the correct class gets a probability of 1 - $\alpha$ and the other classes gets a probability of $\alpha/(k-1)$ . \citep{stutz2021relating} mention that label smoothing tends to avoid sharper minima during training. We use a smoothing parameter ($\alpha$) of 0.1 in all the experiments in Table \ref{tab:diff_smooth_appendix}. We also show results with smoothing parameter of 0.2 and observe comparable performance. We observe that label smoothing slightly improves the performance over DAT. \\
\textbf{SAM} \cite{foret2021sharpnessaware}: In this method, we apply SAM directly to both the task loss and adversarial loss with $\rho$ = 0.05 as suggested in the paper. It can be seen that the performance improvement of SAM over DAT is minimal, thus indicating the need for SDAT.

\begin{table*}[h!]
    \centering
    \caption{Different Smoothing techniques. We refer to \citep{stutz2021relating} to compare the proposed SDAT with other techniques to show the efficacy of SDAT. It can be seen that SDAT outperforms the other smoothing techniques significantly. Other smoothing techniques improve upon the performance of DAT showing that smoothing is indeed necessary for better adaptation.}
    \vskip 0.15in
     \begin{adjustbox}{max width=\columnwidth}
    \begin{tabular}{l|cccc}
    \hline
    {Method} & Ar$\veryshortarrow$Cl &  Cl$\veryshortarrow$Pr & Rw$\veryshortarrow$Cl &  Pr$\veryshortarrow$Cl \\
    \hline \hline
    {DAT} & 54.3 & 69.5 & 60.1 & 55.3\\
    
     {VAT} & 54.6 & 70.7 & 60.8 & 54.4 \\
      {SWAD-400} & 54.6 & 71.0 & 60.9 & 55.2 \\
       {LS ($\alpha$ = 0.1)} & 53.6 & 71.6 & 59.9 & 53.4\\
       {LS ($\alpha$ = 0.2)} & 53.5 & 71.2 & 60.5 & 53.2\\
       {SDAT} & \textbf{55.9} & \textbf{73.2} & \textbf{61.4} & \textbf{55.9} \\
    \end{tabular}
    
    \label{tab:diff_smooth_appendix}
    \end{adjustbox}
\end{table*}
\section{
{Optimum $\rho$ value}}
\label{app:opt_rho}
Table \ref{tab:rho_domainnet} and \ref{tab:rho_visda} show that $\rho$ = 0.02 works robustly across experiments providing an increase in performance (although it does not achieve the best result each time) and can be used as a rule of thumb.
\begin{table*}[h!]
    \centering
    \caption{{$\rho$ value for DomainNet}}
    \vskip 0.15in
     \begin{adjustbox}{max width=\columnwidth}
    \begin{tabular}{l|ccc}
    \hline
    {Split} & DAT &  SDAT($\rho$ = 0.02) & SDAT - Reported ($\rho$ = 0.05) \\
    \hline \hline
    \textbf{clp$\veryshortarrow$skt } & 44.9 & 46.7 & 47.2\\
    
     \textbf{skt$\veryshortarrow$clp} & 56.0 & 59.0 & 58.7 \\
      \textbf{skt$\veryshortarrow$pnt} & 45.3 & 47.8 & 48.1 \\
       \textbf{inf$\veryshortarrow$rel} & 43.6 & 47.3 & 48.1\\
    \end{tabular}
    
    \label{tab:rho_domainnet}
    \end{adjustbox}
\end{table*}
\begin{table*}[h!]
    \centering
    \caption{{$\rho$ value for VisDA-2017 Synthetic $\veryshortarrow$ Real} }
    \vskip 0.15in
     \begin{adjustbox}{max width=\columnwidth}
    \begin{tabular}{l|ccc}
    \hline
    {Backbone} & DAT &  SDAT ($\rho$ = 0.02) & SDAT Reported($\rho$ = 0.005)\\
    \hline \hline
    {CDAN} & 76.6 & 78.2 & 78.3\\
     {CDAN+MCC} & 80.4 & 80.9 & 81.2\\
    \end{tabular}
    
    \label{tab:rho_visda}
    \end{adjustbox}
\end{table*}

\section{
{Comparison with TVT}}
\label{app:comp_tvt}

TVT \cite{yang2021tvt} is a recent work that reports performance higher than the other contemporary unsupervised DA methods on the publicly available datasets. This method uses a ViT backbone and focuses on exploiting the intrinsic properties of ViT to achieve better results on domain adaptation. 
Like us, TVT uses an adversarial method for adaptation to perform well on the unseen target data. On the contrary, they introduce additional modules within their architecture. The Transferability Adaption Module (TAM) is introduced to assist the ViT backbone in capturing both discriminative and transferable features. Additionally, the Discriminative Clustering Module (DCM) is used to perform discriminative clustering to achieve diverse and clustered features. 

Even without using external modules to promote the transferability and discriminability in the features learned using ViT, we are able to report higher numbers than TVT. This advocates our efforts to show the efficacy of converging to a smooth minima w.r.t. task loss to achieve better domain alignment. Moreover, TVT uses a batch size of 64 to train the network, causing a memory requirement of more than 35GB for efficient training, which is significantly higher than the 11.5GB memory used by our method on a batch-size of 24 for Office-Home to obtain better results. This allows our method to be trained using a standard 12GB GPU, removing the need of an expensive hardware. The ViT backbone used by TVT is pretrained on a much larger ImageNet-21k dataset, whereas we use the backbone pretrained on ImageNet-1k dataset.   

\section{
{Significance and Stability of Empirical Results}}
\label{app:stats_sig}
To establish the empirical results' soundness and reliability, we run a subset of experiments (representative of each different source domain) on DomainNet. The experiments are repeated with three different random seeds leading to overall 36 experimental runs (18 for CDAN w/ SDAT (Our proposed method) and 18 for CDAN baseline). 
Due to the large computational complexity of each experiment ($\approx$20 hrs each), we have presented results for multiple trials on a subset of splits. We find (in Table \ref{tab:seed_exp}) that our method can outperform the baseline average in each of the 6 cases, establishing significant improvement across all splits. However, we found that due to the large size of DomainNet, the average increase (across three different trials) is close to the reported increase in all cases (Table \ref{tab:seed_exp}), which also serves as evidence of the soundness of reported results (for remaining splits). We also present additional statistics below for establishing soundness.

If the proposed method is unstable, there is a large variance in the validation accuracy across epochs. For analyzing the stability of SDAT, we show the validation accuracy plots in Figure \ref{fig:plotss}  on six different splits of DomainNet. We find that our proposed SDAT improves over baselines consistently across epochs without overlap in confidence intervals in later epochs. This also provides evidence for the authenticity and stability of our results.  We also find that in some cases, like when using the Infographic domain as a source, our proposed SDAT also significantly \textit{stabilizes the training} (Figure \ref{fig:plotss} \textit{inf} $\veryshortarrow$ \textit{clp}).

One of the other ways of reporting results reliably proposed by the concurrent work \citep{berthelot2021adamatch} (Section 4.4) involves reporting the median of accuracy across the last few checkpoints. The median is a measure of central tendency which ignores outlier results. We also report the median of validation accuracy for our method \emph{across all splits} for the last five epochs. It is observed that we observe similar gains for median accuracy (in Table \ref{table:domainnet_median}) as reported in Table \ref{table:domainnet}.
\begin{figure*}[t!]
    \centering
    \begin{subfigure}[b]{0.3\linewidth}
  \centering
  \includegraphics[width=\textwidth]{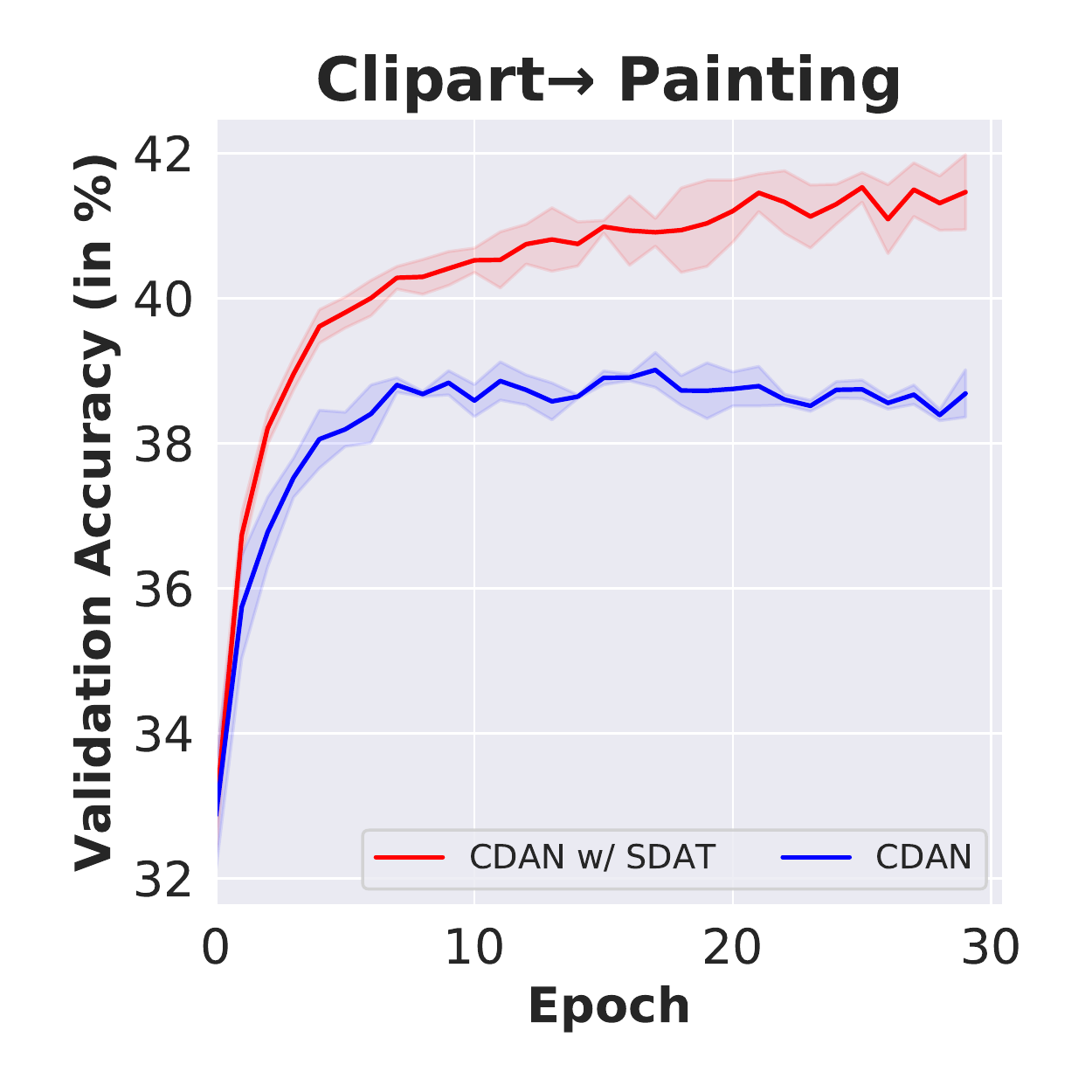}
  \label{fig:c2p}
\end{subfigure}
\begin{subfigure}[b]{0.3\linewidth}
  \centering
  \includegraphics[width=\textwidth]{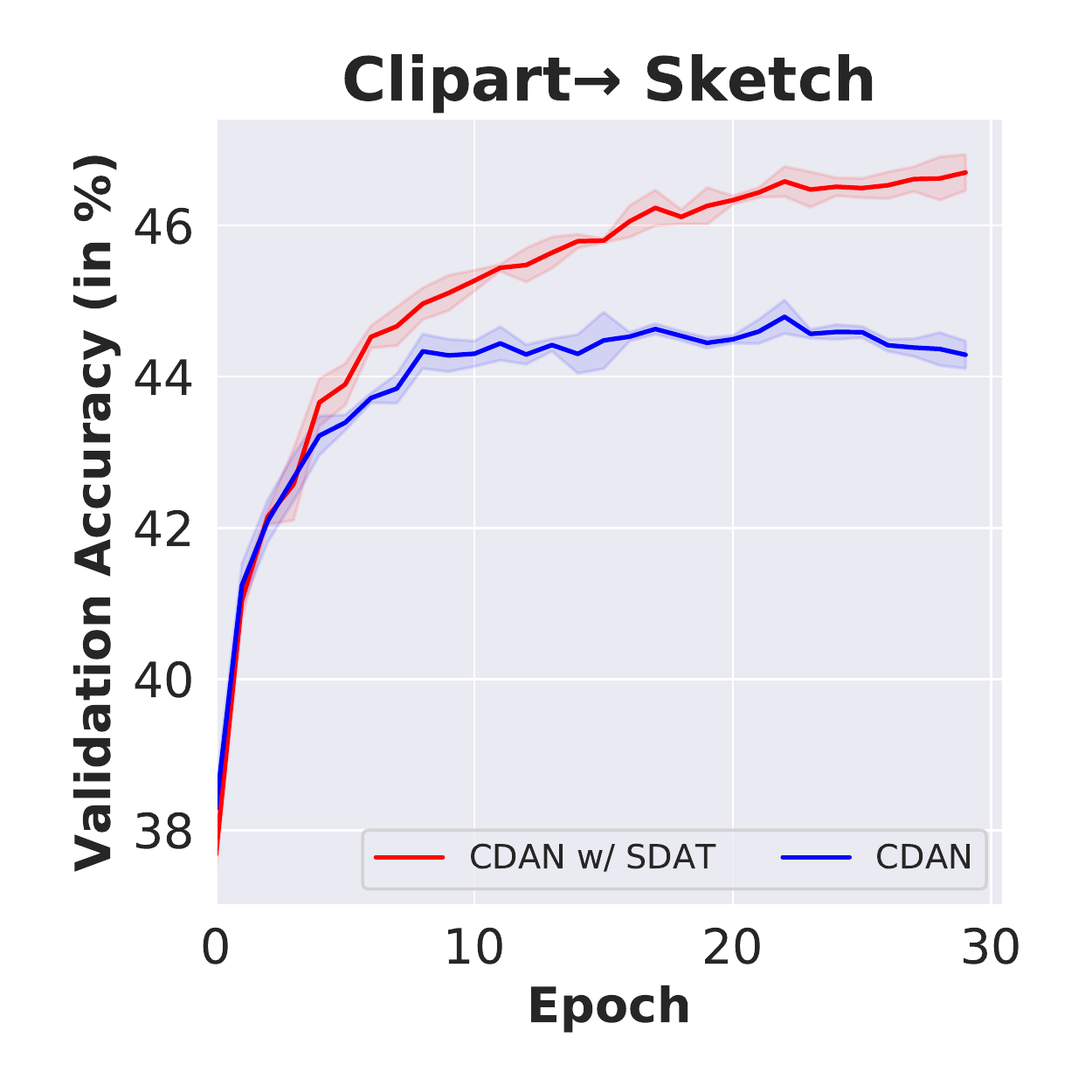}
  \label{fig:c2s}
\end{subfigure}
\begin{subfigure}[b]{0.3\linewidth}
  \centering
  \includegraphics[width=\textwidth]{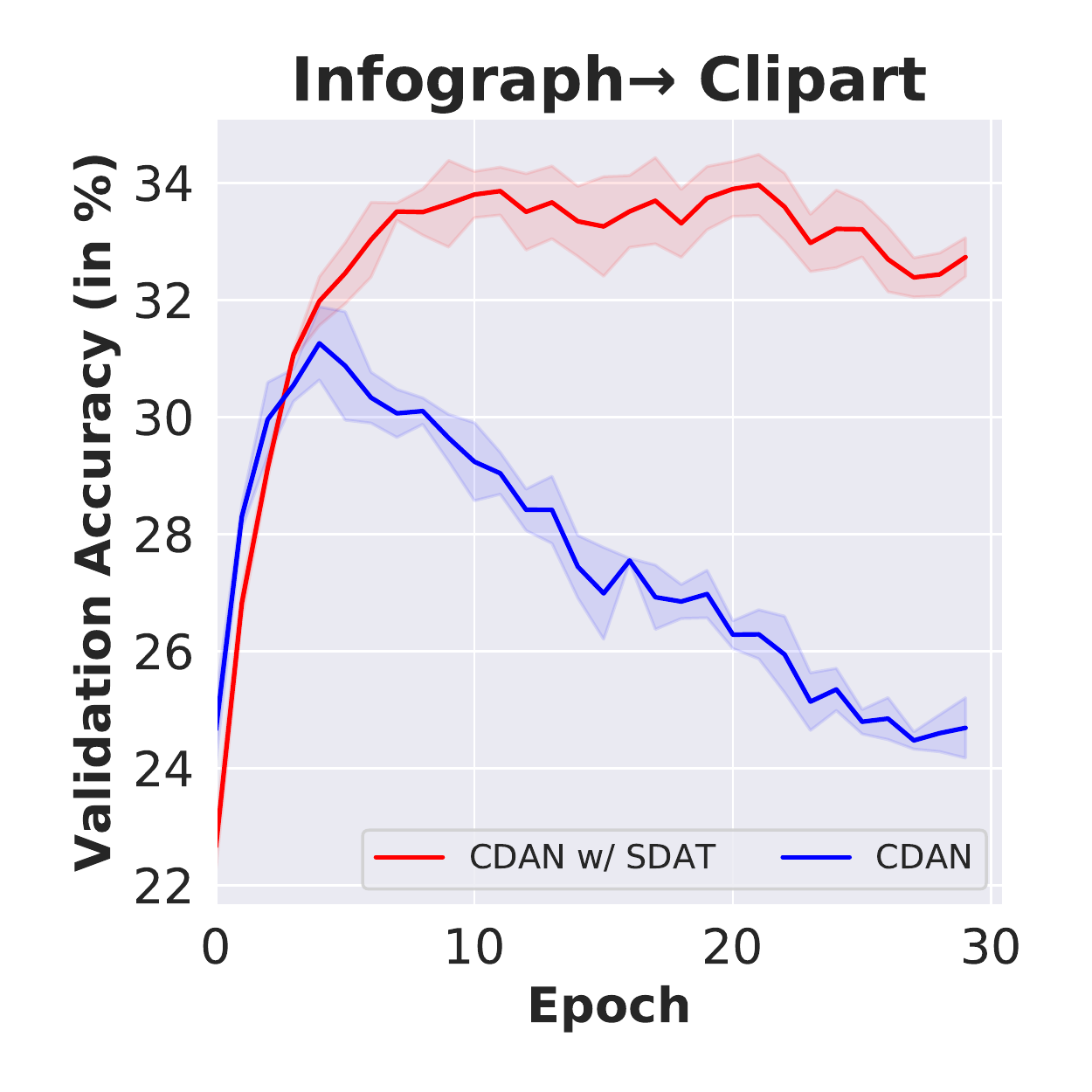}
  \label{fig:i2c}
\end{subfigure}\\
    \begin{subfigure}[b]{0.3\linewidth}
  \centering
  \includegraphics[width=\textwidth]{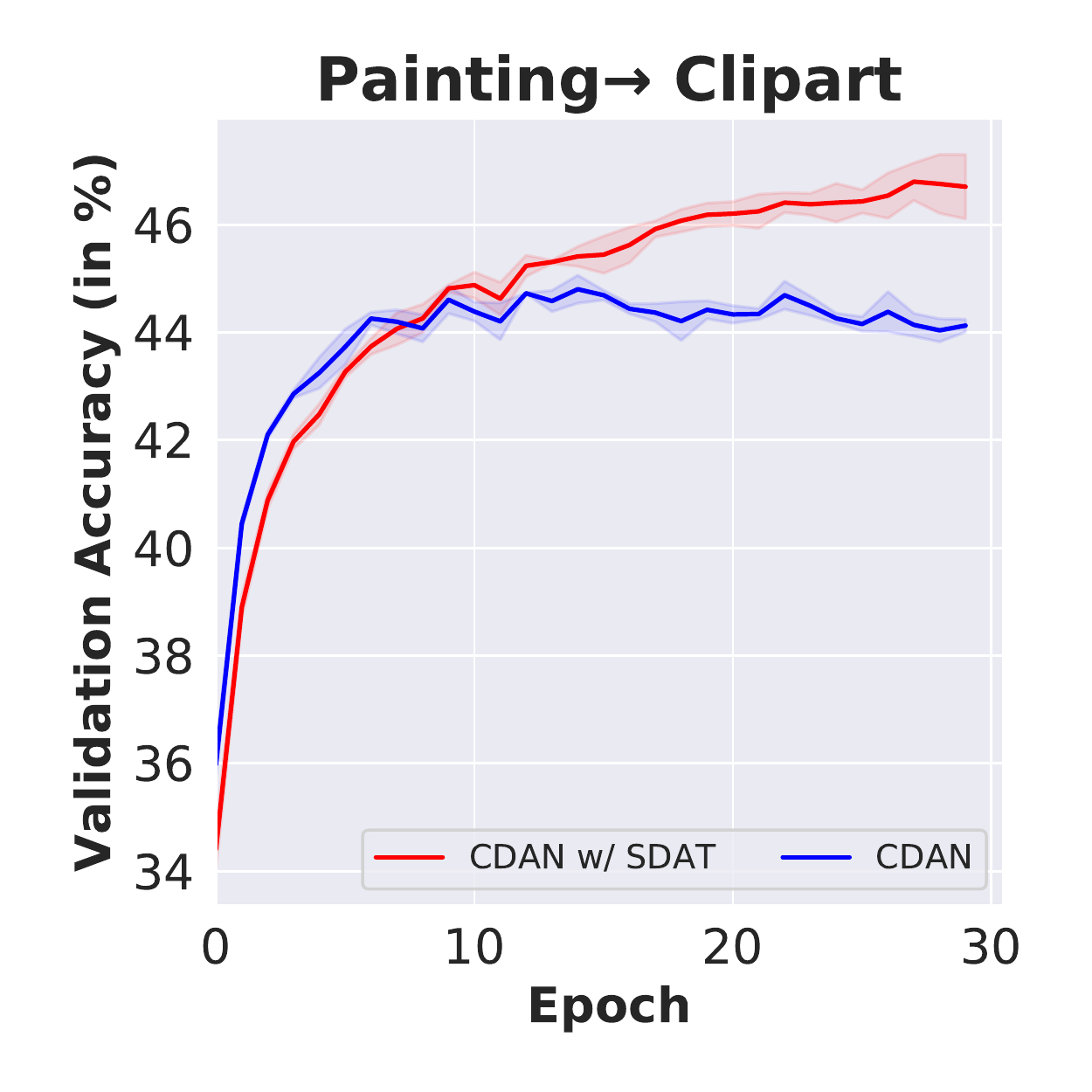}
  \label{fig:p2c}
\end{subfigure}
\begin{subfigure}[b]{0.3\linewidth}
  \centering
  \includegraphics[width=\textwidth]{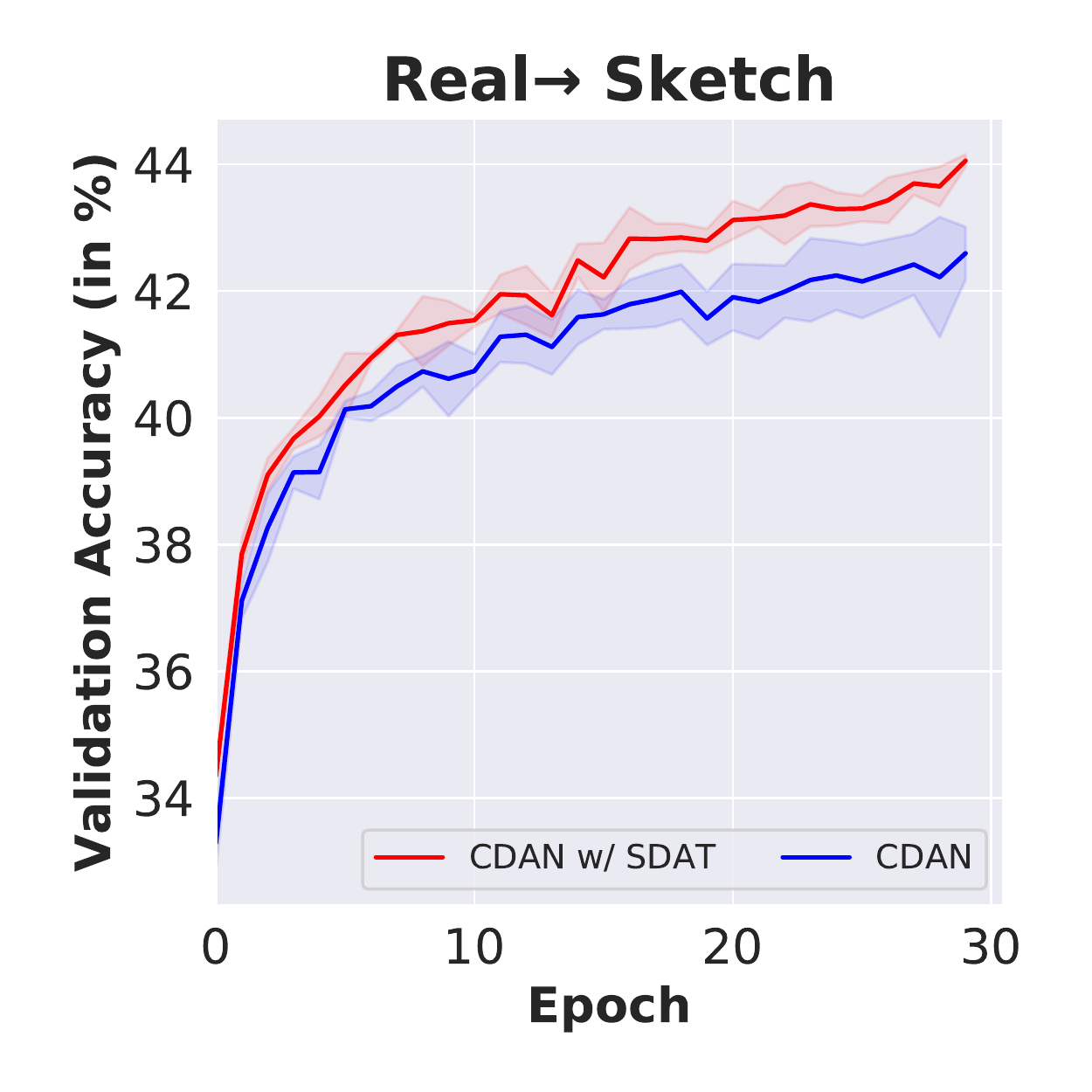}
  \label{fig:r2s}
\end{subfigure}
\begin{subfigure}[b]{0.3\linewidth}
  \centering
  \includegraphics[width=\textwidth]{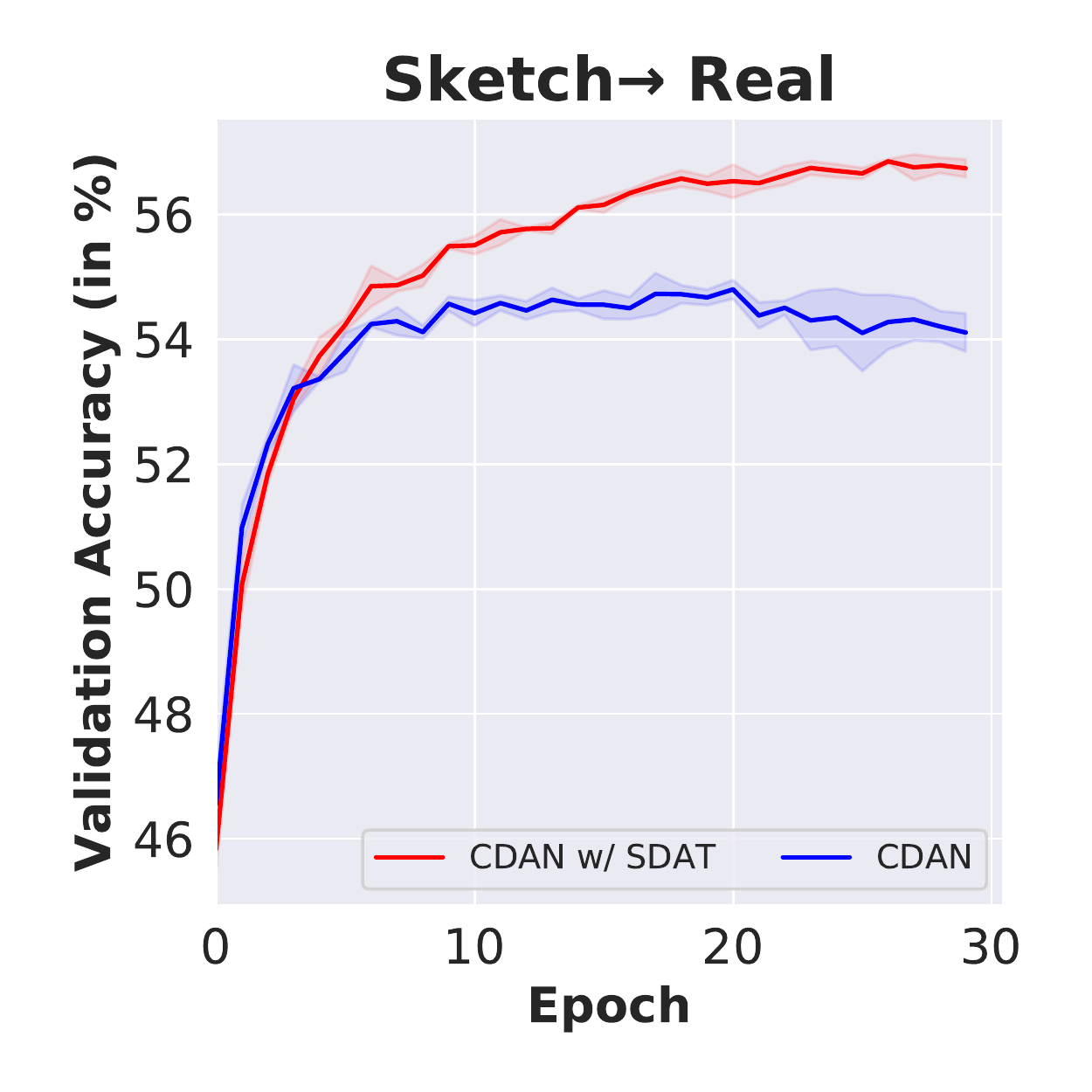}
  \label{fig:s2r}
\end{subfigure}
\caption{{Validation Accuracy across epochs on different splits of DomainNet. We run on three different random seeds and plot the error bar indicating standard deviation across runs. CDAN w/ SDAT consistently outperforms CDAN across different splits of DomainNet.}}
\label{fig:plotss}

\end{figure*}

\begin{table*}
    \centering
    \caption{ {DomainNet experiments over 3 different seeds (with ResNet backbone). We report the mean, standard deviation,  reported increase and average increase in the accuracy (in \%).}}
    \vskip 0.15in
     \begin{adjustbox}{max width=\columnwidth}
    \begin{tabular}{l|cc|cc}
    \hline
    {Split} &  CDAN & CDAN w/ SDAT & Reported Increase (Table \ref{table:domainnet}) & Average Increase\\
    \hline \hline
    \textbf{clp$\veryshortarrow$pnt} & 38.9 $\pm$  0.1 & 41.5 $\pm$ 0.3 & +2.6 & +2.6 \\
    \textbf{skt$\veryshortarrow$rel} & 55.1 $\pm$  0.2 & 57.1 $\pm$  0.1 & +2.2 & +2.0 \\
    \textbf{pnt$\veryshortarrow$clp } & 44.5 $\pm$  0.3 & 47.1 $\pm$  0.3 & +3.4 & +2.6\\
    \textbf{rel$\veryshortarrow$skt } & 42.4 $\pm$  0.4 & 43.9 $\pm$  0.1& +1.6 & +1.5\\
    \textbf{clp$\veryshortarrow$skt } & 44.9 $\pm$  0.2 & 47.3 $\pm$  0.1 & +2.3 & +2.4\\
    \textbf{inf$\veryshortarrow$clp} & 31.4 $\pm$  0.5 & 34.2 $\pm$  0.3 & +2.3 & +2.7\\
    \end{tabular}
    
    \label{tab:seed_exp}
    \end{adjustbox}
\end{table*}
\begin{table}[!t]
\centering
\captionsetup{width=\linewidth}
 \caption{{Median accuracy of last 5 epochs on DomainNet dataset with CDAN w/ SDAT. The number in the parenthesis indicates the increase in accuracy with respect to CDAN.}}
 \vskip 0.15in
\begin{adjustbox}{max width=\linewidth}
\begin{tabular}{c | c  c  c  c  c | c } 
 \hline
  \textbf{Target \textbf{($\rightarrow$)}} & \multirow{2}{*}{\textbf{clp}} & \multirow{2}{*}{\textbf{inf}} & \multirow{2}{*}{\textbf{pnt}} & \multirow{2}{*}{\textbf{real}} & \multirow{2}{*}{\textbf{skt}} & \multirow{2}{*}{\textbf{Avg}} \Tstrut\\  \textbf{Source ($\downarrow$)} &&&&&& \Bstrut\\
 \hline\hline
\multirow{2}{*}{\textbf{clp}} & - & 21.9  & 41.6 & 56.5 &  46.4 & 41.6 \Tstrut\\&& \textcolor{ForestGreen}{(+1.7)}& \textcolor{ForestGreen}{(+3.0)}&
 \textcolor{ForestGreen}{(+1.3)} & \textcolor{ForestGreen}{(+2.0)} & \textcolor{ForestGreen}{(+2.0)} \\ 

\multirow{2}{*}{\textbf{inf}} & 32.4 & - & 29.8 & 46.7 & 25.6 & 33.6 \Tstrut\\& \textcolor{ForestGreen}{(+7.9)}&&
  \textcolor{ForestGreen}{(+7.0)} & \textcolor{ForestGreen}{(+12.7)} & \textcolor{ForestGreen}{(+5.4)} & \textcolor{ForestGreen}{(+8.2)} \\

\multirow{2}{*}{\textbf{pnt}} &  47.2 & 21.0 & - & 57.6 & 41.5 & 41.8 \Tstrut\\ & \textcolor{ForestGreen}{(+2.9)}& 
\textcolor{ForestGreen}{(+1.1)} 
 &&
 \textcolor{ForestGreen}{(+1.0)} &
\textcolor{ForestGreen}{(+2.4)} &
 \textcolor{ForestGreen}{(+1.8)} \\

\multirow{2}{*}{\textbf{real}} &  56.5 & 25.5 & 53.9 & - & 43.5 & 44.8 \Tstrut \\ & \textcolor{ForestGreen}{(+0.7)} &
 \textcolor{ForestGreen}{(+0.9)} &
 \textcolor{ForestGreen}{(+0.5)} &
 &
 \textcolor{ForestGreen}{(+1.3)} &
 \textcolor{ForestGreen}{(+0.8)} \\

\multirow{2}{*}{\textbf{skt}} &  59.1 & 22.1 & 48.2 & 56.6 & - & 46.5 \Tstrut \\ &  \textcolor{ForestGreen}{(+3.0)} & 
 \textcolor{ForestGreen}{(+1.7)} &
 \textcolor{ForestGreen}{(+3.1)} &
 \textcolor{ForestGreen}{(+2.9)} &
&
 \textcolor{ForestGreen}{(+2.7)} 
\\\hline

\multirow{2}{*}{\textbf{Avg}} &  48.8 & 22.6 & 43.4 & 54.3 & 39.2 & 41.7 \Tstrut \\ & \textcolor{ForestGreen}{(+3.6)} &
 \textcolor{ForestGreen}{(+1.3)} &
 \textcolor{ForestGreen}{(+3.4)} &  \textcolor{ForestGreen}{(+4.5)} &
 \textcolor{ForestGreen}{(+2.8)} &
 \textcolor{ForestGreen}{(+3.1)}

\end{tabular}
\label{table:domainnet_median}
 \end{adjustbox}
 \end{table}

As the Office-Home dataset is smaller (i.e., 44 images per class) in comparison to DomainNet we find that there exists some variance in baseline CDAN results (This is also reported in the well-known benchmark for DA \citep{dalib}). For establishing the empirical soundness, we report results of 4 different dataset splits on 3 seeds. It can be seen in Table \ref{tab:seed_exp_office} that even though there is variance in baseline results, our combination of CDAN w/ SDAT can produce consistent improvement across different random seeds. This further establishes the empirical soundness of our procedure.

\begin{table*}[hbt!]
    \centering
    \caption{ {Office-Home experiments over 3 different seeds (with ResNet-50 backbone). We report the mean, standard deviation,  reported increase and average increase in the accuracy (in \%).}}
    \vskip 0.15in
     \begin{adjustbox}{max width=\columnwidth}
    \begin{tabular}{l|cc|cc}
    \hline
    {Split} &  CDAN & CDAN w/ SDAT & Reported Increase (Table \ref{tab:officehome}) & Average Increase\\
    \hline \hline
    \textbf{Ar$\veryshortarrow$Cl} & 53.9 $\pm$  0.2 & 55.5 $\pm$ 0.2 & +1.7 & +1.6 \\
    \textbf{Ar$\veryshortarrow$Pr} & 70.6 $\pm$  0.4 & 72.1 $\pm$  0.4 & +1.6 & +1.5 \\
    \textbf{Rw$\veryshortarrow$Cl} & 60.7 $\pm$  0.5 & 61.8 $\pm$  0.4 & +1.3 & +1.1\\
    \textbf{Pr$\veryshortarrow$Cl} & 54.7 $\pm$  0.4 & 55.5 $\pm$  0.4& +0.6 & +0.8\\
    \end{tabular}
    
    \label{tab:seed_exp_office}
    \end{adjustbox}
\end{table*}

\section{PyTorch Pseudocode for SDAT}
In the code snippet below, we show that with a few changes in the code, SDAT can be easily integrated with any DAT algorithm. SDAT requires an additional forward pass and gradient computation, as shown below.
\label{app:pytorchcode}
\begin{lstlisting}[language=Python,basicstyle=\ttfamily\scriptsize,escapeinside={(*}{*)}]
# task_loss_fn refers to the function to calculate task loss. 
# (For classification settings, this can be Cross Entropy Loss).

# optimizer refers to the smooth optimizer which contains parameters of the feature extractor and classifier.
optimizer.zero_grad()
# ad_optimizer refers to standard SGD optimizer which contains parameters of domain classifier.
ad_optimizer.zero_grad()

# Calculate task loss
class_prediction, feature = model(x)
task_loss = task_loss_fn(class_prediction, label)
task_loss.backward()

# Calculate (* $\hat{\epsilon}$ (w) *) and add it to the weights
optimizer.first_step()

# Calculate task loss and domain loss
class_prediction, feature = model(x)
task_loss = task_loss_fn(class_prediction, label)
domain_loss = domain_classifier(feature)
loss = task_loss + domain_loss
loss.backward()

# Update parameters (Sharpness-Aware update)
optimizer.second_step()
# Update parameters of domain classifier
ad_optimizer.step()
\end{lstlisting}

\end{document}